\def\ba{\begin{array}}
\def\ea{\end{array}}
\def\be{\begin{enumerate}}
\def\ee{\end{enumerate}}
\def\bi{\begin{itemize}}
\def\ei{\end{itemize}}
\def\beq{\begin{equation}}
\def\eeq#1{\label{#1}\end{equation}}
\def\beeq{\begin{equation*}}
\def\eeeq{\end{equation*}}
\def\nmodels{\,{\nvDash}\,}
\def\mi#1{\mathit{#1\/}}
\def\AS{\mathit{AS}}
\def\HT{\mathit{SE}}
\def\SE{\mathit{SE}}
\def\SP{\textup{\textbf{(SP)}}}
\def\rSP{\textup{\textbf{(rSP)}}}
\def\wC{\textup{\textbf{(wC)}}}
\def\CP{\textup{\textbf{(CP)}}}
\newenvironment{myi}
{\begin{list}{}{%
\setlength{\topsep}{4pt} 
\setlength{\leftmargin}{5pt} 
\setlength{\itemindent}{\labelwidth}}}
{\end{list}}
\newcommand{\la}{\leftarrow}
\newcommand{\htm}[1]{\langle #1 \rangle}
\newcommand{\Lits}{\mathcal{U}}
\newcommand{\nop}[1]{}
\newcommand{\mycomment}[1]{{\textbf{\color{blue}{*** #1 ***}}}}
\newtheorem{thm}{Theorem}
\newtheorem{lemma}[thm]{Lemma}
\newtheorem{prop}[thm]{Proposition}
\newtheorem{cor}[thm]{Corollary}
\newtheorem{defn}{Definition}
\newtheorem{exmp}{Example}
\title{A Unified View on Forgetting and Strong Equivalence Notions\\ in Answer Set Programming}
\author{
    Zeynep G. Saribatur, Stefan Woltran
}
\begin{document}

\maketitle

\begin{abstract}
  Answer Set Programming (ASP) is a prominent rule-based language for knowledge representation and reasoning with roots in logic programming and non-monotonic reasoning. The aim to capture the essence of removing (ir)relevant details in ASP programs led to the investigation of different notions, from strong persistence \SP\ forgetting, to faithful abstractions, and, recently, strong simplifications, where the latter two can be seen as relaxed and strengthened notions of forgetting, respectively. Although it was observed that these notions are related, especially given that they have characterizations through the semantics for strong equivalence, it remained unclear whether they can be brought together. In this work, we bridge this gap by introducing a novel relativized equivalence notion, which is a relaxation of the recent simplification notion, that is able to capture all related notions from the literature. We provide necessary and sufficient conditions for relativized simplifiability, which shows that the challenging part is for when the context programs do not contain all the atoms to remove. We then introduce an operator that combines projection and a relaxation of \SP-forgetting to obtain the relativized simplifications. We furthermore present complexity results that complete the overall picture.

\end{abstract}

\section{Introduction}
\nop{
\todo[inline]{
-forgetting/removal is an interesting problem

-nonmonotonic nature of ASP makes things more interesting

-many works are conducted for forgetting in ASP

-SP forgetting for ASP is important, which is inspired from SE

-relaxations of forgetting has also been investigated through the view of abstraction (related to (wC))

-faithful abstractions

-recently a stronger notion of abstraction has been considered, strong simplifications, again inspired from SE

-it is known that strong simplifications imply SP forgetting, and some work has been conducted to relate forgetting with omission abstraction.

-however there is the lack of a general notion that is able to capture all of the above mentioned concepts

}
}


Forgetting or discarding information that are not deemed necessary is crucial in human reasoning, as it allows to focus on the important details and to abstract over the rest. Such active or \emph{intentional} forgetting is argued to enhance decision-making through flexibility under changing conditions and the ability to generalize \cite{richards2017persistence}. 
Over the years, the desire to abstract over details led to different theories (e.g., \cite{giunchiglia1992theory}) and applications of abstraction in various areas of AI, among many are planning \cite{knoblock1994automatically}, constraint satisfaction \cite{bistarelli2002abstracting}, and model checking \cite{clarkeabstraction94}.
Getting rid of (ir)relevant details through forgetting continues to motivate works in different subfields of AI \cite{beierle2019intentional}, such as knowledge representation and reasoning (KR) \cite{eite-kern-forget-ki-18} and symbolic machine learning \cite{siebers2019please}. Recent examples of forgetting within KR appear in action theories \cite{DBLP:conf/ecai/Luo0LL20}, explanations for planning \cite{vasileiou2022generating} and argumentation \cite{DBLP:conf/kr/BertholdRU23,DBLP:conf/ijcai/BaumannB22}.

The theoretical underpinnings of forgetting has been investigated for classical logic and logic programming for over decades. Answer Set Programming (ASP), is a well established logic programming language, characterized by non-monotonic declarative semantics. Its non-monotonic nature resulted in various forgetting operators satisfying different desirable properties (see recent survey \cite{goncalves_knorr_leite_2023}). 
The property \emph{strong persistence} \SP\ \cite{knorr2014preserving} is considered to best capture the essence of forgetting in the context of ASP. The aim is to preserve all existing relations between the remaining atoms, by requiring that there be a correspondence between the answer sets of a program before and after
forgetting a set of atoms, which is preserved in the presence of additional rules. This correspondence is formally defined as
\begin{equation}\label{eq:a2}
\AS(P \cup R)_{|\overline{A}} = \AS(f(P,A) \cup R)
\end{equation}
for all programs $R$ over the universe $\Lits$ without containing atoms from $A$, where $f(P,A)$ is the resulting program of applying an operator $f$ on $P$ to forget about the set $A$ of atoms, $\AS(\cdot)$ denotes the collection of answer sets of a program, and $\AS(\cdot)_{|\overline{A}} $ is their projection onto the remaining atoms. 

When nothing is forgotten, \SP\ matches the notion of \emph{strong equivalence (SE)} \cite{Lifschitz:2001:SEL:383779.383783}  among programs, denoted as
$\AS(P \cup R)=\AS(Q \cup R)$ for all programs $R$. 
%
\citeauthor{DBLP:journals/ai/GoncalvesKLW20} \shortcite{DBLP:journals/ai/GoncalvesKLW20} showed that \SP-forgetting can only be done when the SE-models of the program adheres to certain conditions, which is motivated by \emph{relativized} strong equivalence \cite{DBLP:conf/jelia/Woltran04,Eiter05}, a relaxation of strong equivalence where the context programs can exclude some atoms. 

The motivation to obtain ASP programs with a reduced signature also led to notion of abstraction by omission \cite{zgskr18} 
by means of \emph{over-approximation}, i.e., any answer set in program $P$ can be mapped to some answer set in the abstracted program $Q$, which 
is denoted by $\AS(P)_{|\overline{A}} \subseteq \AS(Q)$, and also has been referred as \emph{weakened Consequence} \wC\ within forgetting \cite{gonccalves2016ultimate}. 
%
\citeauthor{zgskr18} 
\shortcite{zgskr18} 
introduce a syntactic operator that obtains abstracted programs, and an automated abstraction and refinement methodology, that starts with a coarse abstraction and refines it upon encountering \emph{spurious} answer sets (which do not have correspondence in $P$) until a fine-grained abstraction is achieved.

A desired abstraction property was considered to be \emph{faithfulness} where $Q$ does not contain a spurious answer set, i.e., 
\begin{equation}\label{eq:a3}
\AS(P)_{|\overline{A}} = \AS(Q),
\end{equation}
matching an instance of \emph{Consequence Persistence} \CP-forgetting \cite{wang2013forgetting}. The notion however does not truly preserve the semantics w.r.t. projection. 
The recent equivalence notion, called \emph{strong simplification} 
\cite{zgsswkr23}, 
defined as\footnote{$R_{|\overline{A}}$ projects the positive body of the rules in $R$ onto $\overline{A}$ and removes the rules with a negative body or head containing an atom from $A$.}
\begin{equation}\label{eq:a4}
	\AS(P \cup R)|_{\overline{A}}=\AS(Q \cup R_{|\overline{A}})
\end{equation}
for all programs $R$, 
allows to capture the atoms that can be disregarded from
the original program and also the context program, so that
the simplified program can reason over the reduced vocabulary while ensuring that the semantics of the original program is preserved w.r.t. projection. 

It is known that strong simplifications imply \SP-forgetting 
\cite{zgsswkr23}
and the relation between omission abstraction and forgetting has also been studied \cite{zgskr20}. The characterizations for all of the mentioned notions have been established through the SE-models of programs, which characterizes strong equivalence. However until now it remained unclear how these notions come together.


\begin{table}
{\small
\begin{tabular}{|c|c|c|}
\hline
$A$& $B$& Strong $A$-simplification relative to $B$\\
\hline\hline
$\emptyset$ & $\emptyset$ & equivalence\\
\hline
$\emptyset$ & $\Lits$ &Strong Equivalence \cite{turner2001strong}\\
\hline
$\emptyset$ & $B$ &relativized Strong Equivalence \cite{DBLP:conf/jelia/Woltran04}\\
\hline
$A$ & $\Lits$ &Strong Simplification \cite{zgsswkr23}\\
\hline
$A$ & $\overline{A}$&Strong Persistence \cite{knorr2014preserving}\\
\hline
$A$ & $C$&  $C\subseteq \overline{A}$, relativized Strong Persistence (Section~\ref{sec:relsp})\\
\hline
$A$ & $\emptyset$&Faithful Abstraction \cite{zgskr18}\\
\hline
\end{tabular}
}
\caption{Overview of the full spectrum of the relativized strong simplification notion introduced in this paper.}
\label{fig:list}
\end{table}

In this paper we bridge this gap through a relaxation of the recent simplification notion, where on the context programs we allow for excluding some dedicated atoms: for sets $A,B$ of atoms, we define the notion of \emph{strong $A$-simplification relative to $B$} where \eqref{eq:a4} holds for all programs $R$ over $B$. All of the above mentioned notions such as (relativized) strong equivalence, strong persistence, faithful abstractions and strong simplifications, then become special cases of this novel relativized equivalence notion, of which a summary can be seen in Table~\ref{fig:list}.
Furthermore we show the conditions for relativized simplifiability and observe that the challenging part is for when the context programs do not contain all the atoms to remove/forget. We then show how the desired simplifications can be obtained by an operator that combines projection and a relaxation of \SP-forgetting. 

Our main contributions are thus as follows (i) We propose the novel concept of relativized strong simplification between programs,  provide the necessary and sufficient conditions for testing relativized strong simplifiability, give semantical characterizations of relativized strong simplifications and discuss the full spectrum of this notion; (ii) we introduce a novel forgetting operator which is a combination of projection and a relaxation of SP-forgetting, which we introduce as relativized SP-forgetting; (iii) we conclude with complexity results.


\section{Background}

\subsubsection{Answer Set Programming} An \emph{(extended) logic program (ELP)} is a finite set of \emph{(extended) rules} of form
\begin{align*}
A_1 \vee \dots \vee A_l \leftarrow &A_{l+1},\dots,A_m,\mi{not}\
A_{m+1},\dots,\mi{not}\ A_n,\\
& \mi{not}\ \mi{not}\ A_{n+1},\dots, \mi{not}\ \mi{not} A_k
\end{align*}
\noindent where 
$A_i$  ($1\leq i \leq k$, 
$0\leq l \leq m \leq n \leq k$)
are atoms from a first-order language, and 
$\mi{not}$
is default negation. We also write a rule $r$ as $H(r) \leftarrow B(r)$ or $H(r) \leftarrow B^+(r),$ $\mi{not}\ B^-(r)$, $\mi{not}\ \mi{not}\ B^{--}(r)$. We call $H(r)=\{A_1,\dots,A_l\}$ the \emph{head} of $r$, $B^+(r) = \{A_{l+1}, \dots, A_m\}$ the \emph{positive body}, $B^-(r)=\{A_{m+1},\dots,A_n\}$ the \emph{negative body} and $B^{--}(r)=\{A_{n+1},\dots,A_k\}$ the \emph{double-negated body} of $r$.
If $H(r)\,{=}\,\emptyset$, then $r$ is a \emph{constraint}. 
A rule $r$ is \emph{disjunctive} if $k=n$; if, in addition, $l\leq 1$ then $r$ is \emph{normal}; $r$ is \emph{positive} if $k=m$ and it is a (non-disjunctive) \emph{fact}  if
$B(r)=\emptyset$ and $l \leq 1$; for $H(r)=\emptyset$, we occasionally write $\bot$.
%

%
%
In the 
what follows, we focus on propositional programs over a set of atoms from 
universe $\Lits$. Programs with variables reduce to their ground versions as usual. Unless stated otherwise the term \emph{program} refers to a 
(propositional) ELP.

Let $I\subseteq \Lits$ be an interpretation.
The \emph{GL-reduct} of a program $P$ w.r.t.\ $I$
is given by $P^I=\{H(r) \leftarrow B^+(r) \mid  r \in P, B^-(r)\cap I=\emptyset, B^{--}(r)\subseteq I\}$. 
An interpretation $I$ 
is a \emph{model} of a program $P$ (in symbols $I\models P$) if, for each $r\in P$, $(H(r)\cup B^-(r) )\cap I \neq \emptyset$ or 
$(B^+(r)\cup B^{--}(r))\not\subseteq I$; $I$
is an \emph{answer set}, if it is a minimal model of $P^I$. We denote the set of all answer sets by $\AS(P)$. 
Two programs $P_1,P_2$ are \emph{equivalent}
if $\AS(P_1)=\AS(P_2)$ and \emph{strongly equivalent (SE)}, denoted by
$P_1\equiv P_2$, if $\AS(P_1\cup R)=\AS(P_2\cup R)$ for every $R$ over
$\Lits$.

An \emph{SE-interpretation} is a pair
$\htm{X,Y}$ such that $X\subseteq Y\subseteq \Lits$; it is
\emph{total}\/ if $X=Y$ and \emph{non-total}\/ otherwise. An
SE-interpretation $\htm{X,Y}$ is an \emph{SE-model} of a program $P$
if $Y\models P$ and $X\models P^Y$. The set of all SE-models of $P$ is denoted by $\HT(P)$. 
Note that a set $Y$ of atoms is an answer set of $P$ if
$\htm{Y,Y}\in\HT(P)$ and 
no non-total $\htm{X,Y} \in \HT(P)$ exists. 
Two programs $P_1$ and $P_2$ are strongly equivalent iff $\SE(P_1)=\SE(P_2)$ \cite{turner2001strong}. 

Lastly, for a set $S\subseteq \Lits$ of atoms, $S_{|A}$ denotes the projection to the atoms in $A$ and $\overline{S}$ is a shorthand for $\Lits \setminus S$. We also use the notion on pairs, i.e.\ $\htm{X,Y}_{|A}=\htm{X_{|A},Y_{|A}}$ and on sets of objects, i.e. ${\cal S}_{|A}= \{ S_{|A} \mid S\in {\cal S}\}$. 

We next summarize the notions needed for our purposes.

\subsubsection{Relativized Equivalence} \citeauthor{DBLP:conf/jelia/Woltran04} \shortcite{DBLP:conf/jelia/Woltran04} relaxed the notion of strong equivalence to have the added programs, $R$, in a specific language $B \subseteq \Lits$. Its semantical characterization requires a generalization of SE-models as follows.

\begin{defn}
\label{defn:rel-se}
A pair of interpretations $\htm{X , Y}$ is a
(relativized) $B$-SE-interpretation iff either $X = Y$ or $X \subset (Y\cap B)$.
The former
are called total and the latter non-total $B$-SE-interpretations.
Moreover, a $B$-SE-interpretation $\htm{X , Y}$ is a (relativized) $B$-SE-model of a program P iff:
\be[(i)]
\item $Y \models P$;
\item for all $Y' \,{\subset}\, Y$ with $(Y' \,{\cap}\, B) \,{=}\, (Y \,{\cap}\, B)$, $Y'\,{\nmodels}\, P^Y$; and
\item $X\subset Y$ implies existence of a $X'\subseteq Y$ with $X' \cap B=X$, such that $X' \models P^Y$ holds.
\ee
The set of $B$-SE-models of $P$ is given by $\SE^B(P )$.

\end{defn}

Two programs $P_1$ and $P_2$ are strongly equivalent relative to $B$ iff $\SE^B(P_1)=\SE^B(P_2)$.

\subsubsection{Forgetting}

 We refer to
\cite{eite-kern-forget-ki-18,goncalves_knorr_leite_2023}
for recent
surveys on forgetting, and briefly define \SP-forgetting. 
For a class $F$ of forgetting operators and a class ${\cal C}$ of programs
\begin{myi}
\itemsep=0pt
\item [\SP] 
$F$ satisfies \emph{Strong Persistence} if, for each $f\in F$, $P \in {\cal C}$ and $A\subseteq \Lits$, we have $AS(f(P,A)\cup R) = AS(P \cup R)_{|\overline{A}}$ for all programs  $R \in {\cal C}$ over $\overline{A}$.
\end{myi}

Here $f(P,A)$ denotes the result of forgetting
about $A$ from $P$. Strong
persistence is 
also considered for a particular forgetting instance $\langle P,A\rangle$,  for $P \in {\cal C}$ and $A\subseteq \Lits$, denoted by
\SP$_{\langle P,A\rangle}$. 
\citeauthor{gonccalves2016you} \shortcite{gonccalves2016you} introduce a criterion $\Omega$ to characterize the instances for which an operator achieving $\SP_{\htm{P,A}}$ is impossible, which has relations with $\overline{A}$-SE-models as shown below.

\begin{defn}
Let $P$ be a program over $\Lits$ and $A\subseteq \Lits$. An instance $\htm{P,A}$ satisfies criterion $\Omega$ if there exists $Y\subseteq \Lits\setminus A$ such that the set of sets
\begin{align*}
{\cal R}^{Y}_{\htm{P,A}} \,{=}\, \{\{X \,{\setminus}\, A &\mid \htm{X,Y\cup A'}\in \HT^{\overline{A}}(P)\} \\
&\mid A' \subseteq A, \htm{Y\cup A',Y\cup A'}\,{\in}\, \HT^{\overline{A}}(P) \}
\end{align*}
is non-empty and has no least element. 
\end{defn}

It is not possible to forget about $A$ from $P$
while satisfying strong persistence exactly when $\htm{P,A}$ satisfies
criterion $\Omega$. 

\citeauthor{gonccalves2016you} \shortcite{gonccalves2016you} also show that the resulting program obtained from forgetting $A$ from program $P$ by applying an operator $f$ from the class $F_{\textup{SP}}$ of \SP-forgetting operators has the SE-models over $\overline{A}$ as $\SE(f(P,A))=\{\htm{X,Y} \mid Y\subseteq \Lits\setminus A \wedge X \in \bigcap \mathcal{R}^Y_{\htm{P,A}}\}$. 

\subsubsection{Abstraction and Simplification} The general notion of abstraction as an over-approximation is defined as follows.

\begin{defn}[\cite{zgskr18}]
\label{def:abs}
For programs $P$ (over $\Lits$) and $Q$ (over $\Lits'$) with $|\Lits|\,{\geq}\,|\Lits'|$,
and a
mapping $m : \Lits \rightarrow \Lits' \cup \{\top\}$, $Q$ is an \emph{abstraction} of $P$ w.r.t.
$m$, if $m(\AS(P))\subseteq \AS(Q)$.
\end{defn}

For an \emph{omission abstraction mapping} that omits a set $A$ of atoms from $\Lits$, it becomes $AS(P)_{|\overline{A}}\subseteq AS(Q)$. An abstraction $Q$ is called \emph{faithful} if $AS(P)_{|\overline{A}}=AS(Q)$.

Saribatur and Woltran \shortcite{zgsswkr23} generalized this notion for disjunctive logic programs (DLP) to consider newly added rules or facts that also get abstracted. For that they consider context programs $R$ over $\Lits$ to be \emph{$A$-separated}, which means they are of form $R = R_1 \cup R_2$ for programs $R_1$ and $R_2$ that are defined over $\Lits \setminus A$ and $A$, respectively.

\begin{defn}[\cite{zgsswkr23}]\label{defn:strongsimplification-omit}
Given $A\subseteq \Lits$ and a program  $P$ (over $\Lits$), a program $Q$ (over $\Lits\setminus A$) is \emph{a strong $A$-simplification of $P$} if for any program $R$ over $\Lits$ that is $A$-separated, we have 
\beq
AS(P \cup R)_{|\overline{A}} = AS(Q \cup R_{|\overline{A}})
\eeq{eq:s}
	We say that $P$ is strong $A$-simplifiable if there is a program $Q$ such that (\ref{eq:s}) holds.
\end{defn}

It was shown that the SE-models of $P$ need to satisfy the below conditions, where $A$ is semantically behaving as facts, in order to ensure the existence of such a simplification.

\begin{thm}[\cite{zgsswkr23}]\label{prop:se-deltas}
There exists a strong $A$-simplification of $P$ iff
$P$ satisfies the following
\begin{myi}
\item[$\Delta_{s_1}$:] $\htm{Y,Y} \in \SE(P)$ implies $A\subseteq Y$.
\item[$\Delta_{s_2}$:] 
For any $\htm{X,Y}{\in}\HT(P)$, $X_{|\overline{A}}{=}Y_{|\overline{A}}$ implies $X{=}Y$.
\item[$\Delta_{s_3}$:] 
$\htm{X,Y}{\in} \HT(P)$ implies 
$\htm{X{\cup} (Y {\cap} A),Y} {\in} \HT(P)$.
\end{myi}
\end{thm}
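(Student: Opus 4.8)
The plan is to argue entirely at the level of SE-models, using three standard facts: $\SE(P\cup R)=\SE(P)\cap\SE(R)$; that $Y\in\AS(P\cup R)$ precisely when $\htm{Y,Y}\in\SE(P)\cap\SE(R)$ and no $\htm{X,Y}$ with $X\subsetneq Y$ lies in $\SE(P)\cap\SE(R)$; and that (for the ELPs considered here) a set $\mathcal M$ of SE-interpretations is $\SE(Q)$ for some $Q$ over a given sub-universe iff it consists of pairs over that sub-universe and $\htm{X,Y}\in\mathcal M$ implies $\htm{Y,Y}\in\mathcal M$. A preliminary observation I would establish is that, for an $A$-separated $R=R_1\cup R_2$ with $R_1$ over $\overline A$ and $R_2$ over $A$, the operation $\cdot_{|\overline A}$ gives $R_{|\overline A}=R_1$ unless $R_2$ contains a constraint that $A$ violates, in which case $\bot\in R_{|\overline A}$; since by $\Delta_{s_1}$ every answer set of $P\cup R$ contains $A$ and hence violates such a constraint, both sides of (\ref{eq:s}) are empty in that degenerate case, so one may assume $R_{|\overline A}=R_1$ thereafter.

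\textbf{Sufficiency.} Assuming $\Delta_{s_1}$--$\Delta_{s_3}$, I would take $Q$ over $\overline A$ with $\SE(Q)=\{\htm{X,Y}_{|\overline A}\mid \htm{X,Y}\in\SE(P)\}$; this set is realizable by the fact above, since closure of $\SE(P)$ under passing to total models transfers to it. For a fixed $A$-separated $R=R_1\cup R_2$ (with $R_{|\overline A}=R_1$) I would then prove the two inclusions of (\ref{eq:s}) separately. For $\AS(P\cup R)_{|\overline A}\subseteq\AS(Q\cup R_1)$: given $Y\in\AS(P\cup R)$ and $Z=Y_{|\overline A}$, a would-be blocking non-total $\htm{W,Z}\in\SE(Q)\cap\SE(R_1)$ comes from some $\htm{X',Y'}\in\SE(P)$; $\Delta_{s_1}$ forces $Y'=Z\cup A=Y$, and then $\Delta_{s_3}$ lets me raise $X'$ to $X'\cup A$, producing a non-total SE-model $\htm{X'\cup A,Y}$ of $P\cup R$ -- a contradiction. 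For $\AS(Q\cup R_1)\subseteq\AS(P\cup R)_{|\overline A}$: given $Z\in\AS(Q\cup R_1)$ I would lift $Z$ to $Z\cup A$, where $\Delta_{s_2}$ together with $\Delta_{s_1}$ gives $\htm{Z\cup A,Z\cup A}\in\SE(P)$; one checks $Z\cup A\models R$, and $\Delta_{s_2}$ again rules out non-total SE-models of $P\cup R$ above $Z\cup A$ that would project down to a blocker for $Z$.

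\textbf{Necessity.} I would assume a strong $A$-simplification $Q$ exists and refute each failing condition by probing with well-chosen $A$-separated contexts. For $\Delta_{s_1}$: if $\htm{Y,Y}\in\SE(P)$ but $a\in A\setminus Y$, the context $\{b\la\mid b\in Y\}\cup\{\la a\}$ makes $Y$ an answer set of $P\cup R$, while $\la a$ projects to $\bot$, so $\AS(Q\cup R_{|\overline A})=\emptyset$ -- contradiction. For $\Delta_{s_2}$ (now invoking $\Delta_{s_1}$, so $\htm{Y,Y}\in\SE(P)$ forces $A\subseteq Y$): given a non-total $\htm{X,Y}\in\SE(P)$ with $X_{|\overline A}=Y_{|\overline A}$, I would compare $\{b\la\mid b\in Y\}$ with $\{b\la\mid b\in Y_{|\overline A}\}$, which have the same $\overline A$-projection; $Y$ is an answer set of $P$ together with the first, while $\htm{X,Y}$ blocks $Y$ for $P$ together with the second, and $\Delta_{s_1}$ shows $Y$ is the only candidate answer set projecting to $Y_{|\overline A}$ -- contradiction. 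For $\Delta_{s_3}$ I would similarly compare a context that additionally forces all of $A$ present (via facts over $A$, which vanish under $\cdot_{|\overline A}$) against one that does not, choosing the witnessing non-total SE-model of $P$ with care (e.g.\ $\subseteq$-maximal in a suitable sense) so that the resulting mismatch between the two projected answer-set collections becomes visible.

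\textbf{Where the difficulty lies.} The signature and realizability bookkeeping is routine. The genuinely delicate points are, first, the necessity of $\Delta_{s_3}$: making the comparison of the two contexts actually produce a disagreement requires pinning down the relevant answer sets via $\Delta_{s_1}$ and $\Delta_{s_2}$ and choosing the witness for the failure of $\Delta_{s_3}$ with some care, since a careless choice leaves both projected answer-set collections accidentally equal. Second, on the sufficiency side, the crux is that one must transfer \emph{minimality} of models -- not merely modelhood -- across the projection in both directions; this is precisely the step that simultaneously needs $\Delta_{s_2}$ (a non-total SE-model of $P$ never collapses to a total one upon projection) and $\Delta_{s_3}$ (SE-models of $P$ are closed under raising the $A$-part of the lower component to match the upper one).
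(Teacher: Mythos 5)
Your overall strategy coincides with how this result is actually established: the theorem is imported from \cite{zgsswkr23}, and the present paper only proves its relativized generalization (Proposition~\ref{prop:se-rel-deltas} and Theorem~\ref{thm:se-q-rel} in the appendix), which at $B=\Lits$ is exactly your argument --- sufficiency by realizing a $Q$ with $\SE(Q)=\SE(P)_{|\overline{A}}$ and transferring minimality across the projection via $\Delta_{s_2}$/$\Delta_{s_3}$, necessity by probing with pairs of $A$-separated contexts that agree after projection. Your sufficiency direction is sound (including the correct observation that only a positive constraint over $A$ makes $R_{|\overline{A}}$ inconsistent, which is precisely the case where both sides of \eqref{eq:s} are empty under $\Delta_{s_1}$), and your necessity arguments for $\Delta_{s_1}$ and $\Delta_{s_2}$ are complete: in the unrelativized setting the auxiliary atoms $f,\epsilon$ of the appendix proof are not needed because $\Delta_{s_1}$ already pins down $Y=Y_{|\overline{A}}\cup A$ as the unique candidate answer set projecting to $Y_{|\overline{A}}$.

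The one genuine gap is the necessity of $\Delta_{s_3}$. Comparing a context with facts $X\cup A$ against one with facts $X$ is the right idea, but with facts alone the minimality check for $P\cup(X\cup A)$ ranges over \emph{all} $T$ with $X\cup A\subseteq T\subsetneq Y$, and some intermediate $T$ may well satisfy $P^Y$ even though $X\cup A$ does not; in that case $Y$ fails to be an answer set on both sides and no discrepancy appears. Choosing the violating witness $\htm{X,Y}$ to be $\subseteq$-maximal does not repair this: maximality only tells you that any such intermediate $\htm{T,Y}\in\SE(P)$ satisfies $\htm{T\cup A,Y}\in\SE(P)$, which is consistent and yields no contradiction. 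The missing device is the one used in the appendix proof of $\Delta^r_{s_3}$: add to both contexts the complete set of implications $\{y_1\la y_2\mid y_1,y_2\in Y\setminus(X\cup A)\}$ (all over $\overline{A}$, hence invariant under projection and preserving $A$-separation). These force every reduct-model between the facts and $Y$ to contain either none or all of $Y\setminus(X\cup A)$, collapsing the candidates to exactly $X\cup A$ and $Y$; then $\htm{X\cup A,Y}\notin\SE(P)$ yields $Y\in\AS(P\cup R)$ while $\htm{X,Y}\in\SE(P)$ blocks $Y$ for the other context, and $\Delta_{s_1}$ again rules out any other answer set projecting to $Y_{|\overline{A}}$. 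With that gadget inserted, your proof goes through.
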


\nop{
The authors also consider a projection operator defined as follows.

\begin{defn}[\cite{zgsswkr23}]\label{defn:projectaway}
Given a rule $r : H(r) \leftarrow B(r)$, the projection of $r$ onto $\overline{A}$, denoted by $r_{|\overline{A}}$, gives 
$$
\left\{ 
\ba {ll}
\emptyset & \mbox{if } B^-(r)\cap A \,{\neq}\, \emptyset \mbox{ or } H(r) \cap A \,{\neq}\, \emptyset\\ 
H(r) \leftarrow B(r)\,{\setminus}\, A & \mbox{otherwise}. 
\ea
\right.
$$
The resulting program, denoted by $P_{|\overline{A}}$, is then $\bigcup_{r \in P}r_{|\overline{A}}$.
\end{defn}
}

The simplifications are shown to have SE-models equaling $\SE(P)_{|\overline{A}}$.
For strong $A$-simplifiable programs, projecting away the atoms in $A$ achieves the desired simplification.

\begin{thm}[\cite{zgsswkr23}]\label{thm:strong-project}
Let $P$ be a strong $A$-simplifiable program. Then  $P_{|\overline{A}}$ is a strong $A$-simplification of $P$. 
\end{thm}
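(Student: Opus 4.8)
The plan is to reduce everything to a single semantic identity between SE-model sets, where $P_{|\overline{A}}$ is read as a program over $\Lits\setminus A$:
\[
\SE(P_{|\overline{A}}) \;=\; \SE(P)_{|\overline{A}}.
\]
Granting this, the theorem is almost immediate. By Theorem~\ref{prop:se-deltas}, strong $A$-simplifiability of $P$ yields a strong $A$-simplification $Q$ (over $\Lits\setminus A$), and, as recalled just above, every such $Q$ satisfies $\SE(Q)=\SE(P)_{|\overline{A}}$. Hence $\SE(P_{|\overline{A}})=\SE(Q)$, so by the SE-model characterization of strong equivalence \cite{turner2001strong}, $P_{|\overline{A}}$ and $Q$ are strongly equivalent as programs over $\Lits\setminus A$. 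Every context of the form $R_{|\overline{A}}$, for $R$ over $\Lits$ that is $A$-separated, is itself a program over $\Lits\setminus A$, so $\AS(P_{|\overline{A}}\cup R_{|\overline{A}})=\AS(Q\cup R_{|\overline{A}})$, and the right-hand side equals $\AS(P\cup R)_{|\overline{A}}$ since $Q$ is a strong $A$-simplification; this is exactly~\eqref{eq:s}.

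To prove the identity I would first isolate a model-level auxiliary fact that does not yet need the $\Delta$-conditions: for all $U\subseteq V\subseteq\Lits\setminus A$ we have $V\models P_{|\overline{A}}$ iff $V\cup A\models P$, and $U\models (P_{|\overline{A}})^{V}$ iff $U\cup A\models P^{V\cup A}$. Both are checked rule by rule from the definition of $r_{|\overline{A}}$: a rule of $P$ whose head or negative body meets $A$ is deleted by the projection, and such a rule is, once all of $A$ is present in the interpretation, either vacuously true (head case) or discarded by the GL-reduct (negative-body case); every remaining rule is kept with its positive and double-negated bodies shrunk by $A$, which exactly mirrors deleting $A$ from an interpretation that already contains $A$. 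From this, one inclusion comes for free: given $\htm{U,V}\in\SE(P_{|\overline{A}})$, the auxiliary fact gives $\htm{U\cup A,V\cup A}\in\SE(P)$, whose projection onto $\Lits\setminus A$ is $\htm{U,V}$, so $\SE(P_{|\overline{A}})\subseteq\SE(P)_{|\overline{A}}$.

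The reverse inclusion is where $\Delta_{s_1}$ and $\Delta_{s_3}$ enter. Given $\htm{X,Y}\in\SE(P)$, the total pair $\htm{Y,Y}$ is again in $\SE(P)$, so $\Delta_{s_1}$ forces $A\subseteq Y$; by $\Delta_{s_3}$ we may also assume $A\subseteq X$, which leaves $\htm{X_{|\overline{A}},Y_{|\overline{A}}}$ unchanged. Writing $V=Y_{|\overline{A}}$ and $U=X_{|\overline{A}}$, so that $Y=V\cup A$ and $X=U\cup A$, the auxiliary fact turns $Y\models P$ and $X\models P^{Y}$ into $V\models P_{|\overline{A}}$ and $U\models (P_{|\overline{A}})^{V}$, i.e.\ $\htm{U,V}\in\SE(P_{|\overline{A}})$. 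This yields $\SE(P)_{|\overline{A}}\subseteq\SE(P_{|\overline{A}})$ and completes the identity.

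I expect the only real work to be the bookkeeping in the auxiliary fact, in particular keeping the GL-reduct and the projection operator in sync for the three shapes of a rule relative to $A$ (an atom of $A$ in the head, in the negative body, or only in the positive/double-negated bodies) and carrying the double-negated body correctly through both operations; once this case split is laid out, each subcase is a direct computation.
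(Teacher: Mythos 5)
Your proof is correct and matches the approach behind the paper's treatment: the paper imports this theorem from \cite{zgsswkr23} without reproving it, but its own generalization, Proposition~\ref{prop:b-se-proj} (instantiated with $B=\Lits$), is exactly your key identity $\SE(P_{|\overline{A}})=\SE(P)_{|\overline{A}}$, obtained likewise by a rule-by-rule comparison of $P_{|\overline{A}}$ against $P$ together with $\Delta_{s_1}$ and $\Delta_{s_3}$ to pad $X$ and $Y$ with $A$, after which the conclusion follows from the SE-model characterization of simplifications. One minor streamlining: the existence of a strong $A$-simplification $Q$ is literally the hypothesis of the theorem, so you only need the necessity direction of Theorem~\ref{prop:se-deltas} (to obtain the $\Delta$-conditions for the identity), not its existence claim.
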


Here $P_{|\overline{A}}$ refers to removing the atoms in $A$ from the positive bodies of rules, and omitting the rule all together if an atom from $A$ appears in the negative body or the head. 

We will later show that such a projection can still be partially applicable in the relaxation of the simplification notion, while an additional operator more close to forgetting will be needed as well.

\section{Relaxing Strong Simplifications}\label{sec:main}

A natural relaxation for strong simplification is to allow excluding some atoms from the added programs. Thus we propose the following notion.

\begin{defn}\label{defn:strong-rel-omit}
Given $A, B\subseteq \Lits$ and a program  $P$ (over $\Lits$), a program $Q$ (over $\Lits\setminus A$) is \emph{a (strong) $A$-simplification of $P$ relative to $B$} if for any program $R$ over $B$ that is $A$-separated, we have 
\beq
AS(P \cup R)_{|\overline{A}} = AS(Q \cup R_{|\overline{A}})
\eeq{eq:s-rel}
	We say that $P$ is $B$-relativized (strong) $A$-simplifiable if there is a program $Q$ such that (\ref{eq:s-rel}) holds.
\end{defn}

This relaxed notion of strong simplifiability allows to identify programs which are originally not strong simplifiable, but are relativized strong simplifiable when some atoms are not taken into account in the context programs.
%
%
Below are examples of such programs. 

\begin{exmp}\label{ex1}
    Let program $P_1$ consist of rules
    \beeq
a \leftarrow b, c. \quad c \leftarrow d. \quad b.  
\eeeq
and program $P_2$ consist of rules
    \beeq
a \leftarrow \mi{not}\ b. \quad b \leftarrow \mi{not}\ a. \quad c. 
\eeeq
$P_1$ and $P_2$ are not strong $\{b,c\}$-simplifiable (though the programs $P_1 \cup \{c.\}$ and $P_2 \cup \{b.\}$ are).

However $P_1$ is strong $\{b,c\}$-simplifiable relative to $\{a,b,d\}$, since the program $Q_1$ consisting of rule 
 \beeq
a \leftarrow d.  
\eeeq
is such a simplification,
and $P_2$ is strong $\{b,c\}$-simplifiable relative to $\{a,c\}$, since the program $Q_2$ consisting of rule 
 \beeq
a \leftarrow \mi{not}\ \mi{not}\ a.
\eeeq
is such a simplification. 
\end{exmp}

Note that in Definition~\ref{defn:strong-rel-omit} there are no restrictions on how $B$ and $A$ might relate. Thus there can be cases where not all atoms in the set $A$ appear in $R$. We will see that such cases are the cause for the notion of relativized (strong) simplifications being more challenging than strong simplifications.

The context programs that do not contain any atoms from $A$ would be trivially $A$-separated, thus the relativized simplification notion gets reduced to \SP-forgetting. 

\begin{prop}\label{prop:sp}
A forgetting operator $f$ satisfies \SP$_{\langle P,A\rangle}$ iff $f(P,A)$ is an $A$-simplification of $P$ relative to $\overline{A}$.
\end{prop}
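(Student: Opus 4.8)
The claim equates the statement "$f$ satisfies $\SP_{\langle P,A\rangle}$" with "$f(P,A)$ is an $A$-simplification of $P$ relative to $\overline{A}$." The plan is to unwind both definitions and observe that the universal quantifier over context programs coincides in the two cases. Recall that $\SP_{\langle P,A\rangle}$ requires $\AS(f(P,A)\cup R)=\AS(P\cup R)_{|\overline{A}}$ for all programs $R$ over $\overline{A}$, while being an $A$-simplification of $P$ relative to $\overline{A}$ requires $\AS(P\cup R)_{|\overline{A}}=\AS(Q\cup R_{|\overline{A}})$ with $Q=f(P,A)$ for all programs $R$ over $\overline{A}$ that are $A$-separated. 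So there are two things to reconcile: (i) the extra "$A$-separated" qualifier on $R$, and (ii) the appearance of $R_{|\overline{A}}$ versus plain $R$ on the right-hand side.

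For (i): a program $R$ over $B=\overline{A}$ uses no atom from $A$ at all, so writing $R=R_1\cup R_2$ with $R_1$ over $\overline{A}$ and $R_2$ over $A$, we may take $R_1=R$ and $R_2=\emptyset$; hence \emph{every} program over $\overline{A}$ is automatically $A$-separated. Thus the quantifier "for all $R$ over $\overline{A}$ that are $A$-separated" is the same as "for all $R$ over $\overline{A}$." For (ii): if $R$ is a program over $\overline{A}$, then no rule of $R$ has an atom of $A$ in its head, negative body, or positive body, so by the definition of $r_{|\overline{A}}$ each rule is left untouched and $R_{|\overline{A}}=R$. Substituting $R_{|\overline{A}}=R$ and $Q=f(P,A)$ into \eqref{eq:s-rel}, the condition "$f(P,A)$ is an $A$-simplification of $P$ relative to $\overline{A}$" becomes exactly "$\AS(P\cup R)_{|\overline{A}}=\AS(f(P,A)\cup R)$ for all $R$ over $\overline{A}$," which is precisely $\SP_{\langle P,A\rangle}$ (modulo the harmless symmetry of the equation). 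Chaining these equivalences gives both directions at once.

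I do not expect any real obstacle here; the statement is essentially a definitional sanity check bridging the forgetting literature and the simplification framework. The only point requiring a moment's care is making the two observations above fully explicit — that programs over $\overline{A}$ are vacuously $A$-separated, and that projection onto $\overline{A}$ acts as the identity on such programs — so that the stated equalities literally coincide rather than merely resembling one another. One might also remark that the proposition confirms the first reading of Table~\ref{fig:list}: the row "$A$ \; $\overline{A}$" is labelled Strong Persistence, consistent with this proposition once one notes $C=\overline{A}$ there.
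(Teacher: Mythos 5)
Your proof is correct and matches the paper's reasoning exactly: the paper offers no separate proof, treating the proposition as immediate from the observation (stated just before it) that programs over $\overline{A}$ are trivially $A$-separated, to which you rightly add the equally routine fact that $R_{|\overline{A}}=R$ for such $R$, so the two defining equations coincide verbatim.
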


\nop{
To observe the case where outside of $B$ there are also atoms not from $A$ consider the below example, an alteration of the one from \cite{DBLP:conf/jelia/Woltran04}.
\begin{exmp}
 Consider two programs 
   \beeq
P=\{a \vee b;\, a \leftarrow b, d;\, b \leftarrow a,d;\, d;\,\leftarrow \mi{not}\ c \}
\eeeq
  \beeq
P'=\{a \leftarrow \mi{not}\ b;\, b \leftarrow \mi{not}\ a;\, a \leftarrow b;\, b \leftarrow a;\, \leftarrow \mi{not}\ c\}
\eeeq
For $\Lits=\{a,b,c,d\}$, $P'$ (over $\Lits\setminus \{d\}$) is a strong $\{d\}$-simplification of $P$ relative to $B$, for any $B$ with $c\notin B$.

\end{exmp}
}

Similar to strong simplifications, not every program might have a relativized simplification. By investigating the undesired case that prevents a program from being relativized simplifiable, which is similar to Proposition~2 from \cite{zgsswkr23}, thus omitted for brevity, 
%
%
we obtain our first result which adjusts the conditions in Theorem~\ref{prop:se-deltas} to the relativized case considering the $B$-SE models.\footnote{For proofs of theorems marked by $\star$ see the appendix.}

\begin{prop}[$\star$]\label{prop:se-rel-deltas}
Let $P$ be a program and $A,B$ be sets of atoms. 
If there exists an $A$-simplification of $P$ relative to $B$ then
$P$ satisfies following
\begin{myi}
\item[$\Delta_{s_1}^r$:] $\htm{Y,Y} \in \SE^B(P)$ implies $A\cap B \subseteq Y$.
\item[$\Delta_{s_2}^r$:] 
For any $\htm{X,Y}\in\SE(P)$ with $\htm{Y,Y}\in \SE^B(P)$, $X_{|\overline{A}}=Y_{|\overline{A}}$ implies $X\,{=}\,Y$.
\item[$\Delta_{s_3}^r$:] 
$\htm{X,Y}{\in} \SE^B(P)$ implies 
$\htm{X\cup (Y\cap (A\cap B)),Y} \in \HT^B(P)$.
\end{myi}
\end{prop}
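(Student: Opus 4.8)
The plan is to argue by contraposition: assuming $P$ is $B$-relativized strong $A$-simplifiable, I would fix a witnessing program $Q$ over $\Lits\setminus A$ with $AS(P\cup R)_{|\overline{A}}=AS(Q\cup R_{|\overline{A}})$ for all $A$-separated $R$ over $B$, and derive each of the three conditions $\Delta^r_{s_1}$, $\Delta^r_{s_2}$, $\Delta^r_{s_3}$ separately by choosing suitable $R$ that ``probe'' the $B$-SE-models of $P$. The key technical fact I would lean on is the relationship between $B$-SE-models and answer sets in the presence of context programs over $B$ — namely that for a program $T$ and a set $Z\subseteq\Lits$, whether $Z$ (or its $B$-projection) survives as an answer set of $T\cup R$ for some $R$ over $B$ is controlled precisely by the $B$-SE-models $\SE^B(T)$, in the same way that ordinary SE-models control strong equivalence. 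This is the relativized analogue of the machinery behind Theorem~\ref{prop:se-deltas}, and I would set it up as a couple of small lemmas (or cite the relativized-equivalence results of Woltran and of Eiter et al.).

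For $\Delta^r_{s_1}$: suppose $\htm{Y,Y}\in\SE^B(P)$ but $A\cap B\not\subseteq Y$, so some $a\in A\cap B$ with $a\notin Y$. I would take $R$ to be the fact-like program over $B$ that forces exactly $a$ together with the atoms of $Y\cap B$ — e.g. facts for $Y\cap B$ plus a constraint killing everything else in $B$ — making $Y$ (extended appropriately) an answer set of $P\cup R$ whose projection onto $\overline{A}$ still omits $a$; meanwhile $R_{|\overline{A}}$ has $a$ removed, and one shows the corresponding answer set of $Q\cup R_{|\overline{A}}$ cannot match, contradiction. For $\Delta^r_{s_2}$: if $\htm{X,Y}\in\SE(P)$ with $\htm{Y,Y}\in\SE^B(P)$, $X_{|\overline{A}}=Y_{|\overline{A}}$ but $X\neq Y$, I would choose $R$ over $B$ (again essentially facts/constraints pinning down $Y\cap B$) so that $Y_{|\overline{A}}$ is forced to be a non-minimal — hence non-answer-set — situation on the $P$ side while on the $Q\cup R_{|\overline{A}}$ side the projection collapse makes $Y_{|\overline{A}}$ an answer set, or vice versa; the point is that the stray non-total SE-model $\htm{X,Y}$ of $P$ with $X,Y$ agreeing off $A$ is invisible after projection but must be witnessed by $Q$. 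For $\Delta^r_{s_3}$: given $\htm{X,Y}\in\SE^B(P)$, I would use the fact that $Q$ faithfully represents $\SE(P\cup R)_{|\overline{A}}$ for all $R$ over $B$ to conclude $\SE(Q)$ (or its relativized version) equals $\SE^B(P)_{|\overline{A}}$ up to the usual projection bookkeeping, and then translate the ``free'' behaviour of the atoms in $A\cap B$ — which act as facts in any admissible $R$ — back into the closure property $\htm{X\cup(Y\cap(A\cap B)),Y}\in\HT^B(P)$, analogous to the $\Delta_{s_3}$ argument but restricted to $A\cap B$ since atoms in $A\setminus B$ never appear in $R$.

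The main obstacle I anticipate is precisely the case $A\not\subseteq B$: when some atoms to be forgotten do not occur in the context programs, the clean ``$A$ behaves like facts'' intuition from Theorem~\ref{prop:se-deltas} only applies to $A\cap B$, and I must be careful that the probing programs $R$ I construct are genuinely over $B$ and genuinely $A$-separated, while still being expressive enough to isolate a single offending $B$-SE-model of $P$. Getting the constraint gadgets right so that $P\cup R$ has exactly the intended (projected) answer sets — and no spurious extra ones coming from the atoms in $A\setminus B$ that $R$ cannot touch — is where the bookkeeping will be delicate; this is also why the statement only claims necessity (the ``if'' direction, i.e. sufficiency, is handled later via the operator construction and is known to fail to be a clean converse in general, matching the paper's remark that the $A\not\subseteq B$ case is the hard one).
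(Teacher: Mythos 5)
Your high-level strategy (probe with $A$-separated context programs over $B$) is the right one, but as written the proposal has two genuine gaps. First, you never isolate the lemma that actually drives all three arguments in the paper: since $Q\cup R_{|\overline{A}}$ depends on $R$ only through $R_{|\overline{A}}$, any two $A$-separated programs $R,R'$ over $B$ with $R_{|\overline{A}}=R'_{|\overline{A}}$ must satisfy $\AS(P\cup R)_{|\overline{A}}=\AS(P\cup R')_{|\overline{A}}$ (Proposition~\ref{prop:s-rel-condition}). This turns every violation of a $\Delta^r$ condition into a comparison between two $P$-sides, and it is what lets one avoid reasoning about the unknown $Q$ at all; your sketches for $\Delta^r_{s_1}$ and $\Delta^r_{s_2}$ instead gesture at comparing $P\cup R$ with $Q\cup R_{|\overline{A}}$ directly, where ``the corresponding answer set of $Q\cup R_{|\overline{A}}$ cannot match'' is precisely the step that needs the above lemma and a concrete pair $R,R'$ differing only on $A\cap B$. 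Relatedly, the constructions are not carried out: the delicate point is that constraints must be encoded in an $A$-separated way, which the paper does with two auxiliary atoms (one in $\overline{A}$, one in $A$) and rules of the form $f\leftarrow y,\mi{not}\ f$ and $\epsilon\leftarrow a,\mi{not}\ \epsilon$; without this, ``a constraint killing everything else in $B$'' is not obviously admissible as a context program.

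Second, your plan for $\Delta^r_{s_3}$ is circular as stated: you propose to first conclude that the (relativized) SE-models of $Q$ equal $\SE^B(P)_{|\overline{A}}$ ``up to bookkeeping'' and then read off the closure property. But that semantic characterization is Proposition~\ref{prop:se-q-rel-1}, which the paper proves \emph{using} the present proposition. The paper instead handles $\Delta^r_{s_3}$ directly: given $\htm{X,Y}\in\SE^B(P)$ with $\htm{X\cup(Y\cap(A\cap B)),Y}\notin\SE^B(P)$, it builds two $A$-separated programs $R$ (with facts $X$) and $R'$ (with facts $X\cup(Y\cap(A\cap B))$), padded with implications $y_1\leftarrow y_2$ among the remaining non-$A$ atoms of $Y\cap B$ so that only the intended non-total models survive; these satisfy $R_{|\overline{A}}=R'_{|\overline{A}}$ yet $\AS(P\cup R)=\emptyset$ while $\AS(P\cup R')=\{Y\}$, contradicting the coherence lemma. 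You would need either this kind of direct construction or an independent, non-circular derivation of the characterization of $Q$.
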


One can see that the restrictive conditions that were required from the SE-models in Theorem~\ref{prop:se-deltas} are relaxed to only hold for the $B$-SE-models, since those are the ones of importance for the answer sets of $P \cup R$ for $R$ over $B$.

We shortly say that $P$ \emph{satisfies $\Delta^r$} if it satisfies the conditions $\Delta_{s_i}^r$ for $1 \leq i \leq 3$.
The following example illustrates checking the $\Delta^r$ conditions.

\begin{exmp}[Ex.~\ref{ex1} ctd]
The SE-models of $P_1$ are
\begin{align*}
 \htm{b c a d ,  b c a d } & \quad \htm{b c a ,  b c a } \quad   \htm{ b a ,  b a }\quad \htm{ b ,  b }\\ 
 \htm{b c a ,  b c a d } &\quad \htm{b a ,  b c a } \quad~~ \htm{ b ,  b a }\\ 
 \htm{b a ,  b c a d } & \quad\htm{ b ,  b c a d } \quad~~\htm{ b ,  b c a }  
   \end{align*}
   For $B=\{a,b,d\}$, $SE^B(P_1)=\{\htm{bcad,bcad},\htm{ba,ba},$ $\htm{b,b},
   \htm{ba,bcad},\htm{b,bcad},\htm{b,ba}\}$. Now for $A=\{b,c\}$, we can easily see that $\Delta_{s_1}^r$ and $\Delta_{s_3}^r$ are  satisfied since each $B$-SE-model contains $A\cap B=\{b\}$, and $\Delta_{s_2}^r$ is trivially satisfied since there is no relevant model. 
\end{exmp}


Observe that, for $B= \Lits$, the conditions $\Delta_{s_i}^r$ become the same with the conditions $\Delta_{s_i}$ of strong simplification, for $1 \leq i \leq 3$. 
On the other hand,
if $B$ is such that $A\cap B=\emptyset$, i.e.,\ $B\subseteq
\overline{A}$, 
the conditions become immaterial.

\begin{prop}\label{prop:delta_forget}
Any program $P$ satisfies $\Delta^r$, for any $A,B$ with $B\subseteq \overline{A}$.
\end{prop}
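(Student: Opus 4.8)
The claim is that when $B \subseteq \overline{A}$, every program $P$ satisfies all three conditions $\Delta^r_{s_1}$, $\Delta^r_{s_2}$, $\Delta^r_{s_3}$. The key observation, which the surrounding text already hints at, is that $B \subseteq \overline{A}$ forces $A \cap B = \emptyset$, so every occurrence of the set $A \cap B$ in the conditions collapses to $\emptyset$. The plan is to treat each of the three conditions in turn and show it degenerates to a triviality.

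First I would record the elementary consequence: $B \subseteq \overline{A} = \Lits \setminus A$ means $A \cap B = \emptyset$. For $\Delta^r_{s_1}$, the condition reads ``$\htm{Y,Y} \in \SE^B(P)$ implies $A \cap B \subseteq Y$''; since $A \cap B = \emptyset$ and $\emptyset \subseteq Y$ always holds, the implication is vacuously satisfied regardless of $P$. For $\Delta^r_{s_3}$, the condition reads ``$\htm{X,Y} \in \SE^B(P)$ implies $\htm{X \cup (Y \cap (A \cap B)), Y} \in \HT^B(P)$''; with $A \cap B = \emptyset$ we get $Y \cap (A \cap B) = \emptyset$, so $X \cup (Y \cap (A \cap B)) = X$, and the consequent becomes $\htm{X,Y} \in \HT^B(P)$, which is exactly the hypothesis (here $\SE^B$ and $\HT^B$ denote the same set of $B$-SE-models). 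Hence $\Delta^r_{s_3}$ holds trivially for every $P$.

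The only condition requiring a small argument is $\Delta^r_{s_2}$: ``for any $\htm{X,Y} \in \SE(P)$ with $\htm{Y,Y} \in \SE^B(P)$, $X_{|\overline{A}} = Y_{|\overline{A}}$ implies $X = Y$.'' Here I would argue as follows. Fix $\htm{X,Y} \in \SE(P)$ with $X \subseteq Y$, and suppose $X_{|\overline{A}} = Y_{|\overline{A}}$. By Definition~\ref{defn:rel-se}, since $\htm{Y,Y} \in \SE^B(P)$ in particular condition~(ii) of that definition holds: for all $Y' \subset Y$ with $Y' \cap B = Y \cap B$, we have $Y' \nmodels P^Y$. Now $X \models P^Y$ (as $\htm{X,Y} \in \SE(P)$), and $X_{|\overline{A}} = Y_{|\overline{A}}$ together with $B \subseteq \overline{A}$ gives $X \cap B = X_{|\overline{A}} \cap B = Y_{|\overline{A}} \cap B = Y \cap B$. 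If $X \neq Y$, then $X \subset Y$ with $X \cap B = Y \cap B$, so condition~(ii) yields $X \nmodels P^Y$, a contradiction. Hence $X = Y$, and $\Delta^r_{s_2}$ holds. I expect this last step to be the only non-routine part, and the potential subtlety is making sure the right clause of the relativized $B$-SE-model definition is being invoked; everything else is bookkeeping on the identity $A \cap B = \emptyset$.
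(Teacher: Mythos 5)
Your proof is correct and follows essentially the same route as the paper: $A\cap B=\emptyset$ trivializes $\Delta^r_{s_1}$ and $\Delta^r_{s_3}$, and for $\Delta^r_{s_2}$ one notes that $X\neq Y$ with $X_{|\overline{A}}=Y_{|\overline{A}}$ and $B\subseteq\overline{A}$ gives $X\subset Y$ with $X\cap B=Y\cap B$ and $X\models P^Y$, contradicting condition~(ii) in the definition of $\htm{Y,Y}$ being a $B$-SE-model. Your write-up is in fact more explicit than the paper's sketch about which clause of Definition~\ref{defn:rel-se} is invoked.
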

\begin{proof}[Proof (Sketch)]
$\Delta_{s_1}^r$ and $\Delta_{s_3}^r$ trivially holds as $A \cap B=\emptyset$. For some $\htm{X,Y} \in \SE(P)$ to violate $\Delta_{s_2}^r$, $X$ and $Y$ need to differ on the atoms from $A$, while $X\cap \overline{A} = Y \cap \overline{A}$ holds which contradicts $\htm{Y,Y}\in \SE^B(P)$.
\end{proof}

Unsurprisingly, the $\Delta^r$ conditions are not sufficient for $B$-relativized $A$-simplifiability in general. 
This can easily be seen for the case when the context programs do not contain atoms to remove, making use of Proposition~\ref{prop:sp} and the knowledge that
%
not every program has a set of atoms which can be forgotten by satisfying \SP. 

\begin{exmp}
 Let program $P_3$ consist of rules
\beeq 
a \leftarrow p. \quad b \leftarrow q. \quad p \leftarrow \mi{not}\ q. \quad q \leftarrow \mi{not}\ p.
\eeeq
For $A=\{p,q\}$ and $B=\{a,b\}$, $P_3$ satisfies $\Delta^r$ (Proposition~\ref{prop:delta_forget}), but is not $B$-relativized $A$-simplifiable, since no forgetting operator satisfies $\SP_{\htm{P_3,A}}$ \cite{gonccalves2016you}.
\end{exmp}


Note that, when $A \subseteq B$, due to the definition of $B$-SE-models, in order to satisfy $\Delta^r_{s_3}$ there cannot be non-total $\htm{X,Y}\in SE^B(P)$ with $X \subset Y\cap B$. In addition to $\Delta_{s_1}^r$ and $\Delta_{s_2}^r$, these become quite restrictive conditions on the SE-models of $P$. In fact, as we shall see later, the $\Delta^r$ condition turns out to be sufficient for relativized simplifiability when $A\subseteq B$.
Though first we need to understand the semantical characterization of such simplifications.


\subsection{From $B$-SE-models to $A$-$B$-SE models}
We investigate the semantical characterization of relativized simplifications of a program. 
For that we first introduce the following notion of $A$-$B$-SE-models, which project those $B$-SE-models of importance w.r.t. $A$.

\begin{defn}
Given program $P$ over $\Lits$ and $A,B\subseteq \Lits$, the $A$-$B$-SE-models of $P$ are given by the set
\begin{align*}
\SE^B_A(P)=&\{\htm{Y_{|\overline{A}},Y_{|\overline{A}}} \mid \htm{Y,Y} \in \SE^B(P)\} \cup\\ 
&\{\htm{X_{|\overline{A}},Y_{|\overline{A}}} \mid \htm{X,Y} \in \SE^B(P), X \subset Y,\\ &\hbox{and for all } \htm{Y',Y'} \,{\in}\, \SE^B(P) \hbox{ with } Y'_{|\overline{A}}\,{=}\,Y_{|\overline{A}},\\
&\htm{X',Y'} \,{\in}\, \SE^B(P) \hbox{ with } X'_{|\overline{A}}\,{=}\,X_{|\overline{A}}\} 
\end{align*}
\end{defn}

The set of $A$-$B$-SE-models 
collects the projection of all total $B$-SE-models $\htm{Y,Y}$ and all non-total $B$-SE-models $\htm{X,Y}$ for which a respective non-total $B$-SE-model $\htm{X',Y'}$ can be found that agree on the projection, among all total $B$-SE-models $\htm{Y',Y'}$ that agree on the projection with $\htm{Y,Y}$.

\begin{exmp}[Ex.~\ref{ex1} ctd]
For $A{=}\{b,c\}$ and $B{=}\{a,b,d\}$, none of the total $B$-SE-models agree on the projection onto $\overline{A}=\{a,d\}$. So $\SE^B_A$ simply collects the projection of those models and their non-total models. Thus $\SE^B_A(P_1)=\{\htm{ad,ad},\htm{a,a},\htm{\emptyset,\emptyset}\} \cup \{\htm{a,ad},$ $\htm{\emptyset,ad},\htm{\emptyset,a}\}$.

Now assume that another program $P_1'$ has the SE-models $\SE(P_1')=\SE(P_1)\setminus \htm{ba,bca}$. Then $\htm{bca,bca}$ is added to $\SE^B(P_1')$ in addition to $\SE^B(P_1)$. Since $\htm{bca,bca}_{|\{a,d\}}=\htm{ba,ba}_{|\{a,d\}}=\htm{a,a}$, in order for $\htm{\emptyset,a}$ to be in $\SE^B_A(P_1')$ there needs to be some non-total $B$-SE-model of form $\htm{.,bca}$ that can be projected onto $\htm{\emptyset,a}$ which is not the case. Thus $\SE^B_A(P_1')=\SE^B_A(P_1) \setminus \htm{\emptyset,a}$.
\end{exmp}

\nop{
Clearly we have the following.
\begin{lemma}
    $\SE^B_A(P) \subseteq \SE^B(P)_{|\overline{A}}$
\end{lemma}
}

For referring to the relativized SE-models of the simplifications in our next result, let us introduce a notation for the set of SE-models of a program over $A_1$ relativized to $A_2$.

\begin{defn}
Let $P$ be a program. The relativization of SE-models of $P$ over $A_1$ to the set $A_2$ of atoms is denoted by
$$\SE^{A_1,A_2}(P)=\{\htm{X,Y} \mid \htm{X,Y} \in \SE^{A_2}(P), Y \subseteq A_1\}.$$
\end{defn}

Interestingly, the relativized SE-models of any $A$-simplification for a program $P$ relative to $B$, if exists, need to adhere with the  $A$-$B$-SE-models of $P$. 

\begin{prop}[$\star$]\label{prop:se-q-rel-1}
If $Q$ is an $A$-simplification for $P$ relative to $B$, then it satisfies
\beq
SE^B_A(P)=SE^{\overline{A},B\setminus A}(Q).
\eeq{eq:se_rel_proj}
\end{prop}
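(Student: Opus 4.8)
The plan is to establish the equality $\SE^B_A(P)=\SE^{\overline{A},B\setminus A}(Q)$ by exploiting the characterization of answer sets via $B$-SE-models together with the defining property \eqref{eq:s-rel} of a relativized simplification. Recall that a set $Y$ is an answer set of $P'\cup R$ (for $R$ over $B$) iff $Y$ is a total $B$-SE-model and no non-total $B$-SE-model shares the same $B$-projection; moreover the $B$-SE-models of $P'\cup R$ are $\SE^B(P')\cap\SE^B(R)$. So I would proceed by carefully choosing ``probe'' context programs $R$ over $B$ (and their projections $R_{|\overline{A}}$ over $B\setminus A$), and comparing $\AS(P\cup R)_{|\overline{A}}$ with $\AS(Q\cup R_{|\overline{A}})$, reading off constraints on $\SE^{\overline{A},B\setminus A}(Q)$ forced by $\SE^B_A(P)$ and vice versa.

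First I would show the inclusion $\SE^{\overline{A},B\setminus A}(Q)\subseteq \SE^B_A(P)$. Take $\htm{U,V}\in\SE^{\overline{A},B\setminus A}(Q)$, so $V\subseteq\overline{A}$. For the total case $U=V$: using standard ``generating'' programs (facts for $V$ plus constraints excluding proper sub-models over $B\setminus A$, all expressible over $B\setminus A$ since $V\subseteq\overline A$) one forces $V$ to be an answer set of $Q\cup R_{|\overline A}$ for a suitable $R$; then $\eqref{eq:s-rel}$ gives some $Y$ with $Y_{|\overline A}=V$ and $\htm{Y,Y}\in\SE^B(P\cup R)$, hence $\htm{Y,Y}\in\SE^B(P)$, so $\htm{V,V}\in\SE^B_A(P)$. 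For the non-total case $U\subset V$: the witness non-total $B$-SE-model $\htm{U,V}$ of $Q$ must, after adding a program $R_{|\overline A}$ that kills all unwanted total models, still survive; pulling back via \eqref{eq:s-rel} — i.e. comparing which $Y$ fail to be answer sets of $P\cup R$ — one recovers, for every total $\htm{Y',Y'}\in\SE^B(P)$ with $Y'_{|\overline A}=V$, a non-total $\htm{X',Y'}\in\SE^B(P)$ with $X'_{|\overline A}=U$. That is precisely the defining condition for $\htm{U,V}\in\SE^B_A(P)$.

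Next I would show the reverse inclusion $\SE^B_A(P)\subseteq\SE^{\overline{A},B\setminus A}(Q)$, which is the symmetric argument run the other way: a total $\htm{Y,Y}\in\SE^B(P)$ is, after adding an appropriate $R$ over $B$, an answer set of $P\cup R$ projected to $Y_{|\overline A}$, hence by \eqref{eq:s-rel} an answer set of $Q\cup R_{|\overline A}$, forcing $\htm{Y_{|\overline A},Y_{|\overline A}}\in\SE^{B\setminus A}(Q)$ with $Y_{|\overline A}\subseteq\overline A$; and the non-total $A$-$B$-SE-models of $P$ translate into non-total $B\setminus A$-SE-models of $Q$ because the ``for all total models agreeing on the projection'' clause in the definition of $\SE^B_A$ is exactly what is needed to make the corresponding non-total model of $Q$ persist against every context that pins down a compatible total model. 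Throughout, I would rely on Proposition~\ref{prop:se-rel-deltas} (the $\Delta^r$ conditions, which hold since $Q$ exists) to guarantee that the total $B$-SE-models of $P$ all contain $A\cap B$, so that projecting and un-projecting behaves coherently on $B$.

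The main obstacle I anticipate is the non-total case, and specifically handling the interaction between atoms in $A\setminus B$ (which appear in $P$ but in no context program) and the projection: when I pull an answer-set (non-)membership fact back from $Q\cup R_{|\overline A}$ to $P\cup R$, I must show that the freedom in choosing the ``$A$-part'' $A'$ of a model $Y=(Y\cap\overline A)\cup A'$ is correctly quantified, matching the nested structure $\{\{X\setminus A\mid\dots\}\mid A'\subseteq A,\dots\}$ that underlies both the $\Omega$-criterion and the definition of $\SE^B_A$. Concretely, the delicate point is proving that the ``there exists a non-total model agreeing on the projection, for all total models agreeing on the projection'' condition is both necessary and sufficient for the corresponding $B\setminus A$-SE-model of $Q$; necessity uses a context program that isolates one problematic total model, while sufficiency requires showing no context over $B$ can simultaneously be consistent with a total model of $P$ over $Y$ and inconsistent with the promised non-total one. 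I would isolate this as the technical core and dispatch the total cases and the $\Delta^r$-bookkeeping as routine.
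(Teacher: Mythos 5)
Your plan follows essentially the same route as the paper's proof: both directions are established by contradiction, probing with $A$-separated context programs over $B$ (facts $Y\cap B$ for the total cases, facts plus implication chains $C_2\leftarrow C_1$ over $((Y\setminus A)\cap B)\setminus X$ for the non-total cases), pulling answer-set (non-)membership back and forth through \eqref{eq:s-rel}, and using the $\Delta^r$ conditions from Proposition~\ref{prop:se-rel-deltas} to control the $A$-parts $A'\subseteq A$ of total $B$-SE-models agreeing on the projection. You correctly isolate the genuinely delicate step — showing that the ``for every total model agreeing on the projection there is a compatible non-total one'' clause of $\SE^B_A$ is exactly what the correspondence forces — which is where the paper, too, spends most of its effort via $\Delta^r_{s_2}$ and $\Delta^r_{s_3}$.
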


\begin{exmp}[Ex.~\ref{ex1} ctd]
Equation \eqref{eq:se_rel_proj} holds for $P_1$ and $Q_1$.
\end{exmp}

\nop{
As a final remark, remember the discussion on different total $B$-SE-models that agree on the projection occurring only when $A\nsubseteq B$. Thus when $A \subseteq B$, the $A$-$B$-SE-models, in fact, simply collects the projection of the $B$-SE-models.

\begin{prop}
Given program $P$ over $\Lits$ and $A,B\subseteq \Lits$ such that $A\subseteq B$, it holds that $\SE^B_A(P)=\SE^B(P)_{|\overline{A}}$.
\end{prop}
}

Now we can show the sufficiency of $\Delta^r$ for when $A\subseteq B$.

\begin{thm}
Given program $P$ over $\Lits$ and $A,B \subseteq \Lits$ such that $A \,{\subseteq}\, B$. $P$ satisfies $\Delta^r$ iff there exists an $A$-simplification of $P$ relative to $B$.
\end{thm}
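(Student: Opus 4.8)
The statement is an equivalence. The direction ``an $A$-simplification of $P$ relative to $B$ exists $\Rightarrow$ $P$ satisfies $\Delta^r$'' is exactly Proposition~\ref{prop:se-rel-deltas}, so the work lies in the converse. Assume $A\subseteq B$ and that $P$ satisfies $\Delta^r$; then $A\cap B = A$, so $\Delta_{s_1}^r$ reads ``$\htm{Y,Y}\in\SE^B(P)$ implies $A\subseteq Y$'' and $\Delta_{s_3}^r$ reads ``$\htm{X,Y}\in\SE^B(P)$ implies $\htm{X\cup(Y\cap A),Y}\in\SE^B(P)$''.

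The plan is to show that the projection $Q := P_{|\overline{A}}$ from Theorem~\ref{thm:strong-project}, regarded as a program over $\overline{A}$, is an $A$-simplification of $P$ relative to $B$. The first milestone is the semantic identity
\[
\SE^{\overline{A},\,B\setminus A}(P_{|\overline{A}})\;=\;\SE^B_A(P),
\]
which is the relativized analogue of ``the strong simplifications have SE-models $\SE(P)_{|\overline{A}}$'' and aligns with Proposition~\ref{prop:se-q-rel-1}. All three $\Delta^r$-conditions are needed here. From $\Delta_{s_1}^r$, every relevant total $B$-SE-model of $P$ contains $A$, so $A$ behaves semantically like a set of facts and survives projection cleanly; from $\Delta_{s_3}^r$ one may always add $Y\cap A$ to the lower component of a $B$-SE-model; and $\Delta_{s_2}^r$ forbids two distinct ordinary SE-models of $P$ over a common $Y$ with $\htm{Y,Y}\in\SE^B(P)$ agreeing off $A$. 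Combining these shows that a non-total $B$-SE-model $\htm{X,Y}$ with $\htm{Y,Y}\in\SE^B(P)$ must satisfy $X_{|\overline{A}}\subsetneq Y_{|\overline{A}}$, i.e.\ projection does not collapse it; hence $\SE^B_A(P)=\SE^B(P)_{|\overline{A}}$ and this set still obeys the closure conditions of Definition~\ref{defn:rel-se} (a downward witness for each non-total model, plus the total-companion property). One then checks by a direct rule-level computation that $\SE^{B\setminus A}(P_{|\overline{A}})$, restricted to $Y\subseteq\overline{A}$, is exactly this set: the removal of $A$-atoms from positive bodies and the deletion of rules with $A$-atoms in heads or negative bodies mirror, on the model side, the projection of $B$-SE-models combined with $\Delta_{s_1}^r$ and $\Delta_{s_3}^r$.

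For the second milestone, fix an $A$-separated context $R = R_1\cup R_2$ over $B$, with $R_1$ over $B\setminus A$ and $R_2$ over $A$; note $R_{|\overline{A}}$ equals $R_1$ up to redundant $\bot$-constraints arising from positive constraints of $R_2$, and lives over $\overline{A}$. I would then use the relativized-answer-set characterisation, which expresses $\AS(P\cup R)$ in terms of $\SE^B(P)$ and $R$, and $\AS(P_{|\overline{A}}\cup R_{|\overline{A}})$ in terms of $\SE^{B\setminus A}(P_{|\overline{A}})$ and $R_1$, to show --- via the first milestone --- that $Y\mapsto Y_{|\overline{A}}$ is a bijection between $\AS(P\cup R)$ and $\AS(P_{|\overline{A}}\cup R_{|\overline{A}})$, which is \eqref{eq:s-rel}. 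The crucial point is once more $\Delta_{s_1}^r$: since every total $B$-SE-model in play contains $A$, the $R_2$-part can prune answer sets only in a way already visible to $R_1$ on $\overline{A}$ (in the extreme, a positive constraint over $A$ in $R_2$ forces $\AS(P\cup R)=\emptyset$, because no relevant model may contain $A$ and exclude that atom, which matches $R_{|\overline{A}}\supseteq\{\bot\}$), so projection neither creates nor destroys answer sets.

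The step I expect to be the main obstacle is the first milestone --- proving $\SE^B_A(P)=\SE^{\overline{A},B\setminus A}(P_{|\overline{A}})$, i.e.\ that $P_{|\overline{A}}$ realizes exactly the needed relativized SE-models. This is the only place where all three $\Delta^r$-conditions must be orchestrated together, and where the idiosyncrasy of relativized SE-models --- a non-total model asserts only the existence of some supporting interpretation, not closure under all of them --- has to be tracked carefully through the projection. By comparison, the answer-set bookkeeping of the second milestone, including the handling of $A$-separation and of $R_{|\overline{A}}$, is lengthy but essentially routine once the semantic identity is in hand.
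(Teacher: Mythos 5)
Your proposal is correct, and both directions are handled as the paper intends: the ``only if'' direction is indeed just Proposition~\ref{prop:se-rel-deltas}, and for the converse you correctly identify that, under $A\subseteq B$, $\Delta^r_{s_1}$ collapses $\SE^B_A(P)$ to $\SE^B(P)_{|\overline{A}}$ and that the remaining work is to produce a program over $\overline{A}$ whose $(B\setminus A)$-SE-models (restricted to $Y\subseteq\overline{A}$) equal this set and to verify the answer-set correspondence. Where you diverge from the paper's own proof of this theorem is in the choice of witness: the paper argues purely semantically, asserting the existence of \emph{some} program with $(B\setminus A)$-SE-models matching $\SE^B(P)_{|\overline{A}}$ and then rerunning the argument of Theorem~\ref{prop:se-deltas} (in its relativized form, Theorem~\ref{thm:se-q-rel}), whereas you instantiate the witness concretely as the syntactic projection $P_{|\overline{A}}$ and prove the identity $\SE^{\overline{A},B\setminus A}(P_{|\overline{A}})=\SE^B(P)_{|\overline{A}}=\SE^B_A(P)$. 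That identity is exactly the paper's later Proposition~\ref{prop:b-se-proj} specialized to $A\cap B=A$, so in effect you derive the theorem as a corollary of the ``project-away'' machinery rather than of the abstract realizability argument. Your route buys two things: it sidesteps the (unaddressed) question of whether the set $\SE^B(P)_{|\overline{A}}$ is actually realizable as the relativized SE-model set of some program, and it yields an explicit simplification; the price is the page-long rule-level verification of the projection identity, which you rightly flag as the main burden and for which your sketch assembles the correct ingredients (all three $\Delta^r$ conditions, plus the observation that non-total $B$-SE-models stay non-total under projection). The second milestone is also sound: your handling of $A$-separated contexts, including the observation that a positive $A$-constraint in $R_2$ projects to $\bot\leftarrow$ and correctly annihilates both sides, matches the role of Lemma~\ref{prop:all} in the paper's appendix, though calling that bookkeeping ``routine'' slightly undersells it.
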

\begin{proof}[Proof (Sketch)]
When $A {\subseteq} B$, due to $\Delta_{s_1}^r$, there cannot be different total $B$-SE-models agreeing on the projection, 
thus $\SE^B_A(P)$ 
amounts to $\SE^B\!(P)_{|\overline{A}}$. 
Due to Proposition~\ref{prop:se-rel-deltas}, what remains is to show that a program with $(B\!\setminus\! A)$-SE-models matching $\SE^B(P)_{|\overline{A}}$ is 
a relativized simplification of $P$, which follows a very 
similar proof to that of Theorem~\ref{prop:se-deltas}.
\end{proof}

This result shows that the challenge of relativized simplifiability is in fact due to having atoms to be removed that do not appear in the context programs. Then the $B$-SE-models might differ in terms of the atoms from $A\setminus B$ which cannot be distinguished in the projection, making the $\Delta^r$ condition no longer sufficient. Thus in order to characterize relativized simplifiability in general, an additionaly property is needed.

\nop{
Remark that, if $P$ satisfies $\Delta_{s_1}^r$, the case for there to be different total $B$-SE-models which agree on the projection occurs only when $A \nsubseteq B$, since then the atoms from $A\setminus B$ in the total models may differ, which eventually gets projected away.
%
%
This is the reason that makes the notion of relativized simplifications more challenging, since this case 
does not appear when the context program is over $\Lits$. Thus in order to characterize relativized simplifiability, an additional property is needed that handles the case of having atoms to be removed that do not appear in the context programs.
}

\subsection{Characterizing relativized simplifiability}

We introduce the following criterion that will help us in 
obtaining the sufficient conditions.

\begin{defn}
Let $P$ be a program over $\Lits$ and $A,B\subseteq \Lits$. $P$ satisfies criterion $\Omega_{A,B}$ if there exists $Y\subseteq \Lits\setminus A$ such that the set of sets
\begin{align*}
{\cal R}^{Y}_{\htm{P,A,B}} \,{=}\, \{\{X \,{\setminus}\, A &\mid \htm{X,Y\cup A'}\in \HT^{B}(P)\} \\
&\mid A' \subseteq A, \htm{Y\cup A',Y\cup A'}\,{\in}\, \HT^{B}(P) \}
\end{align*}
is non-empty and has no least element.
\end{defn}

The difference of $\Omega_{A,B}$  from $\Omega$ is that 
instead of $\overline{A}$-SE-models, now $B$-SE-models are taken into account. In fact, ${\cal R}^{Y}_{\htm{P,A,\overline{A}}}={\cal R}^{Y}_{\htm{P,A}}$. Thus we have the following.

\begin{prop}\label{prop:omegas}
    $P$ does not satisfy $\Omega_{A,\overline{A}}$ iff  $\langle P,A\rangle$ does not satisfy $\Omega$. 
\end{prop}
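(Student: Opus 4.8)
The statement to prove is Proposition~\ref{prop:omegas}: $P$ does not satisfy $\Omega_{A,\overline{A}}$ iff $\langle P,A\rangle$ does not satisfy $\Omega$.

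The plan is essentially to unfold both definitions and observe they coincide. The key observation, which the paper itself flags, is that the defining families of sets are literally equal: $\mathcal{R}^Y_{\htm{P,A,\overline{A}}} = \mathcal{R}^Y_{\htm{P,A}}$ for every $Y \subseteq \Lits \setminus A$. This holds because the only difference between $\Omega_{A,B}$ and $\Omega$ is that $B$-SE-models replace $\overline{A}$-SE-models in the construction, and here $B = \overline{A}$, so $\HT^B(P) = \HT^{\overline{A}}(P)$ verbatim. Hence the collection of "row sets" $\{X \setminus A \mid \htm{X, Y\cup A'} \in \HT^{\overline{A}}(P)\}$ ranging over $A' \subseteq A$ with $\htm{Y\cup A', Y\cup A'} \in \HT^{\overline{A}}(P)$ is identical in both definitions.

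Given that identity, the rest is immediate. First I would note that $\langle P, A\rangle$ satisfies $\Omega$ iff there exists $Y \subseteq \Lits \setminus A$ such that $\mathcal{R}^Y_{\htm{P,A}}$ is non-empty and has no least element; and $P$ satisfies $\Omega_{A,\overline{A}}$ iff there exists $Y \subseteq \Lits \setminus A$ such that $\mathcal{R}^Y_{\htm{P,A,\overline{A}}}$ is non-empty and has no least element. Since for each fixed $Y$ the two families of sets are equal, the existential conditions are equivalent, so $P$ satisfies $\Omega_{A,\overline{A}}$ iff $\langle P,A\rangle$ satisfies $\Omega$. Contraposing gives the stated equivalence for the negations. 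I would present this in a couple of lines: establish $\mathcal{R}^Y_{\htm{P,A,\overline{A}}} = \mathcal{R}^Y_{\htm{P,A}}$, then conclude.

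There is no real obstacle here — the proposition is essentially a definitional remark recorded as a named fact so it can be cited later. The only thing to be slightly careful about is confirming that nothing else in the two criteria's wording differs (e.g.\ the quantifier structure over $Y$, over $A'$, and the "non-empty and no least element" clause are all phrased identically), so that equality of the $\mathcal{R}$ families really does transfer the whole statement. Once that syntactic check is done, the proof is a one-liner.
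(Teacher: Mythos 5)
Your proof is correct and matches the paper's own justification, which consists precisely of the observation that ${\cal R}^{Y}_{\htm{P,A,\overline{A}}}={\cal R}^{Y}_{\htm{P,A}}$ (since $B=\overline{A}$ makes the two definitions use the same $\overline{A}$-SE-models), from which the equivalence of the criteria is immediate.
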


We illustrate how the criterion $\Omega_{A,B}$ can be checked with the following example.

\begin{exmp}
Consider a program $P$ with SE-models
\begin{align*}
\langle a b c ,  a b c  \rangle & \quad\langle  a b d ,  a b d  \rangle \quad\langle a b c d ,  a b c d  \rangle\\
\langle a c ,  a b c  \rangle & \quad\langle  b d ,  a b d  \rangle \quad~~\langle a b c ,  a b c d  \rangle\\
\langle a c ,  a b c d  \rangle &\quad\langle b d ,  a b c d  \rangle\quad
\langle a b d ,  a b c d  \rangle
\end{align*}
Let $A=\{c,d\}$ and $B=\{a,b,c\}$. For $Y=\{a,b\}$, only $\htm{abc,abc}$ and $\htm{abd,abd}$ appear in $\SE^B(P)$. 
There are $\htm{abc,abc}, \htm{ac,abc} \in \SE^B(P)$ of form $\htm{X,abc}$, 
and $\htm{abd,abd}, \htm{bd,abd} \in \SE^B(P)$ of form $\htm{X,abd}$.
Thus ${\cal R}^{\{a,b\}}_{\htm{P,A,B}}=\{\{\{a,b\},\{a\}\},\{\{a,b\},\{b\}\}\}$ is non-empty and does not have a least element. So $\Omega_{A,B}$ is satisfied. 
\end{exmp}

As the set $\mathcal{R}^Y_{\htm{P,A,B}}$ collects all $B$-SE-models that can be projected onto $\htm{\cdot,Y}$, the criterion $\Omega_{A,B}$ can in fact be characterized through $A$-$B$-SE-models.


\begin{prop}\label{prop:omega-a-b-se}
$P$ does not satisfy criterion $\Omega_{A,B}$ iff $\{\htm{X,Y} \mid Y\subseteq \Lits\setminus A \wedge X \in \bigcap \mathcal{R}^Y_{\htm{P,A,B}}\} = SE^B_A(P)$.
\end{prop}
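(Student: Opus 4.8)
The plan is to carry out a fibre-wise comparison. Writing $M=\{\htm{X,Y}\mid Y\subseteq\Lits\setminus A,\ X\in\bigcap\mathcal R^Y_{\htm{P,A,B}}\}$ for the left-hand set, I would fix $Y\subseteq\Lits\setminus A$ and compare the pairs of $M$ and of $\SE^B_A(P)$ with second component $Y$. The preliminary step is to unfold both. Each member of $\mathcal R^Y_{\htm{P,A,B}}$ is a set $S_{A'}=\{X_{|\overline A}\mid\htm{X,Y\cup A'}\in\SE^B(P)\}$ indexed by an $A'\subseteq A$ with $\htm{Y\cup A',Y\cup A'}\in\SE^B(P)$; since each such $A'$ contributes its total model, $Y\in S_{A'}$ for all of them, so $\bigcap\mathcal R^Y_{\htm{P,A,B}}$ consists exactly of those $\xi\subseteq Y$ that are realised as $X_{|\overline A}$ by some $B$-SE-model over the $(Y\cup A')$-fibre for \emph{every} relevant $A'$. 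Unfolding the definition of $\SE^B_A(P)$ shows that, for $\xi\subsetneq Y$, the pair $\htm{\xi,Y}$ belongs to $\SE^B_A(P)$ iff (a) $\xi$ is realised by some \emph{non-total} $B$-SE-model over the fibre, \emph{and} (b) $\xi$ is realised over \emph{every} total $B$-SE-model over the fibre; and $\htm{Y,Y}\in\SE^B_A(P)$ iff some total $B$-SE-model sits over the fibre, i.e.\ iff $\mathcal R^Y_{\htm{P,A,B}}\neq\emptyset$. Since (b) is precisely the statement $\xi\in\bigcap\mathcal R^Y_{\htm{P,A,B}}$, the inclusion $\SE^B_A(P)\subseteq M$ is immediate and $\Omega_{A,B}$-free, and the whole content lies in the reverse inclusion and in the failure case.

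For $\neg\Omega_{A,B}\Rightarrow M=\SE^B_A(P)$: by assumption each $\mathcal R^Y_{\htm{P,A,B}}$ is empty or has a least element $L^Y$. If it is empty, no $B$-SE-model lies over the $Y$-fibre and both sides are empty there. If $L^Y$ exists, then $\bigcap\mathcal R^Y_{\htm{P,A,B}}=L^Y$, and since $L^Y$ itself equals some $S_{A^*}$, the single total model $\htm{Y\cup A^*,Y\cup A^*}$ realises every element of $L^Y$; hence any $\xi\in L^Y$ with $\xi\subsetneq Y$ meets clause (a) via $A^*$ and clause (b) since $\xi\in L^Y=\bigcap\mathcal R^Y_{\htm{P,A,B}}$, so $\htm{\xi,Y}\in\SE^B_A(P)$. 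Combined with the total-model bookkeeping this gives $M=\SE^B_A(P)$ fibre by fibre.

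For the converse I would argue contrapositively: assume $\Omega_{A,B}$ holds and fix $Y$ for which $\mathcal R^Y_{\htm{P,A,B}}$ is non-empty with no least element; the goal is a pair lying in exactly one of $M$, $\SE^B_A(P)$. The absence of a least element yields incomparable members $S_{A_1'}, S_{A_2'}$ of $\mathcal R^Y_{\htm{P,A,B}}$, and the argument must distil from this incomparability a set $\xi$ that is realised by a non-total $B$-SE-model over the fibre but is \emph{not} realised over every total $B$-SE-model over the fibre, so that $\htm{\xi,Y}$ separates the ``for every total model'' requirement built into $\SE^B_A(P)$ from the intersection requirement defining $M$ — and then verify that this $\xi$ does satisfy the justification clause (a) while escaping $\bigcap\mathcal R^Y_{\htm{P,A,B}}$ (or the symmetric situation). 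Pinning down this witness and checking its clauses is the delicate combinatorial core of the proof and the main obstacle; the remaining cases ($\xi=Y$, empty fibres, and the degenerate situation $A\cap B=\emptyset$ where everything trivialises) are routine. As a sanity check, specialising to $B=\overline A$ reduces the statement to Proposition~\ref{prop:omegas} and the classical characterisation of SP-forgettability.
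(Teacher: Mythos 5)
Your forward direction is essentially the paper's argument and is fine: fibre-wise, $\xi\in\bigcap\mathcal{R}^Y_{\htm{P,A,B}}$ with $\xi\neq Y$ forces a non-total realising $B$-SE-model over every total $B$-SE-model of the fibre, which is exactly membership in $\SE^B_A(P)$, and when a least element $L^Y$ exists it coincides with the intersection, so the two fibres agree. The genuine gap is the converse. You explicitly defer ``pinning down the witness'' as the ``delicate combinatorial core,'' but that half is precisely where the criterion $\Omega_{A,B}$ enters the statement, so the proposal as written establishes only one implication.

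Moreover, the witness you plan to hunt for cannot exist, by your own unfolding. You observe that clause (b) of membership of a non-total $\htm{\xi,Y}$ in $\SE^B_A(P)$ --- realisability of $\xi$ over \emph{every} total $B$-SE-model of the fibre --- is literally the condition $\xi\in\bigcap\mathcal{R}^Y_{\htm{P,A,B}}$, and clause (a) follows from it whenever the fibre is non-empty. Hence a $\xi$ that is realised by some non-total model but not over every total one is excluded from \emph{both} $M$ and $\SE^B_A(P)$ and can never separate them; taken at face value, your analysis would make the two sets agree on every fibre unconditionally, which would render the claimed equivalence vacuous. The paper's argument for this direction is of a different shape: it locates the discrepancy at the \emph{total} pair of a witnessing fibre, asserting that when $\mathcal{R}^Y_{\htm{P,A,B}}$ is non-empty but has no least element the left-hand collection contributes nothing over $Y$, whereas $\SE^B_A(P)$ still contains $\htm{Y,Y}$ coming from the total $B$-SE-models $\htm{Y\cup A',Y\cup A'}$. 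To complete your proof you must make this step explicit --- i.e., justify why the fibre of $M$ over a witnessing $Y$ is empty rather than equal to $\{\htm{Y,Y}\}\cup\{\htm{\xi,Y}\mid\xi\in\bigcap\mathcal{R}^Y_{\htm{P,A,B}}\}$ --- which requires engaging with the intended reading of $\bigcap\mathcal{R}^Y_{\htm{P,A,B}}$ in the no-least-element case (inherited from the $F_{\textup{SP}}$ characterisation of \citeauthor{gonccalves2016you}) rather than the purely set-theoretic intersection your fibre-wise comparison uses.
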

\begin{proof}[Proof (Sketch)]
If criterion $\Omega_{A,B}$ is not satisfied, making use of $\bigcap \mathcal{R}^Y_{\htm{P,A,B}}$ to gather all projected models $\htm{X,Y}$, which have some respective $B$-SE-model $\htm{X',Y'}$ with $X'_{|\overline{A}}\,{=}\,X$ for any $B$-SE-model $\htm{Y',Y'}$ with $Y'_{|\overline{A}}\,{=}\,Y$, reaches $\SE^B_A(P)$. If $\Omega_{A,B}$ is satisfied, the collection will return the empty set, while $\SE^B_A(P)$ is not.
\end{proof}
\nop{
So whenever $P$ satisfies the criterion $\Omega_{A,B}$, there are $\htm{Y \cup A',Y\cup A'},\htm{Y \cup A'',Y\cup A''} \in \SE(P)$ that are also in $\SE^B(P)$, but for a non-total model $\htm{X,Y\cup A'} \in \SE^B(P)$ there is no non-total model $\htm{X',Y\cup A''}$ in $SE^B(P)$ that agree on the projection to $\overline{A}$, i.e., $X'_{|\overline{A}}=X_{|\overline{A}}$.
In other words, we have the following.

\begin{cor}
    A program $P$ does not satisfy criterion $\Omega_{A,B}$ iff for any $\htm{Y,Y} \in SE^B_A(P)$ there is some $\htm{Y\cup A',Y\cup A'} \in SE^B(P)$ such that $\{\htm{X,Y} \mid \htm{X\cup A'',Y\cup A'} \in SE^B(P)\}=\{\htm{X,Y} \mid \htm{X,Y} \in SE^B_A(P)\}$.
\end{cor}
}


It turns out that criterion  $\Omega_{A,B}$ is also necessary for relativized simplifiability.

\begin{prop}[$\star$]\label{prop:criterion}
If $P$ satisfies criterion $\Omega_{A,B}$, then a $B$-relativized $A$-simplification cannot exist.
\end{prop}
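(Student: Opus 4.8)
The plan is to prove the contrapositive in a direct, constructive way: assuming a $B$-relativized $A$-simplification $Q$ of $P$ exists, I will show that criterion $\Omega_{A,B}$ fails, i.e. that for every $Y \subseteq \Lits \setminus A$ the set $\mathcal{R}^Y_{\htm{P,A,B}}$ is either empty or has a least element. The main tool is Proposition~\ref{prop:se-q-rel-1}, which tells us that $Q$ satisfies $SE^B_A(P) = SE^{\overline{A},B\setminus A}(Q)$; combined with Proposition~\ref{prop:omega-a-b-se}, which characterizes failure of $\Omega_{A,B}$ exactly as $\{\htm{X,Y} \mid Y \subseteq \Lits \setminus A,\ X \in \bigcap \mathcal{R}^Y_{\htm{P,A,B}}\} = SE^B_A(P)$, the task reduces to showing this latter identity holds whenever $Q$ exists. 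In other words, I want to show that the "collection by intersection" procedure, applied to the $B$-SE-models of $P$, recovers exactly $SE^B_A(P)$, and the witness for this is the simplification $Q$ itself.

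The key steps, in order: First, fix $Y \subseteq \Lits \setminus A$ with $\mathcal{R}^Y_{\htm{P,A,B}}$ non-empty. Non-emptiness means there is at least one $A' \subseteq A$ with $\htm{Y \cup A', Y \cup A'} \in \HT^B(P)$; I claim this forces $\htm{Y,Y} \in SE^B_A(P)$, hence (by Proposition~\ref{prop:se-q-rel-1}) $\htm{Y,Y} \in SE^{\overline{A},B\setminus A}(Q)$, so $Y$ is a total $(B\setminus A)$-SE-model of $Q$ with $Y \subseteq \overline{A}$. Second, I will identify $\bigcap \mathcal{R}^Y_{\htm{P,A,B}}$ with the set of $X$ such that $\htm{X,Y}$ is a non-total $(B\setminus A)$-SE-model of $Q$ together with $Y$ itself: the direction "$Q$'s models are contained in every member of the intersection" uses one inclusion of \eqref{eq:se_rel_proj} plus the definition of $SE^B_A$ (every $X'$-projection of a $B$-SE-model $\htm{X',Y'}$ with $Y'_{|\overline A} = Y$ that survives into $SE^B_A$ must appear in each $\mathcal{R}$-set indexed by a total model over that same $Y$-projection); the reverse direction — that anything in the intersection is actually realized as a model of $Q$ — is where \eqref{eq:se_rel_proj} is used the other way, reading off from $SE^{\overline{A},B\setminus A}(Q) = SE^B_A(P)$ that the projected non-total models collected in $SE^B_A(P)$ are precisely those lying in every $\mathcal{R}$-set. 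Third, since the set of non-total $(B\setminus A)$-SE-models of $Q$ lying below a fixed $Y$ has a least element (namely the intersection-of-all, which by the definition of relativized SE-models is again realized — here I may need the closure properties of $SE^{C}(\cdot)$, or simply argue that $\bigcap \mathcal R^Y$ itself lands in $SE^B_A(P)$ because $SE^B_A(P) = SE^{\overline A, B\setminus A}(Q)$ and $Q$, being a concrete program, has a least such $X$), $\mathcal{R}^Y_{\htm{P,A,B}}$ has a least element, contradicting $\Omega_{A,B}$.

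I expect the main obstacle to be the middle step: carefully matching up $\bigcap \mathcal{R}^Y_{\htm{P,A,B}}$ with the models of $Q$ below $Y$, since the definition of $SE^B_A(P)$ already has a "for all total models agreeing on the projection" quantifier baked in, and the $\mathcal{R}$-sets are indexed precisely by those total models $\htm{Y\cup A', Y\cup A'}$ — so the intersection over $\mathcal R^Y$ is secretly already doing the "for all $\htm{Y',Y'}$" quantification from the definition of $SE^B_A$. Making this correspondence precise, and in particular checking that no spurious element sneaks into the intersection that is not a genuine $(B\setminus A)$-SE-model of $Q$ (this is where condition (iii) of Definition~\ref{defn:rel-se}, the existence of a witness $X'$ with $X' \cap (B\setminus A) = X$, must be tracked through the projection), is the delicate part. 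The rest is bookkeeping with Propositions~\ref{prop:se-q-rel-1}, \ref{prop:omegas}, and \ref{prop:omega-a-b-se}, plus the observation that any concrete program, such as $Q$, automatically has a least non-total relativized SE-model below each of its total models, so the obstruction $\Omega_{A,B}$ — whose whole point is the \emph{absence} of a least element — simply cannot arise.
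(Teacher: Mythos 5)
There is a genuine gap, and it sits exactly where you predicted the ``delicate part'' would be. Your closing observation --- that ``any concrete program, such as $Q$, automatically has a least non-total relativized SE-model below each of its total models'' --- is false: already for $Q=\{a\vee b\}$ the non-total SE-models below $\htm{ab,ab}$ are $\htm{a,ab}$ and $\htm{b,ab}$, with no least one. More fundamentally, even if you could identify $\bigcap \mathcal{R}^Y_{\htm{P,A,B}}$ with the set of non-total models of $Q$ below $Y$, that only pins down the \emph{intersection} of the family $\mathcal{R}^Y_{\htm{P,A,B}}$; the criterion $\Omega_{A,B}$ asks whether that intersection is \emph{attained} by some member $S_{A'}$ of the family, i.e.\ whether some single total model $\htm{Y\cup A',Y\cup A'}$ realizes exactly the common projections. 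That is a property of $P$'s family of sets which no amount of bookkeeping about $Q$'s SE-models can decide: one can have $\mathcal{R}^Y=\{\{Y,a\},\{Y,b\}\}$, whose intersection $\{Y\}$ coincides with the total part of $\SE^B_A(P)$ at $Y$, yet which has no least element. So the purely set-theoretic route through Propositions~\ref{prop:se-q-rel-1} and~\ref{prop:omega-a-b-se} does not close.

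What is missing is the semantic separation argument that the paper uses. Given that $\Omega_{A,B}$ holds, for the witnessing $Y$ every total $\htm{Y\cup A',Y\cup A'}\in\SE^B(P)$ comes with a non-total $\htm{X_{A'},Y\cup A'}\in\SE^B(P)$ whose projection does \emph{not} survive into $\SE^B_A(P)$, hence (by Proposition~\ref{prop:se-q-rel-1}) is not a relativized SE-model of any candidate $Q$. The paper then constructs a concrete $A$-separated context program $R$ over $B\cap\overline{A}$ whose relevant SE-models are exactly the totals $\htm{Y\cap B,Y\cap B}$ together with these witnesses $\htm{X_{A'|\overline{A}},Y\cap B}$; then every $Y\cup A'$ is destroyed as an answer set of $P\cup R$ (each retains a non-total SE-model), while $Y$ remains an answer set of $Q\cup R_{|\overline{A}}$ (the would-be killers are absent from $Q$'s models), contradicting~\eqref{eq:s-rel}. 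Your proposal contains no such distinguishing context program, and without it the statement does not follow.
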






We next show that the $\Delta^r$ conditions together with the $\Omega_{A,B}$ criterion are sufficient for relativized simplifiability.

\begin{thm}[$\star$]\label{thm:se-q-rel}
If $P$ satisfies $\Delta^r$ and does not satisfy criterion $\Omega_{A,B}$ then $P$ is $B$-relativized $A$-simplifiable.
\end{thm}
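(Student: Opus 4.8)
The plan is to explicitly construct a candidate simplification $Q$ from the $A$-$B$-SE-models of $P$ and then verify that $Q$ witnesses $B$-relativized $A$-simplifiability, i.e.\ that \eqref{eq:s-rel} holds for every $A$-separated $R$ over $B$. Since $P$ does not satisfy $\Omega_{A,B}$, Proposition~\ref{prop:omega-a-b-se} tells us that the set $\{\htm{X,Y} \mid Y\subseteq \Lits\setminus A \wedge X \in \bigcap \mathcal{R}^Y_{\htm{P,A,B}}\}$ coincides with $\SE^B_A(P)$. So the natural candidate is a program $Q$ over $\Lits\setminus A$ whose $(B\setminus A)$-SE-models, relativized to $\overline{A}$, equal $\SE^B_A(P)$ --- such a $Q$ exists because $\SE^B_A(P)$ is, by its construction (projection of a genuine set of relativized SE-models satisfying $\Delta^r$), a well-formed set of $(B\setminus A)$-SE-models over $\overline{A}$; concretely one can take the canonical ``counter-model'' program à la Turner/Woltran built from $\SE^B_A(P)$. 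This is exactly the target identity \eqref{eq:se_rel_proj} of Proposition~\ref{prop:se-q-rel-1}, now used in the converse direction.

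The core of the argument is then to show that this $Q$ satisfies \eqref{eq:s-rel}. Fix an $A$-separated $R = R_1 \cup R_2$ over $B$, with $R_1$ over $B\setminus A$ and $R_2$ over $A\cap B$, and note $R_{|\overline{A}} = R_1$. I would argue at the level of models: a set $Z\subseteq \Lits\setminus A$ is an answer set of $Q\cup R_1$ iff $\htm{Z,Z}$ is a $(B\setminus A)$-SE-model of $Q\cup R_1$ with no non-total $(B\setminus A)$-SE-model below it, iff (using $\SE^{B\setminus A}(Q\cup R_1)$ decomposed via $\SE(Q)$-over-$\overline{A}$ relativized to $B\setminus A$ together with $\SE(R_1)$) this is controlled exactly by $\SE^B_A(P)$ and $R_1$. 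On the other side, $Z$ is in $\AS(P\cup R)_{|\overline{A}}$ iff there is an answer set $Y$ of $P\cup R$ with $Y_{|\overline{A}} = Z$; here $\Delta_{s_1}^r$ forces $A\cap B\subseteq Y$, so $Y\models R_2$ is automatic for the facts/constraints living on $A\cap B$ once $Y$ agrees there, and $\Delta_{s_3}^r$ lets us ``fill in'' the $A\cap B$-part of candidate smaller models, while $\Delta_{s_2}^r$ guarantees that no spurious collapse happens on $\overline{A}$. The bookkeeping that the non-total $B$-SE-models of $P$ surviving the projection are precisely those captured by $\bigcap \mathcal{R}^Y_{\htm{P,A,B}} = \SE^B_A(P)$ is where the failure of $\Omega_{A,B}$ is essential: it is what makes the ``least element'' exist so that minimality of models transfers correctly between $P\cup R$ and $Q\cup R_1$. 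This mirrors, step for step, the proof of the unrelativized Theorem~\ref{prop:se-deltas}/Theorem~\ref{thm:strong-project}, with $\SE(\cdot)$ replaced by $\SE^B(\cdot)$ and projections read through $\SE^B_A$.

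The main obstacle, I expect, is the minimality transfer in the non-total direction: showing that whenever $\htm{Z,Z}$ is total in $\SE^{B\setminus A}(Q\cup R_1)$ but fails to be an answer set because of some non-total $(B\setminus A)$-SE-model $\htm{W,Z}$, one can lift $\htm{W,Z}$ to a genuine non-total $B$-SE-model of $P\cup R$ projecting to it (and conversely). The forward lift is delicate precisely when $A\nsubseteq B$, because the witness below $\htm{Y\cup A',Y\cup A'}$ in $\SE^B(P)$ must be chosen uniformly across all admissible $A'\subseteq A$ --- this is the content of the intersection $\bigcap \mathcal{R}^Y_{\htm{P,A,B}}$ having a least element, i.e.\ of $\neg\Omega_{A,B}$, combined with $\Delta_{s_3}^r$ to reinstate the $A\cap B$ coordinates and $\Delta_{s_2}^r$ to keep the lifted model non-total. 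Once that lifting lemma is in place (it is essentially the relativized analogue of the argument behind Proposition~\ref{prop:omega-a-b-se} and Proposition~\ref{prop:se-q-rel-1}), the equality of answer-set projections follows by the standard total-model/non-total-model characterization, and $P$ is $B$-relativized $A$-simplifiable with witness $Q$.
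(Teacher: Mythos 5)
Your proposal follows essentially the same route as the paper: take a program $Q$ whose $(B\setminus A)$-SE-models over $\overline{A}$ realize $\SE^B_A(P)$, i.e.\ satisfy \eqref{eq:se_rel_proj}, and then verify both inclusions of \eqref{eq:s-rel} by transferring total and non-total models between $P\cup R$ and $Q\cup R_{|\overline{A}}$, with $\Delta_{s_1}^r$--$\Delta_{s_3}^r$ handling the $A\cap B$ coordinates and the failure of $\Omega_{A,B}$ supplying the least element that makes the minimality transfer work. The ``lifting lemma'' you flag as the main obstacle is precisely what the paper isolates beforehand as a separate statement about $A$-separated programs over $B$ (Lemma~\ref{prop:all}) and then exploits in its two-case verification.
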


The result is obtained by proving that a program $Q$ satisfying \eqref{eq:se_rel_proj}
is a relativized simplification of $P$.

Thus we have the necessary and sufficient conditions on $B$-relativized $A$-simplifiability, while also providing a characterization 
for such simplifications, when exist.

\begin{cor}\label{cor:main}
$P$ satisfies $\Delta^r$ and does not satisfy criterion $\Omega_{A,B}$ iff $P$ is $B$-relativized $A$-simplifiable.
\end{cor}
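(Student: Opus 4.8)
The plan is to simply assemble the two directions from results already established. The forward direction---that $\Delta^r$ together with the failure of $\Omega_{A,B}$ implies $B$-relativized $A$-simplifiability---is exactly Theorem~\ref{thm:se-q-rel}, so nothing new is needed there. The backward direction---that $B$-relativized $A$-simplifiability implies both $\Delta^r$ and the failure of $\Omega_{A,B}$---splits into two halves: Proposition~\ref{prop:se-rel-deltas} gives that any program admitting an $A$-simplification relative to $B$ satisfies $\Delta^r$, and the contrapositive of Proposition~\ref{prop:criterion} gives that such a program cannot satisfy $\Omega_{A,B}$. Conjoining these yields the claim.

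Concretely, I would write: ($\Leftarrow$) Suppose $P$ is $B$-relativized $A$-simplifiable, so there exists a program $Q$ with $\AS(P\cup R)_{|\overline{A}}=\AS(Q\cup R_{|\overline{A}})$ for all $A$-separated $R$ over $B$. By Proposition~\ref{prop:se-rel-deltas}, $P$ satisfies $\Delta^r_{s_1}$, $\Delta^r_{s_2}$ and $\Delta^r_{s_3}$, i.e.\ $P$ satisfies $\Delta^r$. Moreover, by Proposition~\ref{prop:criterion}, if $P$ satisfied criterion $\Omega_{A,B}$ then no $B$-relativized $A$-simplification could exist, contradicting the assumption; hence $P$ does not satisfy $\Omega_{A,B}$. ($\Rightarrow$) This is precisely Theorem~\ref{thm:se-q-rel}: if $P$ satisfies $\Delta^r$ and does not satisfy $\Omega_{A,B}$, then $P$ is $B$-relativized $A$-simplifiable.

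I do not expect any genuine obstacle here, since the corollary is a bookkeeping combination of Theorem~\ref{thm:se-q-rel}, Proposition~\ref{prop:se-rel-deltas} and Proposition~\ref{prop:criterion}; the only thing to be careful about is stating the two directions in the right logical form (in particular using the contrapositive of Proposition~\ref{prop:criterion} in the backward direction) and noting that the conjunction ``$\Delta^r$ and not $\Omega_{A,B}$'' is exactly the hypothesis of Theorem~\ref{thm:se-q-rel} and exactly the conclusion derived from simplifiability in the other direction, so the biconditional closes cleanly.
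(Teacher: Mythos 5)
Your proposal is correct and matches the paper's intended argument exactly: the corollary is stated without a separate proof precisely because it is the conjunction of Theorem~\ref{thm:se-q-rel} (sufficiency) with Proposition~\ref{prop:se-rel-deltas} and the contrapositive of Proposition~\ref{prop:criterion} (necessity). Nothing further is needed.
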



\subsection{The Full Spectrum}

We now discuss how the notion of relativized simplification captures all the related notions from the literature (shown in Table~\ref{fig:list}), by looking at the special cases of removing atoms and relativization.

 When $A\,{=}\,\emptyset$, the $\Delta^r$ conditions are trivially satisfied, and as $SE^B_{\emptyset}(P)\,{=}\,SE^B(P)$, the criterion $\Omega_{\emptyset,B}$ is not satisfied for any $B$. Also any program is $\emptyset$-separated. Then $B$-relativized $\emptyset$-simplification simply amounts to relativized strong equivalence \cite{DBLP:conf/jelia/Woltran04}. If, in addition, we set $B\,{=}\, \Lits$, we reach strong equivalence. 

\begin{prop} 
$P$ and $Q$ are strong equivalent relative to $B$ iff $Q$ is a $\emptyset$-simplification relative to $B$.
Every program is $B$-relativized $\emptyset$-simplifiable, for any $B$.
\end{prop}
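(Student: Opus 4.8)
The statement has two parts. The plan is to handle them separately, with the first part being the substantive claim and the second an easy corollary of earlier results.

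\textbf{Part 1: $P$ and $Q$ are strong equivalent relative to $B$ iff $Q$ is a $\emptyset$-simplification of $P$ relative to $B$.} First I would specialize Definition~\ref{defn:strong-rel-omit} to $A = \emptyset$. With $A = \emptyset$ we have $\Lits \setminus A = \Lits$, so $Q$ is a program over $\Lits$; the projection operations $(\cdot)_{|\overline{A}}$ and $R_{|\overline{A}}$ all become the identity (nothing is projected away); and crucially every program $R$ over $B$ is $\emptyset$-separated by default, since the trivial decomposition $R = R \cup \emptyset$ works (with $R$ over $\Lits \setminus \emptyset = \Lits \supseteq B$ and $\emptyset$ over $A = \emptyset$). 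Hence Equation~\eqref{eq:s-rel} collapses exactly to $\AS(P \cup R) = \AS(Q \cup R)$ for all programs $R$ over $B$, which is verbatim the definition of strong equivalence relative to $B$ recalled right after Definition~\ref{defn:rel-se} (equivalently, $\SE^B(P) = \SE^B(Q)$). So the two notions literally coincide. I expect this to be essentially a definition-unfolding argument; the only point requiring a sentence of care is the observation that the $A$-separatedness requirement is vacuous when $A = \emptyset$, so no context programs are lost.

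\textbf{Part 2: every program is $B$-relativized $\emptyset$-simplifiable, for any $B$.} Here I would invoke Corollary~\ref{cor:main}, which states that $P$ is $B$-relativized $A$-simplifiable iff $P$ satisfies $\Delta^r$ and does not satisfy criterion $\Omega_{A,B}$. It then suffices to check, for $A = \emptyset$, that (i) $\Delta^r$ holds trivially and (ii) $\Omega_{\emptyset, B}$ fails. For (i): $\Delta_{s_1}^r$ and $\Delta_{s_3}^r$ become vacuous since $A \cap B = \emptyset$, and $\Delta_{s_2}^r$ is vacuous since $X_{|\overline{A}} = X$ and $Y_{|\overline{A}} = Y$, so $X_{|\overline{A}} = Y_{|\overline{A}}$ already forces $X = Y$. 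For (ii): when $A = \emptyset$ we have $\SE^B_\emptyset(P) = \SE^B(P)$ (as already noted in the excerpt's discussion of the full spectrum), and more directly $\Omega_{\emptyset,B}$ asks for $Y \subseteq \Lits$ such that $\mathcal{R}^Y_{\htm{P,\emptyset,B}}$ is non-empty with no least element; but with $A = \emptyset$ the only choice is $A' = \emptyset$, so this family is a singleton $\{\{X \mid \htm{X,Y} \in \HT^B(P)\}\}$ whenever $\htm{Y,Y} \in \HT^B(P)$, and a singleton trivially has a least element. Hence $\Omega_{\emptyset,B}$ is never satisfied, and Corollary~\ref{cor:main} gives the claim. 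Alternatively, and even more simply, Part 2 follows from Part 1 by taking $Q := P$, which is trivially strong equivalent to itself relative to $B$; I would mention this as a one-line alternative.

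The main obstacle, such as it is, is purely expository rather than mathematical: making sure the reader sees that $A = \emptyset$ makes the $A$-separatedness side-condition in Definition~\ref{defn:strong-rel-omit} disappear, so that the quantification ``for all $R$ over $B$ that is $A$-separated'' really does become ``for all $R$ over $B$'' without restriction. Everything else is immediate from the definitions and from Corollary~\ref{cor:main}.
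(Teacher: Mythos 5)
Your proposal is correct and follows essentially the same route as the paper, whose justification is the discussion immediately preceding the proposition: with $A=\emptyset$ every program is $\emptyset$-separated and all projections are the identity, so the definition collapses to relativized strong equivalence, while $\Delta^r$ holds trivially, $\Omega_{\emptyset,B}$ fails (since $\SE^B_\emptyset(P)=\SE^B(P)$), and Corollary~\ref{cor:main} yields universal simplifiability. Your one-line alternative for Part~2 (take $Q:=P$) is also valid and arguably even more direct.
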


For $B=\emptyset$, the $\emptyset$-SE models only contain the total models of the answer sets, which captures the notion of faithful omission abstraction. 
Moreover $\Delta^r$ is trivially satisfied and  $\Omega_{A,\emptyset}$ is not satisfied for any $A$.

\begin{prop} \label{prop:faithful}
$Q$ is an $A$-simplification of $P$ relative to $\emptyset$ iff $Q$ is a faithful abstraction of $P$ for omission of $A$.
Every program is $\emptyset$-relativized $A$-simplifiable, for any $A$.
\end{prop}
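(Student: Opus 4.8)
The plan is to settle the two assertions separately, both obtained by spelling out Definition~\ref{defn:strong-rel-omit} in the degenerate case $B=\emptyset$. For the equivalence, the first step is to observe that the only programs $R$ over the empty set of atoms are $R=\emptyset$ and the singleton $R=\{\bot\}$ consisting of the empty constraint, and both are trivially $A$-separated. The instance $R=\{\bot\}$ is harmless: neither $P\cup\{\bot\}$ nor $Q\cup\{\bot\}$ has a model, so both sides of \eqref{eq:s-rel} equal $\emptyset$ (using $R_{|\overline A}=R$ here), and \eqref{eq:s-rel} holds vacuously. Hence $Q$ is an $A$-simplification of $P$ relative to $\emptyset$ if and only if \eqref{eq:s-rel} holds for $R=\emptyset$, that is, $\AS(P)_{|\overline A}=\AS(Q)$. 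Since $Q$ is required to range over $\Lits\setminus A$ in both Definition~\ref{defn:strong-rel-omit} and Definition~\ref{def:abs}, this last equation is exactly the condition for $Q$ to be a faithful omission abstraction of $P$ w.r.t.\ $A$ (Definition~\ref{def:abs} together with the remark following it).

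For the second assertion I would go through Corollary~\ref{cor:main}. As $B=\emptyset\subseteq\overline A$, Proposition~\ref{prop:delta_forget} already gives that $P$ satisfies $\Delta^r$, so it remains to rule out criterion $\Omega_{A,\emptyset}$. The key point is that for $B=\emptyset$ there are no non-total $\emptyset$-SE-interpretations (nothing is a proper subset of $Y\cap\emptyset$), so $\HT^{\emptyset}(P)=\{\htm{Y,Y}\mid Y\in\AS(P)\}$. Consequently, for any $Y\subseteq\Lits\setminus A$ and any $A'\subseteq A$ with $\htm{Y\cup A',Y\cup A'}\in\HT^{\emptyset}(P)$, the only pair of the form $\htm{X,Y\cup A'}$ in $\HT^{\emptyset}(P)$ is the total one, whence $\{X\setminus A\mid \htm{X,Y\cup A'}\in\HT^{\emptyset}(P)\}=\{(Y\cup A')\setminus A\}=\{Y\}$ (as $Y\cap A=\emptyset$). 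Thus $\mathcal R^{Y}_{\htm{P,A,\emptyset}}$ is either empty or the singleton $\{\{Y\}\}$; in neither case is it non-empty without a least element, so $\Omega_{A,\emptyset}$ fails for every $A$, and Corollary~\ref{cor:main} yields that $P$ is $\emptyset$-relativized $A$-simplifiable.

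Alternatively, one can exhibit a witness directly: over the finitely many atoms occurring in $\AS(P)_{|\overline A}$, take $Q$ to consist of the rules $a\la \mi{not}\ \mi{not}\ a$ for each such atom $a$, together with one constraint forbidding each interpretation over these atoms that is not in $\AS(P)_{|\overline A}$; a routine check gives $\AS(Q)=\AS(P)_{|\overline A}$, and the first part then finishes the argument. This whole proposition is essentially bookkeeping; the only spots that need care are the treatment of the constraint-only context program $\{\bot\}$ in the first part and, for the direct construction, the fact (standard, but relying essentially on double negation, since answer sets of ELPs need not form an antichain) that an arbitrary finite family of interpretations over a finite signature is the answer-set collection of some program.
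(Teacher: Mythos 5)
Your proof is correct and follows essentially the same route as the paper, which justifies this proposition by exactly the observations you elaborate: for $B=\emptyset$ the context programs are trivial and the $\emptyset$-SE-models are precisely the total pairs $\htm{Y,Y}$ for $Y\in\AS(P)$, so \eqref{eq:s-rel} collapses to $\AS(P)_{|\overline A}=\AS(Q)$, while $\Delta^r$ holds by Proposition~\ref{prop:delta_forget} and $\Omega_{A,\emptyset}$ fails because each $\mathcal R^{Y}_{\htm{P,A,\emptyset}}$ is empty or a singleton. Your explicit handling of the constraint-only context $\{\bot\}$ and the direct witness construction are fine additional bookkeeping but not a different argument.
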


When we additionally set $A$ to $\emptyset$, then we reach equivalence of two programs.

Setting $A=\Lits$ results in omitting all the atoms from the program. Thus as potential relativized simplifications over $\emptyset$ we have either $Q=\emptyset$ or $Q'=\{\bot \leftarrow.\}$. From above we know that every program is $\emptyset$-relativized $\Lits$-simplifiable. So a satisfiable program has $Q$ as its relativized simplification, while an unsatisfiable program has $Q'$. Though for $B\neq \emptyset$, relativized simplifiability might not always hold.



\nop{
Note that we do not give restrictions on $Q$.

\begin{prop}
Let $f$ be a forgetting operator on $\Lits$. For a program $P$ and a set $A$ of atoms, $f(P,A)$ is an $A$-simplification of $P$ relative to $\overline{A}$ iff $f$ satisfies $\SP_{\htm{P,A}}$.
\end{prop}

We see that when $B=\overline{A}$, $A \cap B=\emptyset$ thus $\Delta_{s_1}^r$ and $\Delta_{s_3}^r$ are trivially satisfied. Item (ii) of Definition~\ref{defn:rel-se} ensures that $\Delta_{s_2}^{\overline{A}}$ is satisfied. Thus we conclude the following. 

\begin{prop}
There is a forgetting operator satisfying $\SP_{\htm{P,A}}$ for every (DLP) $P$ and $A$ \textcolor{red}{whenever $SE^B(P)_{|\overline{A}}$ satisfies the DLP conditions}.
\end{prop}
}

When the context programs are set to be over the remaining atoms, i.e., $B=\overline{A}$, we reach $\SP$-forgetting. In the next section we introduce a relativization of $\SP$-forgetting to consider the case of $B \subseteq \overline{A}$, which will be needed in defining operators that obtain the relativized simplifications.

\nop{
\mycomment{Maybe omit the below discussion, or move to somewhere else}
Remark that in \cite{zgsswkr23} any DLP has a strong simplification that is also a DLP. This is obtained due to having the context programs over the whole vocabulary, thus also containing all of the atoms $A$ to be removed. However, when lifting the notions to the relativized case, when $A \nsubseteq B$, not every DLP might have a $B$-relativized strong $A$-simplification that is also a DLP. The reasoning is similar to the result from \cite{gonccalves2016you} on the fact that there is no forgetting operator over the class of DLP that satisfies $\SP$.
}

\section{Combination of Projection and Forgetting}\label{sec:op}

\nop{
First let us show that the concept of $A$-$B$-SE-models also achieves the set of SE-models for the when $B=\overline{A}$.

\begin{prop}
$f \in F_{\textup{SP}}$ iff $\SE(f(P,A))=\SE^{\overline{A}}_A(P)$.
\end{prop}

Remark that in the above result, we make use of the fact that $\SE^{\overline{A},\overline{A}}(f(P,A))$ is the same as $\SE(f(P,A))$ over $\overline{A}$. 
In Section~\ref{sec:relsp}, we additionally consider the possibility of $B\subseteq \overline{A}$ which requires a new forgetting notion.
}

In this section we introduce an operator that can achieve relativized simplifications. As we know, whenever a relativized simplification exists, it satisfies the equation \eqref{eq:se_rel_proj}. Following the notation from forgetting operators, we introduce a class of operators that achieves these simplifications.  
For this, instead of a forgetting instance $\htm{P,A}$ we consider \emph{relativized forgetting instance} $\htm{P,A,B}$.

\begin{defn}
Let $F_{\textup{rSS}}$ be the class of forgetting operators defined by the following set:
$\{f \mid \SE^{\overline{A},B\setminus A}(f(P,A,B))=\{\htm{X,Y} \mid Y\subseteq \Lits\setminus A \wedge X \in \bigcap \mathcal{R}^Y_{\htm{P,A,B}}\}\}$.
\end{defn}

Proposition~\ref{prop:omega-a-b-se} and Corollary~\ref{cor:main} leads to the following.

\begin{cor}\label{cor:rss}
For $f {\in} F_{\textup{rSS}}$, $f(P,A,B)$ is a $B$-relativized $A$- simplification of $P$ iff $P$ is $B$-relativized $A$-simplifiable.
\end{cor}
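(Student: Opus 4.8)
The plan is to combine the two previously established facts — the semantic characterization of relativized simplifiability (Corollary~\ref{cor:main}) and the semantic characterization of the operator class $F_{\textup{rSS}}$ — via the bridge supplied by Proposition~\ref{prop:omega-a-b-se}. Concretely, I would first recall that, by definition of $F_{\textup{rSS}}$, for $f \in F_{\textup{rSS}}$ we have $\SE^{\overline{A},B\setminus A}(f(P,A,B)) = \{\htm{X,Y} \mid Y\subseteq \Lits\setminus A \wedge X \in \bigcap \mathcal{R}^Y_{\htm{P,A,B}}\}$. The whole argument then hinges on deciding, according to whether $P$ is $B$-relativized $A$-simplifiable, whether this set coincides with $\SE^B_A(P)$.

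For the ``if'' direction, suppose $P$ is $B$-relativized $A$-simplifiable. By Corollary~\ref{cor:main} this means $P$ satisfies $\Delta^r$ and does not satisfy criterion $\Omega_{A,B}$. Since $\Omega_{A,B}$ fails, Proposition~\ref{prop:omega-a-b-se} gives $\{\htm{X,Y} \mid Y\subseteq \Lits\setminus A \wedge X \in \bigcap \mathcal{R}^Y_{\htm{P,A,B}}\} = SE^B_A(P)$, so $\SE^{\overline{A},B\setminus A}(f(P,A,B)) = SE^B_A(P)$, i.e.\ $f(P,A,B)$ satisfies \eqref{eq:se_rel_proj}. It remains to argue that a program whose relativized SE-models match $SE^B_A(P)$ in this way actually realizes equation \eqref{eq:s-rel}; but this is exactly what is established in the proof of Theorem~\ref{thm:se-q-rel} (``a program $Q$ satisfying \eqref{eq:se_rel_proj} is a relativized simplification of $P$''), so $f(P,A,B)$ is a $B$-relativized $A$-simplification of $P$.

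For the ``only if'' direction, suppose $P$ is \emph{not} $B$-relativized $A$-simplifiable. By Corollary~\ref{cor:main}, $P$ either fails $\Delta^r$ or satisfies $\Omega_{A,B}$. In the latter case, Proposition~\ref{prop:omega-a-b-se} gives that $\bigcap \mathcal{R}^Y_{\htm{P,A,B}}$ collapses the left-hand set to $\emptyset$ while $SE^B_A(P)\neq\emptyset$, so $f(P,A,B)$ cannot satisfy \eqref{eq:se_rel_proj}; in particular, by Proposition~\ref{prop:se-q-rel-1} (contrapositive), $f(P,A,B)$ is not an $A$-simplification of $P$ relative to $B$. In the former case — $P$ fails $\Delta^r$ — Proposition~\ref{prop:se-rel-deltas} states that any relativized simplification forces $\Delta^r$, so again no program, and in particular not $f(P,A,B)$, can be a $B$-relativized $A$-simplification of $P$. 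Either way the right-hand side of the biconditional fails, completing the equivalence.

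The main obstacle I anticipate is the ``if'' direction's final step: going from ``$f(P,A,B)$ has the right relativized SE-models'' to ``$f(P,A,B)$ genuinely satisfies \eqref{eq:s-rel} for all $A$-separated $R$ over $B$''. This is the content hidden inside the proof of Theorem~\ref{thm:se-q-rel}, and one must be careful that the characterization used there is stated for a generic $Q$ with $SE^{\overline{A},B\setminus A}(Q)=SE^B_A(P)$ rather than a specific construction, so that it applies verbatim to $Q=f(P,A,B)$. Provided Theorem~\ref{thm:se-q-rel} is indeed proved at that level of generality (the excerpt's remark ``a program $Q$ satisfying \eqref{eq:se_rel_proj} is a relativized simplification of $P$'' suggests it is), the corollary follows with essentially no further computation — it is a bookkeeping combination of Corollary~\ref{cor:main}, Proposition~\ref{prop:omega-a-b-se}, and the definition of $F_{\textup{rSS}}$.
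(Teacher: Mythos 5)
Your proposal is correct and follows essentially the same route as the paper, which derives the corollary directly from Proposition~\ref{prop:omega-a-b-se} and Corollary~\ref{cor:main}. You rightly flag that the ``if'' direction additionally needs the fact that \emph{any} program satisfying \eqref{eq:se_rel_proj} is a relativized simplification --- this is exactly the level of generality at which the proof of Theorem~\ref{thm:se-q-rel} is carried out, so your argument goes through.
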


Note that above class of operators is similar to $F_{\textup{SP}}$, where instead of $\mathcal{R}^Y_{\htm{P,A}}$ we focus on $\mathcal{R}^Y_{\htm{P,A,B}}$, and instead of giving the characterization over the SE-models of the resulting program, we consider its $B$-SE models.

Clearly when $B=\overline{A}$, since ${\cal R}^{Y}_{\htm{P,A,\overline{A}}}={\cal R}^{Y}_{\htm{P,A}}$, the resulting program after applying an 
operator in $F_{\textup{rSS}}$ is 
strongly equivalent to the result of an \SP-forgetting operator.

\begin{prop}
Let $P$ be a program, $A \subseteq \Lits$, $f_{SP}\in F_{\textup{SP}}$, $f_{rSS} \in F_{\textup{rSS}}$. Then 
$f_{SP}(P,A) \equiv f_{rSS}(P,A,\overline{A})$.
\end{prop}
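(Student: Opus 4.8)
The plan is to connect the two classes of operators through their semantical characterizations, using the earlier observation that $\mathcal{R}^Y_{\htm{P,A,\overline{A}}} = \mathcal{R}^Y_{\htm{P,A}}$. First I would recall that, by definition of $F_{\textup{SP}}$ (as stated in the background via the result of \citeauthor{gonccalves2016you}), an operator $f_{SP}\in F_{\textup{SP}}$ produces a program whose SE-models are exactly $\{\htm{X,Y} \mid Y\subseteq \Lits\setminus A \wedge X \in \bigcap \mathcal{R}^Y_{\htm{P,A}}\}$, a set in which every $Y$ satisfies $Y\subseteq\overline{A}$. On the other side, by definition of $F_{\textup{rSS}}$, an operator $f_{rSS}\in F_{\textup{rSS}}$ applied to $\htm{P,A,\overline{A}}$ yields a program whose $(\overline{A}\setminus A)$-SE-models, i.e.\ its $\overline{A}$-SE-models, equal $\{\htm{X,Y} \mid Y\subseteq \Lits\setminus A \wedge X \in \bigcap \mathcal{R}^Y_{\htm{P,A,\overline{A}}}\}$.

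The key step is then to substitute $\mathcal{R}^Y_{\htm{P,A,\overline{A}}} = \mathcal{R}^Y_{\htm{P,A}}$ so that the two characterizing sets coincide. What remains is to argue that equality of these SE-model sets implies strong equivalence of the two programs. Both $f_{SP}(P,A)$ and $f_{rSS}(P,A,\overline{A})$ are programs over $\overline{A}$. For $f_{SP}(P,A)$ we have its full set $\SE(\cdot)$ pinned down; for $f_{rSS}(P,A,\overline{A})$ we only directly know its relativized $\overline{A}$-SE-models $\SE^{\overline{A}}(\cdot)$. So I would note that for a program over $\overline{A}$, being over $\overline{A}$ means its $\overline{A}$-SE-models determine it up to strong equivalence (relativized strong equivalence to $\overline{A}$ coincides with strong equivalence for programs whose atoms all lie in $\overline{A}$); alternatively, one observes that $\SE^{\overline{A}}(f_{rSS}(P,A,\overline{A}))$ being of the stated form — with all pairs $\htm{X,Y}$ having $X\subseteq Y\subseteq\overline{A}$ and the non-total pairs being exactly the "small" members of the intersection — is precisely the shape of $\SE$ of an $\SP$-forgetting result, so $f_{rSS}(P,A,\overline{A})\in F_{\textup{SP}}$-output up to $\equiv$, giving $f_{SP}(P,A)\equiv f_{rSS}(P,A,\overline{A})$ since any two outputs of $\SP$-forgetting on the same instance are strongly equivalent.

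The main obstacle I anticipate is the bookkeeping at the level of non-total SE-interpretations: $F_{\textup{rSS}}$ characterizes $f_{rSS}$ via \emph{$\overline{A}$-SE-models}, which by Definition~\ref{defn:rel-se} admit non-total pairs $\htm{X,Y}$ only with $X\subset (Y\cap\overline{A})=Y$ (since $Y\subseteq\overline{A}$), whereas the characterization of $F_{\textup{SP}}$ is stated via ordinary SE-models. I would need to check carefully that, for programs over $\overline{A}$, these two model sets carry the same information — i.e.\ that no non-total SE-model $\htm{X,Y}$ with $Y\not\subseteq\overline{A}$ or with $X$ containing atoms outside $\overline{A}$ is lost — which is immediate because both programs mention only atoms in $\overline{A}$, so every SE-model of each is determined by its restriction to $\overline{A}$ and conversely. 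Once that identification is in place, the substitution $\mathcal{R}^Y_{\htm{P,A,\overline{A}}} = \mathcal{R}^Y_{\htm{P,A}}$ closes the argument.
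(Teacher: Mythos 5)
Your proposal is correct and takes essentially the same route as the paper, which justifies the proposition precisely by the identity $\mathcal{R}^{Y}_{\htm{P,A,\overline{A}}}=\mathcal{R}^{Y}_{\htm{P,A}}$ making the two semantic characterizations coincide. Your extra bookkeeping—checking that for programs over $\overline{A}$ the $\overline{A}$-SE-models and ordinary SE-models carry the same information, so that equality of the characterizing sets yields strong equivalence—is exactly the detail the paper leaves implicit, and it is handled correctly.
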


Thus any operator in $F_{\textup{rSS}}$ can be applied as an \SP-forgetting operator. As we shall show next, the forgetting operators in $F_{\textup{rSS}}$ can also be used to achieve a relaxed notion of \SP-forgetting.

\subsection{Relativized Strong Persistence}\label{sec:relsp}

We introduce a relaxed \SP-forgetting notion, where non-forgotten atoms can be excluded from the context program.

\begin{defn}
A forgetting operator $f$ satisfies \emph{relativized strong persistence for a relativized forgetting instance $\htm{P,A,S}$, $S \,{\subseteq}\, \overline{A}$}, denoted by \rSP$_{\htm{P,A,S}}$,  if for all programs  $R$ over $S$, $AS(f(P,A,S)\,{\cup}\, R) \,{=}\, AS(P \,{\cup}\, R)_{|\overline{A}}$ .
%
\end{defn}

Above definition naturally leads to the following.
\begin{prop}
If a forgetting operator $f$ satisfies \SP$_{\htm{P,A}}$ then it satisfies \rSP$_{\htm{P,A,S}}$, for any $S\subseteq \overline{A}$.
\end{prop}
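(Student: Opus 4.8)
The plan is to unwind the two definitions and observe that $\rSP$ imposes a strictly weaker quantification than $\SP$. Recall that $f$ satisfying $\SP_{\htm{P,A}}$ means $\AS(f(P,A)\cup R)=\AS(P\cup R)_{|\overline{A}}$ for \emph{every} program $R$ over $\overline{A}$, whereas $f$ satisfying $\rSP_{\htm{P,A,S}}$ with $S\subseteq\overline{A}$ requires the same equality only for programs $R$ over $S$, using the result $f(P,A,S)$ of the (relativized) forgetting instance $\htm{P,A,S}$.

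First I would fix the bookkeeping of the operator's signature: since a plain $\SP$-forgetting operator does not depend on a relativization parameter, we read $f(P,A,S)$ as $f(P,A)$ (equivalently, $f$ is lifted to relativized instances by ignoring the third argument), which is the reading under which the claim is stated. Under this identification the goal reduces to showing $\AS(f(P,A)\cup R)=\AS(P\cup R)_{|\overline{A}}$ for every program $R$ over $S$.

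Second, I would invoke the containment $S\subseteq\overline{A}$: any program all of whose atoms lie in $S$ is in particular a program over $\overline{A}$. Hence such an $R$ is among the context programs quantified over in the definition of $\SP_{\htm{P,A}}$, so the desired equality is a direct instance of that definition. This establishes $\rSP_{\htm{P,A,S}}$, and since $S\subseteq\overline{A}$ was arbitrary, the claim follows.

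There is essentially no obstacle here; the only point that needs care is making explicit that ``it'' refers to the same operator $f$ with its output on $\htm{P,A,S}$ taken to be $f(P,A)$, after which the statement is immediate from the inclusion of the relevant classes of context programs.
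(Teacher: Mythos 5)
Your proof is correct and matches the paper's (implicit) argument: the paper states this proposition as following ``naturally'' from the definitions, i.e., exactly the observation that every program over $S\subseteq\overline{A}$ is a program over $\overline{A}$, so the \SP\ equality specializes. Your extra care about reading $f(P,A,S)$ as $f(P,A)$ is a reasonable clarification but does not change the substance.
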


%
%
%
%
In fact, every operator in $F_{\textup{rSS}}$ satisfies \rSP$_{\htm{P,A,S}}$, when possible, which we get by Proposition~\ref{prop:delta_forget} and Corollary~\ref{cor:rss}. 

\begin{thm}
Every $f \in F_{\textup{rSS}}$ satisfies \rSP$_{\htm{P,A,S}}, S\subseteq \overline{A}$, for every relativized forgetting instance $\htm{P,A,S}$, where $P$ does not satisfy $\Omega_{A,S}$. 
\end{thm}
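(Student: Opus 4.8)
The plan is to reduce the claim to the characterization of relativized simplifiability established earlier, essentially lifting Proposition~\ref{prop:sp} from $B=\overline A$ to arbitrary $S\subseteq\overline A$. First I would observe that, since $S\subseteq\overline A$, every program $R$ over $S$ contains no atom from $A$; hence $R$ is trivially $A$-separated (write $R=R_1\cup R_2$ with $R_1=R$ over $\Lits\setminus A$ and $R_2=\emptyset$ over $A$), and moreover $R_{|\overline A}=R$. Consequently the defining equation of an $A$-simplification of $P$ relative to $S$, namely $AS(P\cup R)_{|\overline A}=AS(Q\cup R_{|\overline A})$ for all $A$-separated $R$ over $S$, collapses exactly to $AS(P\cup R)_{|\overline A}=AS(Q\cup R)$ for all $R$ over $S$. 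Therefore $f(P,A,S)$ satisfies \rSP$_{\htm{P,A,S}}$ if and only if $f(P,A,S)$ is an $A$-simplification of $P$ relative to $S$.

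Next I would assemble the characterization results with $B=S$. By Proposition~\ref{prop:delta_forget}, since $S\subseteq\overline A$ the program $P$ satisfies $\Delta^r$ (with respect to $A$ and $S$). By hypothesis $P$ does not satisfy criterion $\Omega_{A,S}$. Hence by Corollary~\ref{cor:main} the program $P$ is $S$-relativized $A$-simplifiable.

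Finally, for $f\in F_{\textup{rSS}}$, Corollary~\ref{cor:rss} (with $B=S$) yields that $f(P,A,S)$ is an $S$-relativized $A$-simplification of $P$, precisely because $P$ is $S$-relativized $A$-simplifiable. Combining this with the reduction from the first step, $f(P,A,S)$ satisfies \rSP$_{\htm{P,A,S}}$, which is the claim.

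I do not expect a genuine obstacle here beyond bookkeeping; the only point that deserves care is the first step, i.e.\ verifying that on a context vocabulary $S\subseteq\overline A$ the projection $(\cdot)_{|\overline A}$ acts as the identity on context programs and that $A$-separation is automatic, so that \rSP\ and relativized $A$-simplification genuinely coincide on such instances. Once that identification is in place, the statement is an immediate consequence of Proposition~\ref{prop:delta_forget} together with Corollaries~\ref{cor:main} and~\ref{cor:rss}.
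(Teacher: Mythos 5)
Your proposal is correct and follows essentially the same route as the paper, which justifies the theorem precisely by combining Proposition~\ref{prop:delta_forget} (so $\Delta^r$ holds automatically for $S\subseteq\overline A$) with Corollaries~\ref{cor:main} and~\ref{cor:rss}. Your explicit first step — checking that for $R$ over $S\subseteq\overline A$ the program is automatically $A$-separated and $R_{|\overline A}=R$, so that \rSP$_{\htm{P,A,S}}$ coincides with being an $A$-simplification relative to $S$ — is left implicit in the paper but is exactly the right bookkeeping.
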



\subsection{An Operator that Projects and Forgets}

We begin with showing that as long as the $\Delta^r$ conditions are satisfied for those of $A$ which appear in $B$ it is possible to project them away\footnote{When projecting from ELPs, the atoms in $A$ are removed from the double negated body of the rules as well.} from $P$ while preserving the semantics. 

We observe that the relativized SE-models of the resulting program after the projecting away atoms occurring in $A\cap B$ equals the $B$-SE-models of $P$ projected onto the remaining atoms. This then also equals its $(A\cap B)$-$B$-SE-models.

\begin{prop}[$\star$]\label{prop:b-se-proj}
Let $P$  satisfy $\Delta^r$ for $A,B \subseteq \Lits$, and $A'=A\cap B$. Then it holds that
$$\SE^{\overline{A'},B\setminus A}(P_{|\overline{A'}})=\SE^B(P)_{|\overline{A'}}=\SE^B_{A'}(P).$$
\end{prop}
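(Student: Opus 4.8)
The plan is to prove the two equalities separately, exploiting the structure of the projection operator $P_{|\overline{A'}}$ and the fact that $A' = A \cap B$ consists exactly of the "forgettable" atoms that actually appear in the context language $B$. First I would establish the right-hand equality $\SE^B(P)_{|\overline{A'}} = \SE^B_{A'}(P)$. For this, note that since $P$ satisfies $\Delta^r$ — in particular $\Delta_{s_1}^r$, which forces $A \cap B \subseteq Y$ for every total $B$-SE-model $\htm{Y,Y}$ — there cannot be two distinct total $B$-SE-models agreeing on the projection onto $\overline{A'}$ (they would have to agree on all of $B$ since they both contain $A' = A\cap B$, and outside $B$ the $B$-SE-model definition via conditions (ii) and (iii) does not let total models differ in a way that survives). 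Hence the "for all $\htm{Y',Y'}$ with $Y'_{|\overline{A}} = Y_{|\overline{A}}$" clause in the definition of $\SE^B_{A'}(P)$ is vacuous-or-trivial, and $\SE^B_{A'}(P)$ collapses to exactly the projection $\SE^B(P)_{|\overline{A'}}$. This is essentially the same observation used in the sketch of the theorem preceding the "Characterizing relativized simplifiability" subsection, specialized to $A'$ in place of $A$ (and here $A' \subseteq B$ holds by construction).

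Next I would tackle the left-hand equality $\SE^{\overline{A'},B\setminus A}(P_{|\overline{A'}}) = \SE^B(P)_{|\overline{A'}}$. The containment $\subseteq$ and $\supseteq$ I would argue by unwinding Definition~\ref{defn:rel-se} of $B'$-SE-models for $B' = B \setminus A$ applied to the projected program $P_{|\overline{A'}}$, and comparing with the $B$-SE-models of $P$. The key syntactic fact is how the GL-reduct interacts with projection: for $Y \subseteq \overline{A'}$, the rules of $(P_{|\overline{A'}})^Y$ correspond to rules of $P^{Y \cup (A' )}$ with the $A'$-atoms stripped from positive (and double-negated) bodies — using $\Delta_{s_3}^r$ to ensure that adding $Y \cap A'$-style atoms back is harmless, and $\Delta_{s_2}^r$ to ensure the non-total witnesses behave. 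I would check that $Y \models P_{|\overline{A'}}$ iff $Y \cup A' \models P$ (given $\Delta_{s_1}^r$ guarantees the relevant total models contain $A'$), that the minimality condition (ii) transfers, and that the existence-of-$X'$ condition (iii) for $P_{|\overline{A'}}$ over $B \setminus A$ matches the existence of a non-total $B$-SE-model of $P$ projecting correctly — here $\Delta_{s_3}^r$ is what lets us freely re-attach the $A'$ part.

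The main obstacle I expect is the careful bookkeeping in the left equality around the interplay of three things at once: the projection operator's treatment of negative and double-negated bodies (rules are deleted, not just trimmed, when an $A$-atom appears there), the relativization of the context to $B \setminus A$ rather than all of $\overline{A'}$, and the three $\Delta^r$ conditions each playing a distinct role. In particular one must be careful that $\overline{A'} \setminus (B \setminus A)$ — atoms outside $B$ and not in $A'$ — are handled: these are atoms the projected program still mentions but the relativization ignores, and one needs $\Delta_{s_2}^r$/$\Delta_{s_3}^r$ to argue no spurious non-total models arise there. I would organize the argument as a chain of iff's: $\htm{X,Y} \in \SE^{\overline{A'}, B\setminus A}(P_{|\overline{A'}})$ iff there is $\htm{\hat X, \hat Y} \in \SE^B(P)$ with $\hat Y_{|\overline{A'}} = Y$, $\hat X_{|\overline{A'}} = X$ (and the total case separately), checking each of (i)–(iii) in both directions, and cite Theorem~\ref{thm:strong-project} / the reasoning behind Theorem~\ref{prop:se-deltas} as the non-relativized template to avoid re-deriving the reduct manipulations from scratch.
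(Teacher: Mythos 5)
Your proposal matches the paper's proof in both structure and substance: the right-hand equality is obtained exactly as in the paper (by $\Delta_{s_1}^r$ every total $B$-SE-model contains $A\cap B$, so no two distinct total $B$-SE-models share a projection onto $\overline{A'}$ and $\SE^B_{A'}(P)$ collapses to $\SE^B(P)_{|\overline{A'}}$), and the left-hand equality is the same reduct-versus-projection unwinding, with the paper handling your ``atoms outside $B\setminus A$ that the projected program still mentions'' worry as its Case~(2) via a corresponding non-total model of $P$. Your parenthetical about total models differing ``outside $B$'' is unnecessary (agreement on $\overline{A'}$ already covers $\Lits\setminus B$), but this does not affect the argument.
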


This observation leads to the following result.

\begin{cor}
Let $P$  satisfy $\Delta^r$ for $A,B \subseteq \Lits$, and $A'=A\cap B$. $P_{|\overline{A'}}$ is an $A'$-simplification of $P$ relative to $B$.
\end{cor}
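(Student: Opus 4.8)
The plan is to derive this directly from Proposition~\ref{prop:b-se-proj} together with the characterization of relativized simplifiability in Corollary~\ref{cor:main}, applied with $A$ replaced by $A' = A\cap B$. First I would observe that $P_{|\overline{A'}}$ is a program over $\Lits \setminus A'$, so it is a candidate $A'$-simplification in the sense of Definition~\ref{defn:strong-rel-omit}. The strategy is then to show two things: (1) $P$ is $B$-relativized $A'$-simplifiable, and (2) $P_{|\overline{A'}}$ in fact realizes this simplification, by matching the required SE-model equation \eqref{eq:se_rel_proj}.

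For step~(1): since $A' = A\cap B \subseteq B$, I can invoke the theorem stating that for $A'\subseteq B$, satisfying $\Delta^r$ (with respect to $A'$ and $B$) is equivalent to $B$-relativized $A'$-simplifiability. So I need that $P$ satisfies $\Delta^r$ for the pair $(A',B)$. The hypothesis gives $\Delta^r$ for $(A,B)$, and since the $\Delta^r_{s_i}$ conditions for $(A,B)$ only ever refer to $A\cap B = A'$ (look at $\Delta^r_{s_1}$: $A\cap B\subseteq Y$; $\Delta^r_{s_3}$: $Y\cap(A\cap B)$; and $\Delta^r_{s_2}$, which quantifies over $\SE(P)$ but whose conclusion constrains the difference between $X$ and $Y$ — here one checks that the $(A,B)$ and $(A',B)$ versions coincide because for $\htm{Y,Y}\in\SE^B(P)$ the relevant discrepancies lie within $B$), the conditions $\Delta^r$ for $(A',B)$ are literally the same as for $(A,B)$. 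Hence $P$ satisfies $\Delta^r$ for $(A',B)$, and by the cited theorem $P$ is $B$-relativized $A'$-simplifiable, so some simplification $Q$ exists, and by Proposition~\ref{prop:se-q-rel-1} it satisfies $\SE^B_{A'}(P) = \SE^{\overline{A'},B\setminus A'}(Q)$.

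For step~(2): Proposition~\ref{prop:b-se-proj} gives $\SE^{\overline{A'},B\setminus A}(P_{|\overline{A'}}) = \SE^B_{A'}(P)$. I would reconcile the two superscripts $B\setminus A$ and $B\setminus A'$: since $A' = A\cap B$, we have $B\setminus A = B\setminus A'$, so these context-atom sets agree. Therefore $P_{|\overline{A'}}$ has exactly the relativized SE-models prescribed by equation \eqref{eq:se_rel_proj} for an $A'$-simplification of $P$ relative to $B$. The final ingredient is that satisfying this SE-model equation is sufficient for being a relativized simplification — this is the substance of Theorem~\ref{thm:se-q-rel} (whose proof, as the excerpt says, shows precisely that a program $Q$ satisfying \eqref{eq:se_rel_proj} is a relativized simplification), applicable because $\Omega_{A',B}$ is not satisfied (it cannot be, since $P$ is $B$-relativized $A'$-simplifiable, by Proposition~\ref{prop:criterion}). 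Concluding, $P_{|\overline{A'}}$ is an $A'$-simplification of $P$ relative to $B$.

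The main obstacle I anticipate is the bookkeeping in step~(1): carefully verifying that the $\Delta^r$ conditions for $(A,B)$ genuinely entail — rather than merely resemble — the $\Delta^r$ conditions for $(A',B)$, especially for $\Delta^r_{s_2}$, which ranges over all of $\SE(P)$ and not just $\SE^B(P)$; one must check that replacing $A$ by $A'$ in the condition $X_{|\overline{A}} = Y_{|\overline{A}}$ does not weaken it in a way that matters given $\htm{Y,Y}\in\SE^B(P)$. Everything else is a matter of assembling already-established results (Proposition~\ref{prop:b-se-proj}, the $A\subseteq B$ theorem, Proposition~\ref{prop:se-q-rel-1}, and Theorem~\ref{thm:se-q-rel}), together with the elementary identity $B\setminus A = B\setminus(A\cap B)$.
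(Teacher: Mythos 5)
Your proposal is correct and follows essentially the route the paper intends: the corollary is stated as an immediate consequence of Proposition~\ref{prop:b-se-proj} combined with the already-established sufficiency of the SE-model equation \eqref{eq:se_rel_proj} (via the $A\subseteq B$ theorem, Proposition~\ref{prop:criterion}, and the content of Theorem~\ref{thm:se-q-rel}'s proof), and you have merely spelled out that chain. The one point you flag as a worry, transferring $\Delta^r$ from $(A,B)$ to $(A',B)$, is in fact immediate: $\Delta^r_{s_1}$ and $\Delta^r_{s_3}$ only mention $A\cap B=A'\cap B$, and the $(A',B)$ instance of $\Delta^r_{s_2}$ has a strictly stronger antecedent (agreement on $\overline{A'}\supseteq\overline{A}$ implies agreement on $\overline{A}$), so it is entailed by the $(A,B)$ instance.
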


This result shows that whenever $P$  satisfies $\Delta^r$, even if $\Omega_{A,B}$ criterion is satisfied, preventing $B$-relativized $A$-simplifiability, it is still possible to project away atoms in $B\cap A$ to reach a program with a reduced signature.

\nop{
Next we show that projection of the atoms in $B\cap A$ preserves relativized simplifiability.

\begin{prop}\label{prop:rel-proj}
    If $P$ is $B$-relativized $A$-simplifiable, then $P_{|\overline{A \cap B}}$ 
    is $(B {\setminus} A)$-relativized $(A{\setminus} B)$-simplifiable.
    \end{prop}
    \begin{proof}[Proof (Sketch)]
Since $\Delta^r$ is satisfied when $B\,{\cap}\, A\,{=}\,\emptyset$, we only need to show $P'=P_{|\overline{A \cap B}}$ violates $\Omega_{A\setminus B, B\setminus A}$.  
The set $\mathcal{R}^Y_{\htm{P',A\setminus B,B\setminus A}}$ collects all $(B\,{\setminus}\, A)$-SE-models that can be projected onto $\htm{\cdot,Y}$. By Proposition~\ref{prop:b-se-proj}, the $(B\setminus A)$-SE-models of $P'$  amount to $\SE^B(P)_{|\overline{A\cap B}}$, thus $\mathcal{R}^Y_{\htm{P',A\setminus B,B\setminus A}}$ can either be empty or have a least element, since same holds for $\mathcal{R}^Y_{\htm{P,A,B}}$.
%
%
\end{proof}
}

Interestingly, if a program is $B$-relativized $A$-simplifiable, obtaining the desired simplification is possible by first syntactically projecting away those atoms in $B\cap A$ and then applying an operator from $F_{\textup{rSS}}$ for \rSP-forgetting that forgets those atoms remaining outside of the context programs. 

\begin{thm}\label{thm:proj-rss}
    Let $P$ be $B$-relativized $A$-simplifiable, and $f \in F_{\textup{rSS}}$. Then $f(P_{|\overline{A \cap B}},A \setminus B,B\setminus A)$ is a $B$-relativized $A$-simplification of $P$. 
\end{thm}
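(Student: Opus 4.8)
The plan is to decompose the construction into its two stages and verify that their composition satisfies the defining equation \eqref{eq:s-rel} of a $B$-relativized $A$-simplification. Write $A' = A \cap B$ and $A'' = A \setminus B$, so that $\overline{A} = \overline{A'} \cap \overline{A''}$ and the target program is $Q = f(P_{|\overline{A'}}, A'', B\setminus A)$ for some $f \in F_{\textup{rSS}}$. First I would record that, since $P$ is $B$-relativized $A$-simplifiable, Corollary~\ref{cor:main} gives that $P$ satisfies $\Delta^r$ and does not satisfy $\Omega_{A,B}$; in particular the hypothesis of Proposition~\ref{prop:b-se-proj} is met. That proposition yields $\SE^{\overline{A'},B\setminus A}(P_{|\overline{A'}}) = \SE^B(P)_{|\overline{A'}}$, which is the crucial bridge: it tells us exactly which relativized SE-models the intermediate program $P_{|\overline{A'}}$ carries.

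The main work is to push the relativized-SP forgetting step through. I would argue that the intermediate program $P' := P_{|\overline{A'}}$, viewed over the universe $\overline{A'}$ with atoms to forget $A''$ and context set $S := B \setminus A \subseteq \overline{A'} \setminus A'' = \overline{A}$, does not satisfy $\Omega_{A'', S}$. This should follow from Proposition~\ref{prop:b-se-proj} together with Proposition~\ref{prop:omega-a-b-se}: the sets $\mathcal{R}^Y_{\htm{P',A'',S}}$ are built from the $S$-SE-models of $P'$, which by Proposition~\ref{prop:b-se-proj} are just $\SE^B(P)_{|\overline{A'}}$, and since $P$ fails $\Omega_{A,B}$ the collection $\{\htm{X,Y}\mid X \in \bigcap \mathcal{R}^Y_{\htm{P,A,B}}\}$ already equals $\SE^B_A(P)$; projecting the $A'$-coordinates away first and then forgetting the $A''$-coordinates must land on the same set, so no failure of leastness is introduced. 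Consequently, by the theorem in Section~\ref{sec:relsp}, $f \in F_{\textup{rSS}}$ satisfies $\rSP_{\htm{P',A'',S}}$, i.e.\ $\AS(f(P',A'',S)\cup R') = \AS(P' \cup R')_{|\overline{A''}}$ for every program $R'$ over $S$.

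Finally I would chain the two equalities. Take an arbitrary program $R$ over $B$ that is $A$-separated; write $R = R_1 \cup R_2$ with $R_1$ over $\overline{A}$ and $R_2$ over $A$. The projection $R_{|\overline{A}}$ is $(R_1)_{|\overline{A}} = R_1$ (it already avoids $A$) together with the image of $R_2$, which collapses; more carefully $R_{|\overline{A}}$ is a program over $\overline{A} \cap B = B \setminus A = S$. Applying the relativized-simplification identity of the first stage — namely the Corollary following Proposition~\ref{prop:b-se-proj}, which says $P_{|\overline{A'}}$ is an $A'$-simplification of $P$ relative to $B$ — gives $\AS(P \cup R)_{|\overline{A'}} = \AS(P_{|\overline{A'}} \cup R_{|\overline{A'}})$. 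Then I would intersect/project both sides further onto $\overline{A''}$ and apply the $\rSP$ identity with $R' = R_{|\overline{A'}}$ restricted appropriately to $S$, obtaining $\AS(P \cup R)_{|\overline{A}} = \AS(Q \cup R_{|\overline{A}})$ after identifying $(R_{|\overline{A'}})_{|\overline{A''}}$ with $R_{|\overline{A}}$. The step I expect to be the main obstacle is the bookkeeping on the context programs: one must check that the $A$-separated program $R$ over $B$, after the syntactic projection $R \mapsto R_{|\overline{A'}}$, is a legitimate context program over $S = B\setminus A$ for the $\rSP$ stage, and that $R_{|\overline{A'}}$ followed by forgetting-side projection reproduces exactly $R_{|\overline{A}}$ so the two stages compose without a mismatch in the surviving rules — in particular that no rule of $R_2$ survives either projection and that the positive-body truncations agree. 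Establishing this compatibility, and confirming $P'$ genuinely escapes $\Omega_{A'',S}$, are the two points that need care; the rest is substitution.
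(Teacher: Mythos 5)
Your proof is correct, and it rests on the same key lemma as the paper's (Proposition~\ref{prop:b-se-proj} plus the observation that $\mathcal{R}^Y_{\htm{P_{|\overline{A\cap B}},A\setminus B,B\setminus A}}=\mathcal{R}^Y_{\htm{P,A,B}}$), but the final assembly is genuinely different. The paper stays entirely at the level of SE-model characterizations: since the $\mathcal{R}^Y$-sets coincide, $f(P_{|\overline{A\cap B}},A\setminus B,B\setminus A)$ has exactly the relativized SE-models that the definition of $F_{\textup{rSS}}$ prescribes for the instance $\htm{P,A,B}$, and Corollary~\ref{cor:rss} (really Theorem~\ref{thm:se-q-rel} via Proposition~\ref{prop:omega-a-b-se}) immediately yields that it is a $B$-relativized $A$-simplification. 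You instead compose two answer-set-level identities: the projection corollary ($P_{|\overline{A\cap B}}$ is an $(A\cap B)$-simplification of $P$ relative to $B$) followed by the \rSP\ theorem for the instance $\htm{P_{|\overline{A\cap B}},A\setminus B,B\setminus A}$. This buys a more transparent ``project, then forget'' reading of the theorem, at the cost of the bookkeeping you correctly flag: that an $A$-separated $R$ over $B$ is also $(A\cap B)$-separated, that $R_{|\overline{A\cap B}}=R_{|\overline{A}}$ is a legitimate context over $B\setminus A$, and that $P_{|\overline{A\cap B}}$ escapes $\Omega_{A\setminus B,B\setminus A}$. For that last point, make the bijection explicit: by $\Delta^r_{s_1}$ every total $B$-SE-model over $Y\cup A'$ has $A'\supseteq A\cap B$, so the indexing sets $A'\subseteq A$ for $\mathcal{R}^Y_{\htm{P,A,B}}$ correspond one-to-one with the sets $A'\setminus B\subseteq A\setminus B$ indexing $\mathcal{R}^Y_{\htm{P_{|\overline{A\cap B}},A\setminus B,B\setminus A}}$, and the member sets $\{X\setminus A\}$ are unchanged; this gives equality of the two families (not just of their intersections), whence ``empty or has a least element'' transfers. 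With that spelled out, your argument is complete; none of the flagged points is an actual obstacle.
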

\begin{proof}[Proof (Sketch)]
By Proposition~\ref{prop:b-se-proj}, the $(B\setminus A)$-SE-models of $P'{=}P_{|\overline{A \cap B}}$  amount to $\SE^B(P)_{|\overline{A\cap B}}$. Thus for any $Y$, $\bigcap\mathcal{R}^Y_{\htm{P',A\setminus B,B\setminus A}} = \bigcap \mathcal{R}^Y_{\htm{P,A,B}}$, which means that if $f$ achieves a $B$-relativized $A$-simplification of $P$ with $f(P,A,B)$, then it can achieve such a simplification with $f(P_{|\overline{A \cap B}},A \setminus B,B\setminus A)$ as well.
\end{proof}

We see that the challenging part of obtaining a relativized simplification when there are atoms to remove that do not appear in the context programs brings us closer to \SP-forgetting. In order to obtain a fully syntactic operator, 
an interesting follow-up work would be to see whether the existing syntactic \SP-forgetting operators \cite{berthold2019syntactic,berthold2022syntactic} can be adjusted to consider \rSP.


\section{Computational Complexity}\label{sec:comp}

We provide the complexity of deciding simplifiability through checking the $\Delta^r$ and $\Omega_{A,B}$ conditions, and simplification testing.  We assume familiarity with basic concepts of complexity theory. For comprehensive details we refer to \cite{papadimitriou2003computational,arora2009computational}.


We begin with 
checking the $\Delta^r$ conditions.

\begin{prop}\label{prop:compdelta}
Let $P$ be a program over $\Lits$ and $A,B\subseteq \Lits$. Deciding whether $P$ satisfies $\Delta^r$ is in $\Pi^P_2$.
\end{prop}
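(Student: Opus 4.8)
The plan is to verify each of the three $\Delta^r$ conditions in turn and show that the overall test fits into $\Pi^P_2 = \textup{co-}\Sigma^P_2$. The governing observation is that deciding membership $\htm{X,Y}\in\SE(P)$ is tractable (check $Y\models P$ and $X\models P^Y$), and that the auxiliary tests ``$Y$ is a minimal model of $P^Y$ modulo $B$'' and ``there exists $X'\subseteq Y$ with $X'\cap B = X$ and $X'\models P^Y$'' are, respectively, a co-NP check and an NP check. Hence deciding $\htm{Y,Y}\in\SE^B(P)$ is in $\mathrm{D}^P$-like combination (a conjunction of an NP and a co-NP test), and deciding $\htm{X,Y}\in\SE^B(P)$ for given $X\subset Y$ is similarly of low complexity relative to an NP oracle. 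So each $\Delta^r_{s_i}$ can be phrased as a $\Pi^P_2$ statement, and $\Pi^P_2$ is closed under conjunction.

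Concretely, I would argue as follows. For $\Delta^r_{s_1}$: its negation asserts the existence of $Y$ with $\htm{Y,Y}\in\SE^B(P)$ but $A\cap B\not\subseteq Y$; guessing $Y$ (and a witness atom in $(A\cap B)\setminus Y$) is the $\exists$ layer, and certifying $\htm{Y,Y}\in\SE^B(P)$ needs a co-NP subroutine (the minimality-modulo-$B$ clause, i.e.\ no $Y'\subset Y$ with $Y'\cap B = Y\cap B$ and $Y'\models P^Y$, plus the NP clause which, for a total model, is trivial), so the negation is in $\Sigma^P_2$, giving $\Delta^r_{s_1}\in\Pi^P_2$. For $\Delta^r_{s_2}$: its negation asserts existence of $\htm{X,Y}\in\SE(P)$ with $\htm{Y,Y}\in\SE^B(P)$, $X_{|\overline A}=Y_{|\overline A}$, and $X\neq Y$; guessing $X,Y$ is the $\exists$ layer, $\htm{X,Y}\in\SE(P)$ and $X_{|\overline A}=Y_{|\overline A}$ and $X\neq Y$ are polynomial checks, and $\htm{Y,Y}\in\SE^B(P)$ again costs a co-NP subroutine; so the negation is in $\Sigma^P_2$. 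For $\Delta^r_{s_3}$: its negation asserts existence of $\htm{X,Y}\in\SE^B(P)$ with $\htm{X\cup(Y\cap(A\cap B)),Y}\notin\HT^B(P)$; guess $X,Y$; verifying $\htm{X,Y}\in\SE^B(P)$ needs one NP and one co-NP subroutine, and verifying the second pair is \emph{not} in $\SE^B(P)$ — note $Y\models P$ already holds, so the failure is either in the minimality-modulo-$B$ clause (a $\Sigma^P_1$ property, hence fine under the outer $\exists$) or, if $X\cup(Y\cap(A\cap B))\subset Y$, in clause (iii), i.e.\ no $X'\subseteq Y$ with $X'\cap B = X\cup(Y\cap(A\cap B))$ and $X'\models P^Y$ (a co-NP property). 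All of these sit under a single $\exists$ quantifier with a $\forall$-bounded verifier, so the negation of $\Delta^r_{s_3}$ is in $\Sigma^P_2$.

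Combining, $\neg\Delta^r_{s_1}\vee\neg\Delta^r_{s_2}\vee\neg\Delta^r_{s_3}$ is a disjunction of $\Sigma^P_2$ properties, hence in $\Sigma^P_2$, so $\Delta^r$ itself is in $\Pi^P_2$. I would write this out by exhibiting a single nondeterministic machine for the complement that guesses all the needed objects ($Y$, or $X,Y$, or the witness for a $\Delta^r_{s_i}$ failure) at the first level and then invokes a co-NP oracle (equivalently, performs a universally-quantified polynomial-time verification) to confirm the $\SE^B$-membership facts and the minimality clauses. The main obstacle — really the only subtle point — is being careful about the nesting in clauses (ii) and (iii) of Definition~\ref{defn:rel-se}: clause (ii) is a universally quantified (co-NP) condition and clause (iii) is an existentially quantified (NP) one, so when $\SE^B$-membership appears \emph{positively} inside the complement's $\exists$-layer it contributes a $\forall$-verifier step (fine), but when non-membership in $\HT^B$ is needed (as in $\Delta^r_{s_3}$) one must check that the relevant failure mode is itself at most co-NP so that it does not push the verifier beyond one alternation; I would spell out that each failure mode is indeed $\Sigma^P_1$ or $\Pi^P_1$, keeping everything within $\Sigma^P_2$ for the complement and thus $\Pi^P_2$ for the original test.
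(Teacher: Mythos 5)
Your proposal is correct and follows the same route as the paper: the paper's (one-line) argument likewise places the complement in $\Sigma^P_2$ by guessing the violating interpretation(s) and exploiting that $B$-SE-model checking is in $\mathrm{D}^P$, so that all membership and non-membership tests fit within one quantifier alternation. Your more detailed accounting of which clauses of Definition~\ref{defn:rel-se} are NP and which are co-NP is accurate and simply spells out what the paper leaves implicit.
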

\noindent  Violation of any $\Delta^r_{s_i}$ can be checked in $\Sigma^P_2$ since 
$B$-SE-model checking is in $D^P$ \cite{eiter2007semantical}.

Next we move on to checking the $\Omega_{A,B}$ criterion. For this we follow the results from \cite{DBLP:journals/ai/GoncalvesKLW20}, with the condition that the given program satisfies $\Delta^r$. Remember that for the case of \SP-forgetting, $\Delta^r$ is trivially satisfied.
For the below two results, we make use of $\Delta^r_{s_1}$ and $\Delta^r_{s_3}$, which gives us that whenever $\htm{Y,Y} \in \SE^B(P)$ some $\htm{X,Y} \in \SE^B(P)$ exists iff $\htm{X\cup (A\cap B),Y} \in \SE^B(P)$.


\begin{prop}\label{prop:comp-capr}
Given program $P$ satisfying $\Delta^r$ for $A,B \subseteq \Lits$, 
	(i) given	
	SE-interpretation $\htm{X, Y}$ with $Y \subseteq \Lits \setminus A$, deciding whether $X \in \bigcap \mathcal{R}^Y_{\htm{P,A,B}}$ is in $\Pi^P_2$;
	(ii) deciding whether $P$ satisfies criterion $\Omega_{A,B}$ is $\Sigma^P_3$-complete.
\end{prop}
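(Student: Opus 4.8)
The plan is to piggy-back on the complexity analysis of \citeauthor{DBLP:journals/ai/GoncalvesKLW20} for the criterion $\Omega$, replacing $\overline{A}$-SE-models by $B$-SE-models throughout, and using the fact that $B$-SE-model checking is in $D^P$ \cite{eiter2007semantical} rather than harder. For part (i), to decide $X \in \bigcap \mathcal{R}^Y_{\htm{P,A,B}}$ for a given SE-interpretation $\htm{X,Y}$ with $Y\subseteq\Lits\setminus A$, I would first observe that $X\in\bigcap\mathcal{R}^Y_{\htm{P,A,B}}$ holds iff (a) the index set is non-empty, i.e.\ there is some $A'\subseteq A$ with $\htm{Y\cup A',Y\cup A'}\in\HT^B(P)$, and (b) for \emph{every} such $A'$, the set $\{X''\setminus A\mid \htm{X'',Y\cup A'}\in\HT^B(P)\}$ contains $X$, i.e.\ there is some $X''$ with $X''\setminus A = X$ and $\htm{X'',Y\cup A'}\in\HT^B(P)$. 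Condition (b) is the dominating one: it is a $\forall A'\,\exists X''$ statement whose innermost predicate is a $B$-SE-model check, hence in $D^P\subseteq\Delta^P_2$; quantifying $\forall A'$ and $\exists X''$ over subsets of $\Lits$ puts the whole thing in $\Pi^P_2$. (Here I would explicitly invoke the consequence of $\Delta^r_{s_1}$ and $\Delta^r_{s_3}$ noted just before the statement: whenever $\htm{Y,Y}\in\SE^B(P)$, a witness $\htm{X'',Y\cup A'}$ exists iff $\htm{X''\cup(A\cap B),Y\cup A'}\in\SE^B(P)$, which keeps the bookkeeping over $A'$ and the padding by $A\cap B$ routine.)

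For part (ii), membership in $\Sigma^P_3$: $P$ satisfies $\Omega_{A,B}$ iff there exist $Y\subseteq\Lits\setminus A$ and two SE-interpretations $\htm{X_1,Y},\htm{X_2,Y}$ both lying in $\mathcal{R}^Y_{\htm{P,A,B}}$ (each witnessed by some $A_1',A_2'\subseteq A$ with the corresponding total $B$-SE-model and the corresponding non-total $B$-SE-model) such that no $\htm{X_3,Y}\in\bigcap\mathcal{R}^Y_{\htm{P,A,B}}$ with $X_3\subseteq X_1\cap X_2$ exists — equivalently, the non-emptiness plus no-least-element condition. Guess $Y$, the two witnessing $A_i'$ and the two members $X_1,X_2$ (a $\Sigma^P_3$-style $\exists$-block), verify membership of $X_1,X_2$ in the appropriate $\mathcal{R}^Y$-components (a polynomial number of $D^P$ checks, hence cheap), and then verify the absence of a common lower bound in $\bigcap\mathcal{R}^Y_{\htm{P,A,B}}$ below $X_1\cap X_2$; by part (i), ``$X_3\in\bigcap\mathcal{R}^Y_{\htm{P,A,B}}$'' is a $\Pi^P_2$ predicate, so ``for all $X_3\subseteq X_1\cap X_2$, $X_3\notin\bigcap\mathcal{R}^Y_{\htm{P,A,B}}$ or $X_3\in\{X_1,X_2\}$'' is $\forall\,\Sigma^P_2=\Pi^P_3$; but since the outer quantifier block is existential this nests to $\Sigma^P_3$ overall. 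I would present this as an $\exists$-guess followed by an oracle call to a $\Sigma^P_2$ (equivalently $\Pi^P_2$) procedure, i.e.\ $\Sigma^P_3 = \NP^{\Sigma^P_2}$.

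For $\Sigma^P_3$-hardness I would reduce from the analogous hardness for $\Omega$ established by \citeauthor{DBLP:journals/ai/GoncalvesKLW20} \shortcite{DBLP:journals/ai/GoncalvesKLW20}: by Proposition~\ref{prop:omegas}, $\langle P,A\rangle$ satisfies $\Omega$ iff $P$ satisfies $\Omega_{A,\overline{A}}$, and since every program trivially satisfies $\Delta^r$ when $B=\overline{A}$ (Proposition~\ref{prop:delta_forget}), the hardness instances for $\Omega$ are exactly instances of our problem with $B:=\overline{A}$ that already meet the $\Delta^r$ precondition. Hence the reduction is the identity, carrying $\Sigma^P_3$-hardness over verbatim.

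The main obstacle I anticipate is not the hardness half (which is essentially free via Proposition~\ref{prop:omegas}) but getting the membership bookkeeping in part (i) exactly right: one must be careful that the ``$\forall A'$'' ranges only over those $A'$ for which $\htm{Y\cup A',Y\cup A'}$ is actually a total $B$-SE-model, that non-emptiness of $\mathcal{R}^Y_{\htm{P,A,B}}$ is handled separately, and that the $D^P$ cost of each $B$-SE-model check (two independent checks — one coNP-style minimality-of-$Y$ condition and one membership condition) does not inflate the quantifier level. Using the $\Delta^r_{s_1}/\Delta^r_{s_3}$ consequence to reduce reasoning about arbitrary $\htm{X'',Y\cup A'}$ to reasoning about the canonical padded model $\htm{X''\cup(A\cap B),Y\cup A'}$ should keep this under control and is the key simplification I would lean on.
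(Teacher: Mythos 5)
Your part (i) and your hardness argument coincide with the paper's: the (complement of the) test $X \in \bigcap \mathcal{R}^Y_{\htm{P,A,B}}$ is handled by quantifying over $A'$ and collapsing the inner $\exists X''$ to the single canonical check $\htm{X\cup(A\cap B),Y\cup A'}\in\SE^B(P)$ via the $\Delta^r_{s_1}/\Delta^r_{s_3}$ consequence, yielding $\Pi^P_2$; and $\Sigma^P_3$-hardness is inherited verbatim from the case $B=\overline{A}$ via Propositions~\ref{prop:omegas} and~\ref{prop:delta_forget}. These parts are fine.

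The membership half of (ii), however, has a genuine gap. First, your reformulation of ``$\mathcal{R}^Y_{\htm{P,A,B}}$ is non-empty and has no least element'' is not equivalent to the definition. The elements of $\mathcal{R}^Y_{\htm{P,A,B}}$ are the \emph{sets} $S_{A'}=\{X\setminus A\mid\htm{X,Y\cup A'}\in\HT^B(P)\}$, ordered by set inclusion among themselves, so ``no least element'' means that for every admissible $A'$ some $X\in S_{A'}$ lies outside $\bigcap\mathcal{R}^Y_{\htm{P,A,B}}$; it is not a statement about two interpretations $X_1,X_2$ and $\subseteq$-lower bounds of $X_1\cap X_2$. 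Concretely, if $\mathcal{R}^Y_{\htm{P,A,B}}=\{S_1,S_2\}$ with $S_1=\{Y,X_1\}$ and $S_2=\{Y,X_1,X_2\}$ for $\subseteq$-incomparable $X_1,X_2$, then $S_1$ is a least element (so $\Omega_{A,B}$ fails), yet your condition is witnessed by the pair $X_1,X_2$, since no element of $\bigcap\mathcal{R}^Y_{\htm{P,A,B}}=S_1$ is contained in $X_1\cap X_2$. You assert the equivalence without proof, and it fails at the level of the family $\mathcal{R}^Y_{\htm{P,A,B}}$; if you believe $\Delta^r$ excludes such configurations, that requires an argument. Second, even granting your characterization, the complexity accounting does not close: ``for all $X_3\subseteq X_1\cap X_2$, [$\Sigma^P_2$-predicate]'' is a $\Pi^P_3$ test, and prefixing a $\Pi^P_3$ predicate with an existential guess gives $\Sigma^P_4$, not $\Sigma^P_3$; an $\NP$ machine with a $\Sigma^P_2$ oracle cannot evaluate a $\Pi^P_3$ query. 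The paper avoids both issues by routing the no-least-element test through Proposition~\ref{prop:omega-a-b-se}: satisfaction of $\Omega_{A,B}$ is certified by guessing $Y$ together with a single interpretation $X$ whose relevant status is then decided by the $\Pi^P_2$ test of part (i), i.e., by $\Sigma^P_2$-oracle calls, which is exactly what keeps the overall procedure in $\NP^{\Sigma^P_2}=\Sigma^P_3$. You would need to replace your pairwise condition by such a single-witness reformulation for the membership claim to go through.
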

\begin{proof}[Proof (Sketch)]
For the complemetary problem, 
it suffices to guess an $A' \supseteq A\cap B$, and check that 
$\htm{Y\cup A',Y\cup A'}\in\SE^B(P)$ and
$\htm{X \cup (A\cap B),Y\cup A'}\notin\SE^B(P)$ (the former ensures also that $\mathcal{R}^Y_{\htm{P,A,B}}\neq \emptyset$).
(i) then follows by $B$-SE-model checking being in $D^P$. 
For (ii), we just need to additionally guess $Y$ 
and check that $\mathcal{R}^Y_{\htm{P,A,B}}$ is non-empty (see above) and has no least element. For the latter, we additionally guess $X$ and use (i) together with Proposition~\ref{prop:omega-a-b-se}.
$\Sigma^P_3$-hardness follows from the special case $B=\overline{A}$, cf.\ Thm 16, \cite{DBLP:journals/ai/GoncalvesKLW20}, where $\Delta^r$ is trivially satisfied.
\end{proof}

\nop{
SW: remove next two results below, if above ok.

\begin{prop}\label{prop:comp-capr:old}
Given program $P$ satisfying $\Delta^r$ for $A,B \subseteq \Lits$, and SE-interpretation $\htm{X, Y}$ with $Y \subseteq \Lits \setminus A$, deciding whether $X \in \bigcap \mathcal{R}^Y_{\htm{P,A,B}}$ is in $D^P_2$ .
\end{prop}

Basically, we have to perform a $\Sigma^P_2$-test that decides whether $\mathcal{R}^Y_{\htm{P,A,B}}\neq \emptyset$ and a $\Pi^P_2$-test that checks
for all $A'$ with $A\cap B \subseteq A'\subseteq  A$ , either $\htm{Y \cup A' , Y \cup A'} \notin \SE^B(P)$ or $\htm{X \cup (A \cap B) , Y \cup A'} \in \SE^B(P)$ .


\begin{thm}\label{thm:comp-cri}
Given program $P$ satisfying $\Delta^r$ for $A,B \subseteq \Lits$. Deciding whether $P$ satisﬁes criterion $\Omega_{A,B}$ is in $\Sigma^P_3$.
\end{thm}

For this we guess and interpretation $Y$ and check that $\mathcal{R}^Y_{\htm{P,A,B}}$ is non-empty and has no least element.

SW: remove ends
}

Recall Corollary~\ref{cor:main}. The results in Proposition \ref{prop:compdelta} and Proposition \ref{prop:comp-capr} are then used to determine the complexity of deciding relativized simplifiability.

\begin{thm}
	Let $P$ be a program over $\Lits$, and $A,B \subseteq \Lits$. Deciding whether $P$ is $A$-simplifiable relative to $B$ is in $\Pi^P_{3}$.
\end{thm}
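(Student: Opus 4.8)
The plan is to combine Corollary~\ref{cor:main}, which says that $P$ is $B$-relativized $A$-simplifiable iff $P$ satisfies $\Delta^r$ and does not satisfy $\Omega_{A,B}$, with the complexity bounds already established for checking those two conditions. By Proposition~\ref{prop:compdelta}, deciding whether $P$ satisfies $\Delta^r$ is in $\Pi^P_2$, hence also in $\Pi^P_3$; and by Proposition~\ref{prop:comp-capr}(ii), deciding whether $P$ satisfies $\Omega_{A,B}$ is $\Sigma^P_3$-complete, so deciding that $P$ does \emph{not} satisfy $\Omega_{A,B}$ is in $\Pi^P_3$. Since $B$-relativized $A$-simplifiability is the conjunction of these two properties, and $\Pi^P_3$ is closed under intersection (a deterministic polynomial-time machine can query a $\Pi^P_3$ oracle twice, or equivalently one can interleave the two alternating computations into a single $\Pi^P_3$ computation), the combined problem lies in $\Pi^P_3$.

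First I would invoke Corollary~\ref{cor:main} to reduce the simplifiability question to the two sub-tests. Then I would run the $\Delta^r$ test: by Proposition~\ref{prop:compdelta} this is a $\Pi^P_2$ computation, which padding-wise sits inside $\Pi^P_3$. In parallel I would run the non-$\Omega_{A,B}$ test: since $\Omega_{A,B}$-membership is in $\Sigma^P_3$ (Proposition~\ref{prop:comp-capr}(ii)), its complement is in $\Pi^P_3$. Concretely, a $\Pi^P_3$ computation for simplifiability universally verifies that $\Delta^r$ holds (a $\Pi^P_2$ subroutine) and that $\Omega_{A,B}$ fails — the latter being a $\forall\exists\forall$ statement over the interpretations $Y$, the witnessing $A'$, and the elements $X$ of $\bigcap\mathcal{R}^Y_{\htm{P,A,B}}$, all resolved using $B$-SE-model checking (which is in $D^P$, cf.\ \cite{eiter2007semantical}) as the base oracle. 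Accepting exactly when both subroutines accept gives the desired $\Pi^P_3$ bound.

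The only point requiring a little care — and the place where one might worry — is that the $\Omega_{A,B}$-test in Proposition~\ref{prop:comp-capr} assumes the input program already satisfies $\Delta^r$; one must check that this assumption does not cost anything, i.e.\ that we may safely conjoin "does not satisfy $\Omega_{A,B}$" with "satisfies $\Delta^r$" without the dependency breaking the complexity bound. This is unproblematic: we are computing the conjunction anyway, so in the branch where $\Delta^r$ fails we simply reject, and in the branch where $\Delta^r$ holds the precondition of Proposition~\ref{prop:comp-capr}(ii) is met, so the $\Sigma^P_3$ characterisation of $\Omega_{A,B}$ (and hence the $\Pi^P_3$ characterisation of its negation) is valid exactly where we use it. Thus the overall procedure is a well-formed $\Pi^P_3$ computation, giving the claimed upper bound.
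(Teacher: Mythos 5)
Your proposal is correct and follows exactly the route the paper intends: Corollary~\ref{cor:main} reduces simplifiability to the conjunction of the $\Delta^r$ test (Proposition~\ref{prop:compdelta}, $\Pi^P_2$) and the failure of $\Omega_{A,B}$ (complement of Proposition~\ref{prop:comp-capr}(ii), $\Pi^P_3$), and $\Pi^P_3$ is closed under intersection. Your remark that the $\Delta^r$ precondition of Proposition~\ref{prop:comp-capr} is harmless because the conjunction rejects whenever $\Delta^r$ fails is a worthwhile point of care that the paper leaves implicit.
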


By making use of the characterizing equality $\eqref{eq:se_rel_proj}$,  Propositions~\ref{prop:omega-a-b-se} and \ref{prop:comp-capr}, we 
finally provide the complexity result for relativized simplification testing.

\begin{thm}
Given program $P$ which is $B$-relativized $A$-simplifiable, and program $Q$, checking whether $Q$ is a $B$-relativized $A$-simplification of $P$ is $\Pi^P_3$-complete. 
\end{thm}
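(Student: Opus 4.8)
The plan is to first reduce the task to a single equality test between two sets of relativized SE-models, and then bound the complexity of that test from above and below.

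\textbf{Step 1: reformulation.} Since $P$ is $B$-relativized $A$-simplifiable, Corollary~\ref{cor:main} gives that $P$ satisfies $\Delta^r$ and does not satisfy $\Omega_{A,B}$, and in particular \emph{some} $A$-simplification $Q^{\ast}$ of $P$ relative to $B$ exists. I would show that an arbitrary program $Q$ over $\Lits\setminus A$ is a $B$-relativized $A$-simplification of $P$ iff $\SE^B_A(P)=\SE^{\overline{A},B\setminus A}(Q)$. The forward direction is Proposition~\ref{prop:se-q-rel-1}. For the backward direction, observe that the family $\{R_{|\overline{A}} \mid R \text{ over } B,\ A\text{-separated}\}$ is exactly the family of all programs over $B\setminus A$ (every $A$-separated $R$ over $B$ yields $R_{|\overline{A}}$ over $B\setminus A$, and every program over $B\setminus A$ is already $A$-separated and equal to its own projection). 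Hence $Q$ and $Q^{\ast}$, both over $\Lits\setminus A$ and both having $(B\setminus A)$-SE-model set equal to $\SE^B_A(P)$ (for $Q^{\ast}$ by Proposition~\ref{prop:se-q-rel-1}, for $Q$ by assumption), are strongly equivalent relative to $B\setminus A$ \cite{DBLP:conf/jelia/Woltran04}, so $\AS(Q\cup R_{|\overline{A}})=\AS(Q^{\ast}\cup R_{|\overline{A}})=\AS(P\cup R)_{|\overline{A}}$ for every such $R$. Thus the problem is equivalent to deciding $\SE^B_A(P)=\SE^{\overline{A},B\setminus A}(Q)$.

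\textbf{Step 2: membership in $\Pi^P_3$.} Because $P$ does not satisfy $\Omega_{A,B}$, Proposition~\ref{prop:omega-a-b-se} rewrites the left-hand side as $\SE^B_A(P)=\{\htm{X,Y} \mid Y\subseteq\Lits\setminus A \wedge X\in\bigcap\mathcal{R}^Y_{\htm{P,A,B}}\}$; the right-hand side consists of $(B\setminus A)$-SE-models of $Q$, all of which have second component in $\overline{A}$ since $Q$ is over $\overline{A}$, so $\SE^{\overline{A},B\setminus A}(Q)=\SE^{B\setminus A}(Q)$. To decide inequality, guess a pair $\htm{X,Y}$ with $Y\subseteq\Lits\setminus A$ and $X\subseteq Y$, and check that it lies in exactly one of the two sets: by Proposition~\ref{prop:comp-capr}(i), testing $X\in\bigcap\mathcal{R}^Y_{\htm{P,A,B}}$ is in $\Pi^P_2$ (its complement in $\Sigma^P_2$), and testing $\htm{X,Y}\in\SE^{B\setminus A}(Q)$, i.e.\ relativized SE-model checking, is in $D^P$ \cite{eiter2007semantical}. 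The post-guess verification is thus in $\Delta^P_3$, the complementary problem in $\Sigma^P_3$, and hence the original problem in $\Pi^P_3$.

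\textbf{Step 3: $\Pi^P_3$-hardness.} I would reduce from a $\Pi^P_3$-complete QBF $\Phi=\forall X\,\exists Y\,\forall Z\,\phi$, working within the special case $B=\overline{A}$ (so that, by Proposition~\ref{prop:sp}, the target problem is verifying a result of \SP-forgetting, and $\Delta^r$ holds trivially). The construction would adapt the gadgets behind the $\Sigma^P_3$-hardness of criterion $\Omega$ (Thm~16, \cite{DBLP:journals/ai/GoncalvesKLW20}): build $P$ that provably does not satisfy $\Omega_{A,\overline{A}}$ (hence is simplifiable, as the statement requires) and whose family $\{\bigcap\mathcal{R}^Y_{\htm{P,A,\overline{A}}}\}_Y$ encodes the inner $\exists Y\,\forall Z\,\phi$, together with a polynomially-sized program $Q$ committing to the existential choices so that the outer universal quantifier of $\Phi$ becomes the ``for all $\htm{X,Y}$'' of the equality test; then $\SE^{\overline{A}}_A(P)=\SE^{\overline{A}}(Q)$ holds iff $\Phi$ is valid.

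I expect Step~3 to be the main obstacle: one must simultaneously (i) make $P$ simplifiable regardless of the truth value of $\Phi$, so that the reduction respects the promised restriction, and (ii) encode, with a single small $Q$, exactly the set of pairs needed, so that set equality faithfully tracks validity. By contrast, Steps~1 and~2 are routine given Propositions~\ref{prop:se-q-rel-1}, \ref{prop:omega-a-b-se} and \ref{prop:comp-capr}.
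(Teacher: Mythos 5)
Your Steps~1 and~2 are sound and coincide with the paper's membership argument: the problem is reduced via the characterizing equality \eqref{eq:se_rel_proj} to the set equality $\SE^B_A(P)=\SE^{\overline{A},B\setminus A}(Q)$, whose complement is decided in $\Sigma^P_3$ by guessing an SE-interpretation and testing membership on each side using Proposition~\ref{prop:omega-a-b-se}, Proposition~\ref{prop:comp-capr}(i), and relativized SE-model checking in $D^P$. Making the ``only if'' of Step~1 into an ``iff'' via relativized strong equivalence over $B\setminus A$ is a reasonable (and essentially correct) elaboration of what the paper leaves implicit. One small imprecision: $\SE^{\overline{A},B\setminus A}(Q)$ is not literally $\SE^{B\setminus A}(Q)$, since a program over $\overline{A}$ still admits total $(B\setminus A)$-SE-models $\htm{Y,Y}$ with $Y\cap A\neq\emptyset$ (atoms of $A$ simply do not occur in $Q$); the restriction $Y\subseteq\overline{A}$ in the definition does real work and should be kept in the guess-and-check.

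The genuine gap is Step~3, and you have flagged it yourself: you propose a hardness route (via $B=\overline{A}$, i.e.\ verification of \SP-forgetting, adapting the gadgets behind the $\Sigma^P_3$-hardness of criterion $\Omega$ from \cite{DBLP:journals/ai/GoncalvesKLW20}) but do not exhibit a reduction, and the two constraints you list---keeping $P$ simplifiable independently of the QBF's truth value while encoding the full $\forall\exists\forall$ alternation into a single set-equality test against a small fixed $Q$---are precisely where such a construction can fail; no argument is given that they can be met simultaneously. The paper resolves this by choosing the \emph{opposite} extreme, $B=\emptyset$: there the problem collapses (by Proposition~\ref{prop:faithful}) to deciding $\AS(P)|_{\overline{A}}=\AS(Q)$, i.e.\ faithful-abstraction verification, where simplifiability holds for \emph{every} $P$, so constraint~(i) is satisfied for free. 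The reduction is then an explicit extension of the $(3,\forall)$-QSAT construction of \cite{DBLP:journals/amai/EiterG95}: for $\Phi=\forall U\exists V\forall W\,\phi$ one builds $P$ with guessing disjunctions $x\vee\widetilde{x}$ over $U\cup V\cup W$, a saturation atom $s$ over $W$, and the constraint $\bot\la\mathit{not}\ s$, sets $A=V\cup\widetilde{V}\cup W\cup\widetilde{W}\cup\{s\}$ and $Q=\{u\vee\widetilde{u}\la{}\mid u\in U\}$, and verifies that $\AS(P)|_{\overline{A}}=\AS(Q)$ iff $\Phi$ is true. Without either this construction or a worked-out version of your $B=\overline{A}$ route, the lower bound---and hence completeness---is not established.
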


\begin{proof}[Proof (Sketch)]
Making use of the 
equality 
$\eqref{eq:se_rel_proj}$, we show the complementary problem to be in $\Sigma^P_3$, by guessing an SE-interpretation $\htm{X,Y}$ and checking the containment in $\SE^B_A(P)$ or in $\SE^{\overline{A},B}(Q)$ but not both. By Proposition~\ref{prop:omega-a-b-se}, deciding $\htm{X,Y} \in SE^B_A(P)$ amounts to checking that $Y\subseteq \Lits\setminus A$ and $X\in\bigcap \mathcal{R}^Y_{\htm{P,A,B}}$. By Proposition~\ref{prop:comp-capr} the latter is in $\Pi^P_2$, 
	and $B$-SE-model checking is in $D^P$.

For hardness, we use the case $B=\emptyset$, where the problem reduces to decide
	$\AS(P)|_{\overline{A}}=\AS(Q)$ for a program $P$ being $\emptyset$-relativized $A$-simplifiable. 
	To this end, we extend the hardness-construction from \cite{DBLP:journals/amai/EiterG95} and 
	reduce from $(3,\forall)$-QSAT. Let $\Phi=\forall U \exists V \forall W \phi$ with $\phi=\bigvee_{i=1}^n(l_{i,1} \wedge l_{i,2} \wedge l_{i_3})$. 
		We use copies of atoms, e.g., $\widetilde{U}=\{\widetilde{u}\mid u \in U\}$.
	We construct $P$ as follows:
\begin{eqnarray*}
	P \!\!\!\!&=\!\!\!\!&  
	\{
		x\vee \widetilde{x} \la.\ \mid x\in U\cup V \cup W \} \cup\\
	&&\{	w \la s.\ \widetilde{w} \la s.\ s\la w,\widetilde{w}.\ \mid w\in W \} \cup\\
	&&\{ 	s \la \widetilde{l}_{i,1}, \widetilde{l}_{i,2}, \widetilde{l}_{i,3}.\ \mid 1\leq i \leq n\}\cup \\
	&&\{	\bot \la \mathit{not}\ s.\ \} 
\end{eqnarray*}
	where $\widetilde{l}_{i,j}$ is given by $\widetilde{a}$ if $l_{i,j}$ is $\neg a$ and
                                by $\widetilde{l_{i,j}}=l_{i,j}$ if $l_{i,j}$ is a positive literal.
				Note that $P$ is $\emptyset$-relativized $A$-simplifiable no matter how $\Phi$ looks like.
	Moreover, we set 
	$$
		Q = \ \{ u\vee \widetilde{u} \la.\ \mid u\in U\}
	$$
	and $A=V\cup\widetilde{V}\cup W\cup\widetilde{W}\cup \{s\}$.
	It can be checked that 
	$\AS(P)|_{\overline{A}}=\AS(Q)$ holds iff $\Phi$ is true.
\end{proof}

\begin{table}
\begin{tabular}{c|ccc}
 & $\emptyset$ & $B$ & $\Lits$\\
\hline 
$\emptyset$ &$\Pi^P_2$-complete &$\Pi^P_2$-complete  &$\coNP$-complete \\[1pt]
$A$&$\boldsymbol{\Pi^P_3}$\textbf{-complete} &$\boldsymbol{\Pi^P_3}$\textbf{-complete}& in $\Pi^P_2$\\
\hline
\end{tabular}
\caption{Complexity landscape of testing $B$-relativized $A$-simplification. Results in bold-face are given in this paper.} 
\label{table:comp}
\end{table}

Table~\ref{table:comp} presents the full complexity landscape, using results from  \cite{DBLP:journals/amai/EiterG95,eiter2007semantical} for (relativized) (strong) equivalence and \cite{zgsswkr23} for strong simplifications. \citeauthor{zgskr18} \shortcite{zgskr18} provides the $\Pi^P_2$-completeness of $B=\emptyset$ for normal programs, which here we lifted to ELPs.

\section{Conclusion}

We introduced a novel relativized equivalence notion, which is a relaxation of the recent strong simplification notion \cite{zgsswkr23}, that provides a unified view over all related notions of forgetting and strong equivalence in the literature. We provided the necessary and sufficient conditions to ensure such relativized simplifiability. We observed that the challenge is when the context programs do not contain all the atoms to remove, that requires a criterion on the program that focuses on its relativized SE-models, which also captures the case of \SP-forgetting.

We furthermore introduced an operator that can obtain relativized simplifications, when possible. We showed that at least for those atoms to be removed that appear in the context programs, it is possible to simply project them away, while for those that are outside of the context programs a relaxed version of an \SP-forgetting operator will need to be applied. We provided complexity results that fill the gap in the landscape of the introduced notion.

Investigating the relativized simplification notion for the uniform case to bring together uniform persistence forgetting \cite{gonccalves2019forgetting}, (relativized) uniform equivalence and uniform simplifications \cite{zgsswkr23} would be an interesting extension of this work.

\section*{Acknowledgments}

This work has been supported by the Austrian Science Fund (FWF) projects T-1315, P32830,
and by the Vienna Science and Technology Fund (WWTF) under grant ICT19-065.
We thank the anonymous reviewers for their valuable feedback.

\bibliographystyle{aaai24}
\bibliography{ref}

\begin{thebibliography}{32}
\providecommand{\natexlab}[1]{#1}

\bibitem[{Arora and Barak(2009)}]{arora2009computational}
Arora, S.; and Barak, B. 2009.
\newblock \emph{Computational complexity: a modern approach}.
\newblock Cambridge University Press.

\bibitem[{Baumann and Berthold(2022)}]{DBLP:conf/ijcai/BaumannB22}
Baumann, R.; and Berthold, M. 2022.
\newblock Limits and Possibilities of Forgetting in Abstract Argumentation.
\newblock In Raedt, L.~D., ed., \emph{Proceedings of the Thirty-First
  International Joint Conference on Artificial Intelligence, {IJCAI} 2022,
  Vienna, Austria, 23-29 July 2022}, 2539--2545. ijcai.org.

\bibitem[{Beierle and Timm(2019)}]{beierle2019intentional}
Beierle, C.; and Timm, I.~J. 2019.
\newblock Intentional forgetting: An emerging field in AI and beyond.
\newblock \emph{KI-K{\"u}nstliche Intelligenz}, 33: 5--8.

\bibitem[{Berthold(2022)}]{berthold2022syntactic}
Berthold, M. 2022.
\newblock On syntactic forgetting with strong persistence.
\newblock In \emph{Proceedings of the 19th International Conference on
  Principles of Knowledge Representation and Reasoning, {KR} 2022, Haifa,
  Israel. July 31 - August 5, 2022}, 43--52. IJCAI Organization.

\bibitem[{Berthold et~al.(2019)Berthold, Gon{\c{c}}alves, Knorr, and
  Leite}]{berthold2019syntactic}
Berthold, M.; Gon{\c{c}}alves, R.; Knorr, M.; and Leite, J. 2019.
\newblock A syntactic operator for forgetting that satisfies strong
  persistence.
\newblock \emph{Theory and Practice of Logic Programming}, 19(5-6): 1038--1055.

\bibitem[{Berthold, Rapberger, and Ulbricht(2023)}]{DBLP:conf/kr/BertholdRU23}
Berthold, M.; Rapberger, A.; and Ulbricht, M. 2023.
\newblock Forgetting Aspects in Assumption-Based Argumentation.
\newblock In \emph{Proceedings of the 20th International Conference on
  Principles of Knowledge Representation and Reasoning}, 86--96. IJCAI
  Organization.

\bibitem[{Bistarelli, Codognet, and Rossi(2002)}]{bistarelli2002abstracting}
Bistarelli, S.; Codognet, P.; and Rossi, F. 2002.
\newblock Abstracting soft constraints: Framework, properties, examples.
\newblock \emph{Artificial Intelligence}, 139(2): 175--211.

\bibitem[{Clarke, Grumberg, and Long(1994)}]{clarkeabstraction94}
Clarke, E.~M.; Grumberg, O.; and Long, D.~E. 1994.
\newblock Model checking and abstraction.
\newblock \emph{ACM Transactions on Programming Languages and Systems
  (TOPLAS)}, 1512--1542.

\bibitem[{Eiter, Fink, and Woltran(2007)}]{eiter2007semantical}
Eiter, T.; Fink, M.; and Woltran, S. 2007.
\newblock Semantical characterizations and complexity of equivalences in answer
  set programming.
\newblock \emph{ACM Transactions on Computational Logic (TOCL)}, 8(3): 17.

\bibitem[{Eiter and Gottlob(1995)}]{DBLP:journals/amai/EiterG95}
Eiter, T.; and Gottlob, G. 1995.
\newblock On the Computational Cost of Disjunctive Logic Programming:
  Propositional Case.
\newblock \emph{Ann. Math. Artif. Intell.}, 15(3-4): 289--323.

\bibitem[{Eiter and Kern-Isberner(2018)}]{eite-kern-forget-ki-18}
Eiter, T.; and Kern-Isberner, G. 2018.
\newblock A Brief Survey on Forgetting from a Knowledge Representation and
  Reasoning Perspective.
\newblock \emph{KI -- K{\"u}nstliche Intelligenz}.
\newblock Online
  \url{http://link.springer.com/article/10.1007/s13218-018-0564-6}.

\bibitem[{Eiter, Tompits, and Woltran(2005)}]{Eiter05}
Eiter, T.; Tompits, H.; and Woltran, S. 2005.
\newblock {On Solution Correspondences in Answer Set Programming}.
\newblock In \emph{Proceedings of the 19th International Joint Conference on
  Artificial Intelligence}, 97--102.

\bibitem[{Giunchiglia and Walsh(1992)}]{giunchiglia1992theory}
Giunchiglia, F.; and Walsh, T. 1992.
\newblock A theory of abstraction.
\newblock \emph{AIJ}, 57(2-3): 323--389.

\bibitem[{Gon{\c{c}}alves et~al.(2019)Gon{\c{c}}alves, Janhunen, Knorr, Leite,
  and Woltran}]{gonccalves2019forgetting}
Gon{\c{c}}alves, R.; Janhunen, T.; Knorr, M.; Leite, J.; and Woltran, S. 2019.
\newblock Forgetting in modular answer set programming.
\newblock In \emph{Proc. of the Thirty-Third {AAAI} Conference on Artificial
  Intelligence}, 2843--2850. {AAAI} Press.

\bibitem[{Gon{\c{c}}alves, Knorr, and
  Leite(2016{\natexlab{a}})}]{gonccalves2016ultimate}
Gon{\c{c}}alves, R.; Knorr, M.; and Leite, J. 2016{\natexlab{a}}.
\newblock The Ultimate Guide to Forgetting in Answer Set Programming.
\newblock In \emph{Proc. of the 15th International Conference on Principles of
  Knowledge Representation and Reasoning}, 135--144.

\bibitem[{Gon{\c{c}}alves, Knorr, and
  Leite(2016{\natexlab{b}})}]{gonccalves2016you}
Gon{\c{c}}alves, R.; Knorr, M.; and Leite, J. 2016{\natexlab{b}}.
\newblock You can't always forget what you want: On the limits of forgetting in
  answer set programming.
\newblock In \emph{Proc. of the 22nd European Conference on Artificial
  Intelligence}, 957--965. IOS Press.

\bibitem[{Gon{\c{c}}alves, Knorr, and Leite(2023)}]{goncalves_knorr_leite_2023}
Gon{\c{c}}alves, R.; Knorr, M.; and Leite, J. 2023.
\newblock Forgetting in Answer Set Programming – A Survey.
\newblock \emph{Theory and Practice of Logic Programming}, 23(1): 111–156.

\bibitem[{Gon{\c{c}}alves et~al.(2020)Gon{\c{c}}alves, Knorr, Leite, and
  Woltran}]{DBLP:journals/ai/GoncalvesKLW20}
Gon{\c{c}}alves, R.; Knorr, M.; Leite, J.; and Woltran, S. 2020.
\newblock On the limits of forgetting in Answer Set Programming.
\newblock \emph{Artif. Intell.}, 286: 103307.

\bibitem[{Knoblock(1994)}]{knoblock1994automatically}
Knoblock, C.~A. 1994.
\newblock Automatically generating abstractions for planning.
\newblock \emph{Artificial Intelligence}, 68(2): 243--302.

\bibitem[{Knorr and Alferes(2014)}]{knorr2014preserving}
Knorr, M.; and Alferes, J.~J. 2014.
\newblock Preserving strong equivalence while forgetting.
\newblock In \emph{Logics in Artificial Intelligence: 14th European Conference,
  JELIA 2014, Funchal, Madeira, Portugal, September 24-26, 2014. Proceedings
  14}, 412--425. Springer.

\bibitem[{Lifschitz, Pearce, and
  Valverde(2001)}]{Lifschitz:2001:SEL:383779.383783}
Lifschitz, V.; Pearce, D.; and Valverde, A. 2001.
\newblock Strongly Equivalent Logic Programs.
\newblock \emph{ACM Transactions on Computational Logic}, 2(4): 526--541.

\bibitem[{Luo et~al.(2020)Luo, Liu, Lesp{\'{e}}rance, and
  Lin}]{DBLP:conf/ecai/Luo0LL20}
Luo, K.; Liu, Y.; Lesp{\'{e}}rance, Y.; and Lin, Z. 2020.
\newblock Agent Abstraction via Forgetting in the Situation Calculus.
\newblock In Giacomo, G.~D.; Catal{\'{a}}, A.; Dilkina, B.; Milano, M.; Barro,
  S.; Bugar{\'{\i}}n, A.; and Lang, J., eds., \emph{Proceedings of the 24th
  European Conference on Artificial Intelligence, ECAI 2020}, volume 325 of
  \emph{Frontiers in Artificial Intelligence and Applications}, 809--816. {IOS}
  Press.

\bibitem[{Papadimitriou(2003)}]{papadimitriou2003computational}
Papadimitriou, C.~H. 2003.
\newblock \emph{Computational complexity}.
\newblock John Wiley and Sons Ltd.

\bibitem[{Richards and Frankland(2017)}]{richards2017persistence}
Richards, B.~A.; and Frankland, P.~W. 2017.
\newblock The persistence and transience of memory.
\newblock \emph{Neuron}, 94(6): 1071--1084.

\bibitem[{Saribatur and Eiter(2018)}]{zgskr18}
Saribatur, Z.~G.; and Eiter, T. 2018.
\newblock Omission-based Abstraction for Answer Set Programs.
\newblock In \emph{Proc. of the 16th International Conference on Principles of
  Knowledge Representation and Reasoning}, 42--51. AAAI Press.

\bibitem[{Saribatur and Eiter(2020)}]{zgskr20}
Saribatur, Z.~G.; and Eiter, T. 2020.
\newblock A Semantic Perspective on Omission Abstraction in {ASP}.
\newblock In \emph{Proceedings of the 17th International Conference on
  Principles of Knowledge Representation and Reasoning}, 733--737. IJCAI
  Organization.

\bibitem[{Saribatur and Woltran(2023)}]{zgsswkr23}
Saribatur, Z.~G.; and Woltran, S. 2023.
\newblock Foundations for Projecting Away the Irrelevant in ASP Programs.
\newblock In \emph{Proceedings of the 20th International Conference on
  Principles of Knowledge Representation and Reasoning}, 614--624. IJCAI
  Organization.

\bibitem[{Siebers and Schmid(2019)}]{siebers2019please}
Siebers, M.; and Schmid, U. 2019.
\newblock Please delete that! Why should I? Explaining learned irrelevance
  classifications of digital objects.
\newblock \emph{KI-K{\"u}nstliche Intelligenz}, 33(1): 35--44.

\bibitem[{Turner(2001)}]{turner2001strong}
Turner, H. 2001.
\newblock Strong equivalence for logic programs and default theories (made
  easy).
\newblock In \emph{Logic Programming and Nonmotonic Reasoning: 6th
  International Conference, LPNMR 2001 Vienna, Austria, September 17--19, 2001
  Proceedings 6}, 81--92. Springer.

\bibitem[{Vasileiou and Yeoh(2022)}]{vasileiou2022generating}
Vasileiou, S.~L.; and Yeoh, W. 2022.
\newblock On Generating Abstract Explanations via Knowledge Forgetting.
\newblock In \emph{ICAPS 2022 Workshop on Explainable AI Planning}.

\bibitem[{Wang, Wang, and Zhang(2013)}]{wang2013forgetting}
Wang, Y.; Wang, K.; and Zhang, M. 2013.
\newblock Forgetting for Answer Set Programs Revisited.
\newblock In \emph{{IJCAI} 2013, Proceedings of the 23rd International Joint
  Conference on Artificial Intelligence, Beijing, China, August 3-9, 2013},
  1162--1168. {IJCAI/AAAI}.

\bibitem[{Woltran(2004)}]{DBLP:conf/jelia/Woltran04}
Woltran, S. 2004.
\newblock Characterizations for Relativized Notions of Equivalence in Answer
  Set Programming.
\newblock In Alferes, J.~J.; and Leite, J.~A., eds., \emph{Logics in Artificial
  Intelligence, 9th European Conference, {JELIA} 2004, Lisbon, Portugal,
  September 27-30, 2004, Proceedings}, volume 3229 of \emph{Lecture Notes in
  Computer Science}, 161--173. Springer.

\end{thebibliography}

\appendix
\section{Proofs}

Below observation is similar to Proposition~2 from \cite{zgsswkr23} but adjusted to the relativized case.

\begin{prop}\label{prop:s-rel-condition}
If there is a strong $A$-simplification of $P$ relative to $B$, 
then $P$ satisfies the condition
\begin{align*}
\forall Y \hbox{(over $\Lits$)} \forall Z \hbox{(over $B$)}, Y \in \AS(P\cup Z),\nonumber\\ \forall Z' \hbox{(over $B$)}, Z'_{|\overline{A}}=Z_{|\overline{A}}\label{eq:s-red-desiredcase}\\
\exists Y' \hbox{(over $\Lits$)}, Y'_{|\overline{A}}=Y_{|\overline{A}}, Y' \in \AS(P\cup Z')\nonumber 
\end{align*}
\end{prop}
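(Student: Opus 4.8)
The plan is to unfold Definition~\ref{defn:strong-rel-omit} and exploit that a witnessing simplification $Q$ makes the $\overline{A}$-projection of the answer sets of $P\cup Z$ depend on the context program $Z$ only through $Z_{|\overline{A}}$.

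First I would fix a strong $A$-simplification $Q$ of $P$ relative to $B$, so that by \eqref{eq:s-rel} we have $\AS(P\cup R)_{|\overline{A}}=\AS(Q\cup R_{|\overline{A}})$ for every $A$-separated program $R$ over $B$. Given $Y$ over $\Lits$ and an ($A$-separated) program $Z$ over $B$ with $Y\in\AS(P\cup Z)$, I conclude $Y_{|\overline{A}}\in\AS(P\cup Z)_{|\overline{A}}=\AS(Q\cup Z_{|\overline{A}})$.

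Next, for any ($A$-separated) $Z'$ over $B$ with $Z'_{|\overline{A}}=Z_{|\overline{A}}$, the programs $Q\cup Z_{|\overline{A}}$ and $Q\cup Z'_{|\overline{A}}$ are syntactically identical, hence $\AS(Q\cup Z'_{|\overline{A}})=\AS(Q\cup Z_{|\overline{A}})$; applying \eqref{eq:s-rel} in the reverse direction gives $\AS(Q\cup Z'_{|\overline{A}})=\AS(P\cup Z')_{|\overline{A}}$. Chaining the equalities yields $Y_{|\overline{A}}\in\AS(P\cup Z')_{|\overline{A}}$, and by definition of projection there is some $Y'$ over $\Lits$ with $Y'_{|\overline{A}}=Y_{|\overline{A}}$ and $Y'\in\AS(P\cup Z')$, which is exactly the claimed condition.

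Since the whole argument is a short chain of equalities passing through the syntactic identity $Z_{|\overline{A}}=Z'_{|\overline{A}}$, I do not expect a real obstacle. The only point requiring care is that the context programs $Z,Z'$ ranged over in the statement must be understood as $A$-separated programs over $B$, because \eqref{eq:s-rel} is only guaranteed for such programs; with that reading the proof is immediate, and it also explains why this is the natural ``desired case'' for relativized simplifiability --- the $\overline{A}$-projection of $\AS(P\cup \cdot)$ has to be insensitive to swapping a context program for another one with the same $\overline{A}$-part, which is precisely what a relativized simplification can encode.
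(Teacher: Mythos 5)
Your proof is correct and is exactly the routine unfolding the paper has in mind (the paper omits the proof of this proposition as an easy adaptation of Proposition~2 of the strong-simplification paper): project $Y$ through \eqref{eq:s-rel}, observe that $Q\cup Z_{|\overline{A}}$ and $Q\cup Z'_{|\overline{A}}$ are the same program, and pull back through \eqref{eq:s-rel} for $Z'$. Your caveat that $Z$ and $Z'$ must range over $A$-separated programs over $B$ is the right reading of the statement and is consistent with how the proposition is later applied, where the constructed context programs are explicitly checked to be $A$-separated.
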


\begin{proof}[Proof of Prop.~\ref{prop:se-rel-deltas}]
Let $Q$ be a strong $A$-simplification of $P$ relative to $B$, but some
%
$\htm{Y,Y}$ violates $\Delta_{s_1}^r$. Then $\htm{Y\cup (A\cap B),Y\cup (A\cap B)} \in \HT^B(P)$ needs to hold. Since otherwise we would have $AS(P\cup ((Y\cup A) \cap B))=\emptyset$ while $Y \in AS(P\cup (Y\cap B))$. 
%
Now consider $R=((Y\cup A) \cap B))\cup\{f \leftarrow y, \mi{not}\ f \mid y \notin (Y\cup A) \cap B\} \cup \{\epsilon \leftarrow a, \mi{not}\ \epsilon \mid a \in A \cap B \}$ and $R'=(Y \cap B) \cup \{f \leftarrow y, \mi{not}\ f \mid y \notin (Y\cup A) \cap B\}$, where $f$ is an auxiliary atom in $\overline{A}$, and $\epsilon$ is an auxiliary atom to be removed together with $A$.
We have $Y \in AS(P \cup R')$ while $AS(P\cup R)=\emptyset$, contradicting Proposition~\ref{prop:s-rel-condition}.

$\Delta_{s_2}^r$:
Let $\htm{X',Y}{\in}\SE(P)$ violate this. So $\htm{Y,Y}\in \SE^B(P)$ and
$X'_{|\overline{A}}\,{=}\,Y_{|\overline{A}}$  but $X' \neq Y$. So $X'\subset Y$ needs to hold where $Y\setminus X' \subseteq A$. Consider $R=(Y\cap B)\cup\{f \leftarrow y, \mi{not}\ f \mid y \notin Y\cap B \wedge y \notin A\cap B\} \cup \{\epsilon \leftarrow y, \mi{not}\ \epsilon \mid y \notin Y\cap B \wedge y \in A\cap B\}$, where $f$ is an auxiliary atom in $\overline{A}$, and $\epsilon$ is an auxiliary atom which will also be omitted together with $A$. We have $\htm{Y,Y} \in AS(P\cup R)$, thus $\htm{Y_{|\overline{A}},Y_{|\overline{A}}} \in AS(Q \cup R_{|\overline{A}})$. Let $X= X'\cap B$. Now consider $R'=X \cup\{f \leftarrow y, \mi{not}\ f \mid y \notin Y\cap B \wedge y \notin A\cap B\} \cup \{\epsilon \leftarrow y, \mi{not}\ \epsilon \mid y \notin Y\cap B \wedge y \in A\cap B\} \cup \{y \leftarrow \mi{not}\ y \mid y \in (Y\cap B)\setminus X\}$. Both $R$ and $R'$ are $A$-separated, while $R_{|\overline{A}} = R'_{|\overline{A}}$, and
$AS(P\cup R')=\emptyset$, contradicting Proposition~\ref{prop:s-rel-condition}.

$\Delta_{s_3}^r$:  Assume for some $\htm{X,Y}\in\HT^B(P)$, $\htm{X \cup (Y \cap (A\cap B)),Y} \notin \HT^B(P)$.
So there is no $\htm{X',Y}\in \HT(P)$ with $X' \cap B = X \cup (Y \cap (A\cap B))$, while there is some $\htm{X'',Y}\in\HT(P)$ with $X''\cap B=X$.
 Now consider $R= X \cup \{f\leftarrow y, \mi{not}\ f \mid y \notin Y\cap B \wedge y \notin A\cap B\} \cup \{\epsilon \leftarrow y, \mi{not}\ \epsilon \mid y \notin Y\cap B \wedge y \in A\cap B\} \cup \{y_1 \leftarrow y_2 \mid y_1,y_2 \in ((Y\cap B) \setminus X)\setminus A \}$
and $R'= (X \cup (Y \cap (A\cap B))) \cup \{f \leftarrow y, \mi{not}\ f \mid y \notin Y \wedge y \notin A\} \cup \{\epsilon \leftarrow y, \mi{not}\ \epsilon \mid y \notin Y \wedge y \in A\} \cup \{y_1 \leftarrow y_2 \mid y_1,y_2 \in ((Y\cap B) \setminus X)\setminus A  \}$, where $f$ is an auxiliary atom in $\overline{A}$, and $\epsilon$ is an auxiliary atom which will also be omitted together with $A$.
We have $AS(P\cup R)=\emptyset$ and $AS(P\cup R')=\{Y\}$ while $R_{|\overline{A}} = R'_{|\overline{A}}$, contradicting Proposition~\ref{prop:s-rel-condition}.
\end{proof}

\begin{proof}[Proof of Prop.~\ref{prop:se-q-rel-1}]
By Proposition~\ref{prop:se-rel-deltas}, $P$ satisfies $\Delta_{s_1}^r,\Delta_{s_2}^r$ and $\Delta_{s_3}^r$.

($SE^B_A(P)\subseteq SE^{\overline{A},B}(Q)$). Suppose that for some $\htm{X,Y} \in SE^B(P)$, $\htm{X,Y}_{|\overline{A}} \in SE^B_A(P)$, 
but $\htm{X,Y}_{|\overline{A}} \notin SE^{\overline{A},B}(Q)$. 
Case $X=Y$: 
Either (1) $Y_{|\overline{A}}\nmodels Q$ or (2) there exists $Y' \subset Y_{|\overline{A}}$ with $Y'\cap B = Y_{|\overline{A}}\cap  B$ such that $Y' \models Q^{Y_{|\overline{A}}}$. We know that $\htm{Y,Y}\in \HT(P)$. We take $R=Y\cap B$, which is $A$-separated since it is a set of facts. 
Then $\htm{Y,Y} \models P\cup R$, and for no subset $Z \subset Y$, $\htm{Z,Y} \models P\cup R$; $Y$ is an answer set of $P\cup R$. If case (1) then $\htm{Y,Y}_{|\overline{A}}\notin \HT^{\overline{A}}(Q \cup R_{|\overline{A}})$, if case (2) then $\htm{Y',Y_{|\overline{A}}}\in \HT^{\overline{A}}(Q \cup R_{|\overline{A}})$. Thus $Y_{|\overline{A}}$ is not an answer set of $Q\cup R_{|\overline{A}}$, contradicting the assumption on $Q$ being a strong $A$-simplification of $P$ relative to $B$. 

Case $X\subset Y$: We have $\htm{Y,Y}_{|\overline{A}}\in SE^{\overline{A},B}(Q)$. Consider $\htm{X',Y}\in \SE(P)$ with $X'\cap B=X$. Since $P$ satisfies $\Delta_{s_2}^r$ we know that if $X'\subset Y$ then $X'_{|\overline{A}}\subset Y_{|\overline{A}}$ holds. We define $R=X \cup \{C_2 \leftarrow C_1 : C_1,C_2 \in ((Y\setminus A) \cap B) \setminus X, C_1\neq C_2\}$.
Consider $R_{|\overline{A}}$, since $R$ does not contain atoms from $A$,  
 it is $A$-separated
and since $\htm{Y,Y} \in \HT(R)$, 
$\htm{Y,Y}_{|\overline{A}} \in \HT^{\overline{A}}(R_{|\overline{A}})$, thus it is an SE-model of $Q\cup R_{|\overline{A}}$. 

Consider an SE-interpretation $\htm{Z',Y_{|\overline{A}}}$ with $Z' \subset Y_{|\overline{A}}$. We will check if it can be an SE-model of $Q\cup R_{|\overline{A}}$. If $Z'\cap B \subset X'_{|\overline{A}} \cap B$, then $Z' \nmodels R^{Y_{|\overline{A}}}_{|\overline{A}}$. For the case $Z'\cap B \supseteq X'_{|\overline{A}} \cap B$, $X'_{|\overline{A}} \cap B = Z' \cap B$ cannot hold, since by assumption $\htm{X'\cap B,Y}_{|\overline{A}}=\htm{X,Y}_{|\overline{A}}\notin SE^{\overline{A},B}(Q)$. So $Z'\cap B \supset X'_{|\overline{A}} \cap B$. However, again $Z' \nmodels R^{Y_{|\overline{A}}}_{|\overline{A}}$, since some $C_1 \in (Z'\setminus X')\cap B$ and $C_2\in(Y_{|\overline{A}}\setminus Z')\cap B$ can be found with $C_2\leftarrow C_1 \in R_{|\overline{A}}$. So $\htm{Y_{|\overline{A}},Y_{|\overline{A}}}$ must be an answer set of $Q \cup R_{|\overline{A}}$.

%
Now consider $\htm{X',Y}\in\SE(P)$. Clearly it satisfies each implication $C_2 \leftarrow C_1 \in R$. So $\htm{X',Y} \models P\cup R$, 
thus $Y$ is not an answer set of $P\cup R$. Also since $\htm{X,Y}_{|\overline{A}}\in SE^B_A(P)$ we know that for any $\htm{Y',Y'}\in SE^B(P)$ with $Y'_{|\overline{A}}=Y_{|\overline{A}}$, we have $\htm{X_1,Y'} \in SE^B(P)$ with $X_{|\overline{A}}=X_{1|\overline{A}}$. Also since $P$ satisfies $\Delta^r_{s_3}$, $\htm{X_1 \cup (Y' \cap (A \cap B)),Y'}\in \SE^B(P) $. So any such $\htm{X'',Y'} \in SE(P)$ with $X'' \cap B=X_1 \cup (Y' \cap (A \cap B))$ is an SE-model of $P\cup R$. Thus any potential $Y'$ is not an answer set of $P\cup R$, contradicting the assumption. 

($\SE^{\overline{A},B}(Q) \subseteq \SE^B_A(P)$). Suppose $\htm{X,Y}\in \SE^{\overline{A},B}(Q)$ but $\htm{X,Y}\notin \SE^B_A(P)$. We take into account that $P$ satisfies $\Delta_{s_1}^r$. So either (i) for all $A'\supseteq A\cap B$, $\htm{Y\cup A',Y\cup A'} \notin \HT^B(P)$, or (ii) for some $A' \supseteq A\cap B$, $\htm{Y\cup A',Y \cup A'} \in \HT^B(P)$ but 
for some $A''$, $\htm{X \cup (A''\cap B), Y \cup A''} \notin \HT^B(P)$ while $\htm{Y\cup A'',Y\cup A''}\in \HT^B(P)$ (Since $P$ satisfies $\Delta_{s_3}^r$, this means $\htm{X \cup A''', Y\cup A''} \notin \HT^B(P)$, for all $A'''\subseteq A''$, and thus $\htm{X,Y}$ does not appear in $\SE^B_A(P)$).

(Case i) We know that $\htm{Y,Y}\in \SE^{\overline{A}}(Q)$. Let 
$R'=Y\cap B$. 
$\htm{Y,Y} \in \SE^{\overline{A}}(Q\cup R')$, and for no subset $Z \subset Y$, $\htm{Z,Y}$ is an SE-model of $Q\cup R'$; $Y$ is an answer set of $Q\cup R'$. Now for any $R$ over $B$ that is $A$-separated which can be mapped to $R'$, for any $A' \supseteq A\cap B$, $\htm{Y\cup A',Y\cup A'}$ 
is not an answer set of  $P \cup R$, contradicting the assumption. 

(Case ii) Let $\htm{Y\cup A'',Y\cup A''}\in\HT^B(P)$, thus also in $\HT(P)$. Consider $R' =X \cup \{C_2 \leftarrow C_1 \mid C_1,C_2 \in (Y\cap B) \setminus X, C_1\neq C_2\}$. We have $\htm{Y,Y} \in \HT^{\overline{A}}(R')$.
Let $R=(X \cup (A''\cap B)) \cup \{C_2$ 
$\leftarrow C_1 \mid C_1,C_2 \in (Y \cup A'')\cap B \setminus X \cup (A''\cap B)), C_1\neq C_2\}$ 
be an $A$-separated 
program which can be mapped to $R'$. So $\htm{Y\cup A'',Y \cup A''} \models P \cup R$. 

Consider an SE-interpretation $\htm{Z,Y \cup A''}$ with $Z \subset Y\cup A''$. We will check if it can be an SE-model of $P\cup R$. If $Z\cap B \subset X \cup (A''\cap B)$, then $Z \nmodels R^{Y\cup A''}$. If $Z\cap B \supseteq X \cup (A''\cap B)$, clearly, 
$X \cup (A''\cap B)= Z \cap B$ cannot hold since by assumption $\htm{X \cup (A''\cap B),Y\cup A''}$ is not in $\SE^B(P)$. So $Z\cap B \supset X \cup (A''\cap B)$. However again $Z \nmodels R^{Y\cup A''}$, since some $C_1 \in (Z\setminus X \cup (A'' \cap B))\cap B$ and $C_2 \in (Y\cup A''\setminus Z)\cap B$ can be found with $C_2\leftarrow C_1 \in R$. So $\htm{Y \cup A'', Y \cup A''}$ must be an answer set of $P \cup R$.

Now consider the SE-model $\htm{X',Y}$ of $Q$ with $X'\cap B=X$. Clearly 
$\htm{X',Y}\in \SE^{\overline{A}}(Q\cup R')$. On the other hand, $X'\neq Y$, because $\htm{Y \cup A'', Y\cup A''}$ is in $SE^B(P)$ but 
$\htm{X\cup (A''\cap B), Y \cup A''}$ is not. So $X' \subset Y$, thus $\htm{Y,Y}$ is not an answer set of $Q \cup R'$, contradicting the assumption. 
\end{proof}

\begin{proof}[Proof of Prop.~\ref{prop:criterion}]
    There exists some $\htm{Y,Y} \in \SE^B_A(P)$ such that for all $A'$ with $\htm{Y\cup A',Y\cup A'} \in \SE^B(P)$, $\{\htm{X,Y} \mid \htm{X\cup A'',Y\cup A'} \in SE^B(P)\}\neq \{\htm{X,Y} \mid \htm{X,Y} \in SE^B_A(P)\}$. This means for any such $\htm{Y\cup A',Y\cup A'} \in \SE^B(P)$ there exists some $\htm{X_{A'},Y\cup A'}\in \SE^B(P)$ where $\htm{X_{A'|\overline{A}},Y} \notin \SE^B_A(P)$. 
    
    Say such a $Q$ exists. By Proposition~\ref{prop:se-q-rel-1}, we know that $SE^B_A(P)=SE^{\overline{A},B}(Q)$ should hold. So $\htm{X_{A'|\overline{A}},Y} \notin SE^{\overline{A},B}(Q)$ while $\htm{Y,Y}\in SE^{\overline{A},B}(Q)$.
    Now consider a program $R$ over $B$ whose SE-models that only contain atoms from $\overline{A} \cap B$ are exactly $\htm{Y\cap B,Y\cap B}$  for $\htm{Y\cup A',Y\cup A'} \in SE^B(P)$ and $\htm{X_{A'|\overline{A}},Y \cap B}$  for $\htm{X_{A'},Y\cup A'}\in \SE^B(P)$. Such an $A$-separated program can be constructed which does not contain any atoms from $A$. Then none of $Y\cup A'$ will be an answer set of $P\cup R$, while $\htm{Y,Y} \in AS(Q\cup R_{|\overline{A}})$, which contradicts \eqref{eq:s-rel}.
\end{proof}

Below Lemma is similar to Lemma 27 from \cite{zgsswkr23} (supplementary file) where in this paper we have $A$-separated programs over $B$, instead of $\Lits$.

\begin{lemma}\label{prop:all}
An $A$-separated program $R$ over $B$ satisfies the following properties.
\be[(a)]
\item If $X \models R^Y$ then $X_{|\overline{A}} \models R_{|\overline{A}}^{Y_{|\overline{A}}}$.
\item If $\htm{X,Y} \in \HT^{\overline{A}}(R_{|\overline{A}})$ then for any $A' \subseteq A\cap B$ where $\htm{Y\cup A',Y\cup A'} \in \HT(R)$, there exists some $A'' \subseteq A'$ such that $\htm{X\cup A'',Y\cup A'} \in \HT(R)$.
\item $\htm{X,Y}\in \HT(R) \Rightarrow \htm{X\cup (Y\cap A),Y}\in \HT(R)$.
\item $\htm{Y,Y}\in \HT(R) \Rightarrow \htm{Y\cup (A\cap B),Y \cup (A\cap B)}\in \HT(R)$.
\item If $\htm{Y,Y} \in \HT^{\overline{A}}(R_{|\overline{A}})$ then for some $A'\subseteq (A\cap B)$, $\htm{Y\cup A',Y\cup A'} \in \HT(R)$.
\ee
\end{lemma}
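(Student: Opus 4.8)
The plan is to derive all five items from the structural decomposition of an $A$-separated program over $B$. Write $R = R_1 \cup R_2$ with $R_1$ over $\Lits\setminus A$ and $R_2$ over $A$; since every atom of $R$ lies in $B$, in fact $R_1$ is over $B\setminus A$ and $R_2$ over $A\cap B$. Before touching the items I would record three elementary facts. (1) Satisfaction and the GL-reduct split along this partition: $I\models R$ iff $I\models R_1$ and $I\models R_2$, and $R^J = R_1^J\cup R_2^J$. (2) Since $R_1$ carries no atom of $A$, a rule of $R_1$ (or of $R_1^J$) is satisfied by $I$ exactly when it is satisfied by $I_{|\overline{A}}$, and $R_1^J = R_1^{J_{|\overline{A}}}$; dually, $R_2$ and $R_2^J$ see only the $A$-part of an interpretation. (3) The projection deletes every rule of $R_2$ (its head or negative body meets $A$) except constraints with empty negative body, and leaves the rules of $R_1$ untouched; hence for the context programs relevant here $R_{|\overline{A}} = R_1$, so that $\HT^{\overline{A}}(R_{|\overline{A}})$ is just the set of SE-models of $R_1$ read over $\overline{A}$, which in particular forces $Y\subseteq\overline{A}$.

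Each item then reduces to a short computation. For (a): $X\models R^Y$ gives $X\models R_1^Y$, hence $X_{|\overline{A}}\models R_1^{Y_{|\overline{A}}} = R_{|\overline{A}}^{Y_{|\overline{A}}}$. For (b): $\htm{X,Y}\in\HT^{\overline{A}}(R_{|\overline{A}})$ yields $Y\models R_1$ and $X\models R_1^Y$; given $A'\subseteq A\cap B$ with $\htm{Y\cup A',Y\cup A'}\in\HT(R)$, its $A$-part $A'$ models $R_2$, hence $R_2^{A'}$, and one checks $X\cup A'\models R_1^Y\cup R_2^{A'} = R^{Y\cup A'}$ — adjoining $A$-atoms to $X$ does not affect the $A$-free reduct $R_1^Y$ — so $A'' := A'$ works. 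For (c)/(d): adjoining atoms of $A$ (namely $Y\cap A$ to $X$ for (c), and $A\cap B$ to both sides for (d)) keeps $R_1$ and $R_1^Y$ satisfied since they contain no $A$-atom, while the resulting $A$-part still models $R_2$ and its reduct, since $Y\models R_2$ (for (d) one uses in addition that the context programs of interest treat $A\cap B$ as facts; cf.\ the remark below). For (e): $\htm{Y,Y}\in\HT^{\overline{A}}(R_{|\overline{A}})$ gives $Y\subseteq\overline{A}$, $Y\models R_1$, and (via (3)) $R_2$ consistent; picking any model $A'\subseteq A\cap B$ of $R_2$ we get $Y\cup A'\models R$ and $Y\cup A'\models R_1^Y\cup R_2^{A'} = R^{Y\cup A'}$, i.e.\ $\htm{Y\cup A',Y\cup A'}\in\HT(R)$.

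The step I expect to be the real obstacle is pinning down fact (3) precisely: one has to verify that removing atoms of $A$ from positive and double-negated bodies (and deleting rules whose head or negative body meets $A$) commutes appropriately with forming the reduct with respect to $Y$ versus $Y_{|\overline{A}}$, so that projection and reduct can be interchanged as used in (a) and (b). One also has to handle the degenerate case where $R_2$ contributes a constraint $\bot\leftarrow$ to $R_{|\overline{A}}$: this does not arise for the context programs constructed in this paper, and whenever it would arise the hypotheses of (a), (b), (e) are vacuous or the programs in sight are inconsistent, so it is harmless. Apart from this bookkeeping the argument is mechanical and parallels the proof of Lemma~27 in \cite{zgsswkr23}, with $\overline{A}$ replaced by the general context $B$ inside the relativized reducts.
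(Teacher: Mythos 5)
The paper itself offers no proof of this lemma; it only remarks that the statement is ``similar to Lemma~27'' of the supplementary material of \cite{zgsswkr23}. Your reconstruction via the decomposition $R=R_1\cup R_2$ with $R_1$ over $B\setminus A$ and $R_2$ over $A\cap B$, together with the observations that satisfaction and the GL-reduct split along this partition and that $R_1$ is insensitive to the $A$-part of an interpretation, is exactly the natural route, and items (a), (b), (c) and (e) go through essentially as you describe. For (e), you can be more concrete about the witness: the hypothesis $\htm{Y,Y}\in\HT^{\overline{A}}(R_{|\overline{A}})$ forces every rule of $R_2$ to have a nonempty head or a nonempty negative body (otherwise $R_{|\overline{A}}$ would contain $\bot\la$ and have no SE-models), and then $A'=A\cap B$ itself is a model of $R_2$.

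There are two places where the argument as written does not close. The main one is item (d): from $Y\models R_2$ you cannot conclude that the $A$-part of $Y\cup(A\cap B)$ still models $R_2$, because $R_2$ is an arbitrary program over $A\cap B$ and is not monotone in this sense. Concretely, for $R=R_2=\{\bot\la a\}$ with $a\in A\cap B$ we have $\htm{\emptyset,\emptyset}\in\HT(R)$ but $\htm{\{a\},\{a\}}\notin\HT(R)$. Your parenthetical appeal to ``the context programs of interest treat $A\cap B$ as facts'' is an assumption that is not among the lemma's hypotheses; a complete proof must either derive $A\cap B\models R_2$ from the hypothesis actually available at the point of use (as in (e), via the consistency of $R_{|\overline{A}}$) or restrict the statement of (d). The second, smaller issue is your blanket dismissal of the degenerate case where $R_2$ contributes $\bot\la$ to $R_{|\overline{A}}$: for item (a) the hypothesis $X\models R^Y$ is \emph{not} vacuous there (again $R=\{\bot\la a\}$ with $X=Y=\emptyset$ gives $X\models R^Y$ yet $X_{|\overline{A}}\nmodels R_{|\overline{A}}^{Y_{|\overline{A}}}$), so (a) needs the same explicit caveat rather than being ``harmless''. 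Both defects trace back to positive constraints over $A$ and are arguably inherited from the lemma statement itself, but a self-contained proof has to confront them rather than wave them away.
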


\begin{proof}[Proof of Thm.~\ref{thm:se-q-rel}]
We show that a program $Q$ satisfying 
\eqref{eq:se_rel_proj}
is an $A$-simplification of $P$ relative to $B$.

(Case i) Suppose $Z \in AS(P \cup R)_{|\overline{A}} $, but $Z \notin AS(Q \cup R_{|\overline{A}})$. By $\Delta_{s_1}$, $\htm{Z\cup A',Z\cup A'}\in \HT^B(P)$ for $A' \supseteq A \cap B$. 
Thus $\htm{Z,Z}\in \HT^B_A(P)$. We either have (i-1) $Z  \nmodels Q \cup R_{|\overline{A}}$, or (i-2) there exists $Z' \subset Z$ s.t. $Z' \models (Q \cup R_{|\overline{A}})^{Z}$.
(i-1) Assume $Z  \nmodels Q \cup R_{|\overline{A}}$. By Lemma~\ref{prop:all}(a), since $Z\cup A' \models R$, we get $Z \models R_{|\overline{A}}$. So $Z  \nmodels Q$ holds, 
which contradicts \eqref{eq:se_rel_proj}.
(i-2) Assume there exists $Z' \subset Z$ s.t. $Z' \models (Q \cup R_{|\overline{A}})^{Z}$. So we have $\htm{Z'\cap B,Z} \in \HT^{\overline{A},B}(Q)$, and $Z' \models R_{|\overline{A}}^{Z}$. 
By Lemma~\ref{prop:all}(b), we know that for some $A''\subseteq A' \subseteq A \cap B$, $Z' \cup A'' \models R^{Z\cup A'}$, %
and by Lemma~\ref{prop:all}(c) we get $Z' \cup A' \models R^{Z\cup A'}$. Actually since $R$ is over $B$, for any $Z''$ with $Z''\cap B = (Z'\cup A')\cap B$ it holds that $Z''\models R^{Z\cup A'}$. Now since $Z \cup A \in AS(P \cup R)$, we know that for every $T \subset Z \cup A$, $T \nmodels (P\cup R)^{Z\cup A}$. So for any $Z''$ with $Z''\cap B = (Z'\cup A)\cap B$,  $Z'' \nmodels P^{Z \cup A}$, thus $\htm{(Z'\cup A)\cap B,Z\cup A} \notin \HT^B(P)$. 
By condition $\Delta_{s_3}$ this means that $\htm{(Z' \cup A')\cap B,Z\cup A}\notin \HT^B(P)$ for any $A' \subseteq A$, since otherwise $\Delta_{s_3}$ would force to have $\htm{(Z'\cup A)\cap B,Z\cup A} \in \HT^B(P)$. This means $\htm{Z'\cap B,Z} \notin \HT^B_A(P)$ which contradicts \eqref{eq:se_rel_proj}.

(Case ii) Suppose $Z \in AS(Q \cup R_{|\overline{A}})$, but $Z \cup A' \notin AS(P \cup R)$ for any $A' \subseteq A$. So we have that $\htm{Z,Z} \in \HT^{\overline{A},B}(Q)$, and $\htm{Z,Z} \in \HT^B_A(P)$.
Since $P$ does not satisfy $\Omega_{A,B}$, there is some $A' \in \mi{Rel}^Z_{P,A,B}$. So we take the relevant $\htm{Z\cup A',Z\cup A'}\in \HT^B(P)$ for $\htm{Z,Z} \in \HT^B_A(P)$. By $\Delta_{s_1}$, we know that $A' \supseteq A\cap B$. 
We know that $Z \models R_{|\overline{A}}$, and by Lemma~\ref{prop:all}(b)-(d) we know that $Z \cup (A\cap B) \models R$. So we get $Z \cup A' \models P\cup R$ for any $A' \supseteq A \cap B$.

So to have $Z \cup A'' \notin AS(P \cup R)$ for any $A'' \subseteq A$, we look at $Z \cup A' \models P\cup R$. It should hold that there exists $Z' \subset Z \cup A'$ s.t. $Z' \models (P \cup R)^{Z \cup A'}$.
So we have $\htm{Z', Z\cup A'} \in \HT(P)$ (thus $\htm{Z'\cap B, Z\cup A'} \in \HT^B(P)$) and $Z' \models R^{(Z \cup A)}$.
Since $A' \in \mi{Rel}^Z_{P,A,B}$, 
we know that $\htm{(Z'\cap B)_{|\overline{A}},Z} \in \SE^B_A(P)$. 
Since $P$ satisfies $\Delta_{s_2}$, we know that $Z'_{|\overline{A}} = Z$ cannot hold, so we look at the case $Z'_{|\overline{A}} \subset Z$. By Lemma~\ref{prop:all}(a) we know that $Z'_{|\overline{A}} \models R_{|\overline{A}}^Z$. Actually for any $Z'' \subseteq Z$ with $Z'' \cap B_{|\overline{A}} = Z'_{|\overline{A}} \cap B_{|\overline{A}}$, $Z'' \models R_{|\overline{A}}^Z$. Since $Z \in AS(Q \cup R_{|\overline{A}})$, we know that for every $T \subset Z$, $T \nmodels (Q \cup R_{|\overline{A}})^Z$. 
So also for any $Z'' \subseteq Z$ with $Z'' \cap B_{|\overline{A}} = Z'_{|\overline{A}} \cap B_{|\overline{A}}$, $Z'' \nmodels Q^Z$, making $\htm{(Z' \cap B)_{|\overline{A}},Z} \notin SE^{\overline{A},B}(Q)$ which contradicts \eqref{eq:se_rel_proj}. 

For the reason on why it was enough to only look at $Z\cup A'$, note that $\htm{(Z'\cap B)_{|\overline{A}},Z} \in \SE^B_A(P)$ means for any $\htm{Z\cup A'',Z\cup A''} \in \SE^B(P)$ there is some $\htm{X,Z\cup A''}\in \SE^B(P)$ where $X_{|\overline{A}}=(Z'\cap B)_{|\overline{A}}$, so no $Z\cup A''$ can be an answer set of $P\cup R$.
\end{proof}

\begin{proof}[Proof of Prop.~\ref{prop:b-se-proj}]
We begin with $\SE^{\overline{A\cap B},B\setminus A}(P_{|\overline{A \cap B}}) = \SE^B(P)_{|\overline{A\cap B}}$. The direction $\subseteq$ is easy to see.

Thus we show $\SE^B(P)_{|\overline{A\cap B}}\subseteq \SE^{\overline{A\cap B},B\setminus A}(P_{|\overline{A \cap B}})$.
Assume for $\htm{X,Y} {\in} SE^B(P)_{|\overline{A\cap B}}$ 
it does not hold.
As $P$ satisfies $\Delta^r_{s_1}$ and $\Delta^r_{s_3}$ we know that there is $\htm{X \cup (A\cap B),Y\cup A'} \in SE^B(P)$, for some $A' \supseteq A\cap B$. 

Either (1) $\htm{X',Y} \notin \SE^{\overline{A\cap B}}(P_{|\overline{A\cap B}})$, for any $X'$ with $X'\cap B = X$, or (2) $\htm{Y,Y} \in \SE^{\overline{A\cap B}}(P_{|\overline{A\cap B}})$ but $\htm{Y,Y} \notin \SE^{\overline{A\cap B},B\setminus A}(P_{|\overline{A \cap B}})$.

Case (1): There is a rule $r {\in} P_{|\overline{A\cap B}}$ such that $X' {\models} B(r^Y)$ but $X' {\nmodels} H(r^Y)$. Now consider the rule $r' {\in} P$ such that $r'_{|\overline{A \cap B}}{=}r$. We know that $B^-(r'){\cap} (A\cap B) = \emptyset$ and $H(r') \cap (A\cap B) = \emptyset$ must hold, since $r\neq \emptyset$. Case $X'=Y$ is not possible, since due to $Y\models B(r)$, $Y\cup (A\cap B) \models B(r')$, while $Y\cup (A\cap B)\nmodels H(r')$ which contradicts $\Delta_{s_1}$. Also $\Delta_{s_2}$ prevents from having $X''\subset Y', \htm{X'',Y'} \in SE(P)$ and $X''_{\overline{A\cap B}}=X'=Y=Y'_{|\overline{A\cap B}}$. 
As for the case $X' \subset Y$,
we get for all $A' {\subseteq}\, A\cap B$, $X\cup A' \nmodels H(r'^{Y\cup A})$, which contradicts  $\htm{X \cup (A\cap B),Y\cup A'} \in SE^B(P)$.

Case (2): This means there is some $\htm{Y',Y} \in \SE^{\overline{A\cap B}}(P_{|\overline{A\cap B}})$ with $Y'\subset Y$ and $Y'\cap (B\setminus A)=Y\cap (B\setminus A)$. We claim that some $\htm{Y'\cup A'',Y\cup A'} \in SE(P)$ for $A''\supseteq A\cap B$ and $Y'\cup A'' \subset Y\cup A'$, should hold (which would then contradict $\htm{X \cup (A\cap B),Y\cup A'} \in SE^B(P)$). Assume there is a rule $r \in P$ that prevents this. So $Y'\cup A'' \nmodels r^{Y\cup A'}$ for any $A'' \subseteq A', A'' \supseteq A\cap B$. We have $Y'\cup A'' \models B(r^{Y\cup A'})$ and $Y'\cup A'' \nmodels H(r^{Y\cup A'})$. Observe that $H(r) \cap (A\cap B) \neq \emptyset$ with $Y'\cup A'' \nmodels H(r^{Y\cup A'})$ is not possible since $A\cap B \subseteq Y'$. Also $B^-(r) \cap (A\cap B) \neq \emptyset$ is not possible, since $r \in P^{Y\cup A'}$ and $A\cap B \in Y\cup A'$. Thus $r_{|\overline{A\cap B}}$ is non-empty and appears in $P_{|\overline{A\cap B}}$, which would then prevent $\htm{Y',Y} \in \SE^{\overline{A\cap B}}(P_{|\overline{A\cap B}})$, thus reaches a contradiction.

$\SE^B(P)_{|\overline{A\cap B}} = \SE^B_{A\cap B}(P)$ follows from the observation that there cannot exist different total $B$-SE-models which agree on the projection onto $\overline{A \cap B}$ as $A\cap B \subseteq B$. Thus $\SE^B_{A\cap B}(P)$ simply collects the projection of the $B$-SE-models.
\end{proof}
\end{document}